\documentclass[11pt]{article}
\pdfoutput=1
\usepackage{booktabs}
\usepackage{pgfplots}
\usepackage{graphicx}
\usepackage{tikz}
\usepackage{amsmath, amsthm}
\usepackage{amssymb}
\usepackage{graphicx}
\usepackage{setspace}
\usepackage{xfrac}
\usepackage{hyperref}
\usepackage{xstring}
\usepackage{xspace}
\usepackage{bm}
\usepackage{microtype}
\usepackage{graphicx}

\usepackage{booktabs} 

\usepackage{comment}
\pgfplotsset{compat=1.14}

\usepackage{gensymb}
\usepackage{hyperref}       
\usepackage{amsfonts}       
\usepackage{nicefrac}       
\usepackage{microtype}      
\usepackage{amsmath}
\usepackage{amsfonts}
\usepackage{amssymb}
\usepackage{comment}
\usepackage[ruled,vlined]{algorithm2e}

\usepackage{subcaption}
\usepackage{hyperref}

\usepackage{amsthm}
\usepackage{amscd}
\usepackage{t1enc}
\usepackage{enumerate}
\usepackage{enumitem}
\usepackage[mathscr]{eucal}
\usepackage{indentfirst}
\usepackage{listings}
\usepackage{graphicx}
\usepackage{graphics}
\usepackage{pict2e}
\usepackage{epic}
\usepackage{epstopdf} 
\usepackage{wrapfig}

\newenvironment{sproof}{%
  \proof}{\endproof}

\newcommand{\cH}{\mathcal{H}}

\newcommand{\cR}{\mathcal{R}}

\newcommand{\bE}{\mathbb{E}}
\newcommand{\bP}{\mathbb{P}}
\newcommand{\bI}{\mathbb{I}}

\renewcommand{\tilde}{\widetilde}
\renewcommand{\nu}{\vartheta}

\usepackage{float}

\singlespacing
\allowdisplaybreaks
\usepackage{fullpage}
\newtheorem{theorem}{Theorem}
\newtheorem*{theorem*}{Theorem}

\newtheorem{proposition}{Proposition}
\newtheorem*{proposition*}{Proposition}
\theoremstyle{definition}

\newtheorem{definition}{Definition}

\newtheorem*{remark*}{Remark}
\date{}
\title{\textbf{Consistent Estimators for Learning to Defer to an Expert}}
\author{Hussein Mozannar \thanks{Massachusetts Institute of Technology. Email: \texttt{mozannar@mit.edu}} \and David Sontag \thanks{Massachusetts Institute of Technology. Email: \texttt{dsontag@csail.mit.edu}}}
\begin{document}
\maketitle
\begin{abstract}
Learning algorithms are often used in conjunction with expert decision makers in practical scenarios, however this fact is largely ignored when designing these algorithms. In this paper we explore how to learn predictors that can either predict or choose to defer the decision to a downstream expert. Given only samples of the expert's decisions, we give a procedure based on learning a classifier and a rejector and analyze it theoretically. Our approach is based on a novel reduction to cost sensitive learning where we give a  consistent surrogate loss for cost sensitive learning that generalizes the cross entropy loss. We show the effectiveness of our approach on a variety of experimental tasks. 
\end{abstract}

\section{Introduction}
Machine learning systems are now being deployed in  settings to complement human decision makers such as in healthcare \cite{hamid2017machine,raghu2019algorithmic}, risk assessment \cite{green2019disparate} and content moderation \cite{link2016human}. These models are either used as a tool to help the downstream human decision maker: judges relying on algorithmic risk assessment tools  \cite{green2019principles} and risk scores being used in the ICU \cite{sepsistrial}, or instead these learning models are solely used to make the final prediction on a selected subset of examples \cite{madras2018predict,raghu2019algorithmic}. A current application of the latter setting is Facebook's and  other online platforms content moderation approach \cite{fbmoderate,jhaver2019human}: an algorithm is used to filter easily detectible inappropriate content and the rest of the examples are screened by a team of human moderators.
Another motivating application arises in health care settings, for example deep neural networks can outperform radiologists in detecting pneumonia from chest X-rays \cite{irvin2019chexpert}, however, many obstacles are limiting complete automation, an intermediate step to automating this task will be the use of  models as triage tools to complement radiologist expertise. 
Our focus in this work is to give theoretically sound approaches for machine learning models that can either predict or defer the decision to a downstream expert to complement and augment their capabilities. 

The learned model should adapt to the underlying human expert in order to achieve better performance than deploying the model or expert individually. In situations where we have limited data or model capacity, the gains from allowing the model to focus on regions where the expert is less accurate are expected to be more significant. However, even when data or model capacity are not concerns, the expert may have access to side-information unavailable to the learner due to  privacy concerns for example, the hard task is then to identify when we should defer without having access to this side-information. We will only assume in this work that we are allowed access to samples of the experts decisions or to costs of deferring, we believe that this is a reasonable assumption that can be achieved in practical settings.
Inspired by the literature on rejection learning \cite{cortes2016learning}, our approach will be to learn two functions: a classifier that can predict the target and a rejector which decides whether the classifier or the expert should predict.

We  start by formulating a natural loss function for the combined machine-expert system in section \ref{sec:problem}
and show a reduction from the expert deferral setting to cost sensitive learning. With this reduction in hand, we are able to give a novel convex surrogate loss that upper bounds our system loss and that is furthermore consistent in section \ref{sec:surrog}. This surrogate loss settles the open problem posed by \cite{ni2019possibility} for finding a consistent loss for multiclass rejection learning. Our proposed surrogate loss and approach requires only adding an additional output layer to existing model architectures and changing the loss function, hence it necessitates minimal to no added computational costs.  
In section \ref{sec:theory}, we show the limitations of approaches in the literature from a consistency point-of-view and then provide generalization bounds for minimizing the empirical loss.
To show the efficacy of our approach, we give experimental evidence on image classification datasets CIFAR-10 and CIFAR-100 using synthetic  and human experts based on \texttt{CIFAR10H} \cite{peterson2019human}, on a hate speech and offensive language detection task \cite{davidson2017automated}, and on classification of chest X-rays with synthetic experts in section \ref{sec:experiments}. To summarize, the contributions of this paper are the following:
\begin{itemize}
   \item We formalize the expert deferral setup and analyze it theoretically giving a generalization bound for solving the empirical problem.
   \item We propose a novel convex consistent surrogate loss $L_{CE}$ \eqref{eq:proposed_CE_loss} for expert deferral easily integrated into current learning pipelines.
   \item We provide a detailed experimental evaluation of our method and baselines from the literature on image and text classification tasks.
\end{itemize}

\section{Related Work}
Learning with a reject option, \emph{rejection learning}, has long been studied starting with \cite{chow1970optimum} who investigated the trade-off between accuracy and the rejection rate. The framework of rejection learning assumes a constant cost $c$ of deferring and hence the problem becomes to predict only if one is $1-c$ confident. Numerous works have proposed surrogate losses and uncertainty estimation methods to solve the problem \cite{bartlett2008classification, ramaswamy2018consistent,ni2019possibility, jiang2018trust}. \cite{cortes2016learning,cortes2016boosting} proposed a different approach by learning two functions: a classifier and a rejection function and analyzed the approach giving a kernel based algorithm in the binary setting. \cite{ni2019possibility} tried to extend their approach to the multiclass setting but failed to give a consistent surrogate loss and hence resorted to confidence based methods.

Recent work has started to explore models that defer to downstream experts, \cite{madras2018predict} considers an identical framework to the one considered here however their approach does not allow the model to adapt to the underlying expert and
the loss used is not consistent and requires an uncertainty estimate of the expert decisions. On the other hand, \cite{de2019regression} gives an approximate procedure to learn a linear model that picks a subset of the training data on which to defer and uses a nearest neighbor algorithm to defer on new examples, the approach used is only feasible for small dataset sizes and does not generalize beyond ridge regression. \cite{raghu2019algorithmic} considers binary classification with expert deferral, their approach is to learn a classifier ignoring the expert and obtain uncertainty estimates for both the expert and classifier and then defer based on which is higher, we detail the limitations of this approach in section \ref{sec:theory}. Concurrent work \cite{wilder2020learning} learns a model with the mixtures of expert loss first introduced in \cite{madras2018predict} and defers based on estimated model and expert confidence as in \cite{raghu2019algorithmic}.
Work on AI-assisted decision making has  focused on the reverse setting considered here: the expert chooses to accept or reject the decision of the classifier instead of  a learned rejector \cite{bansal2019updates,bansal2020optimizing}.
Additionally, the fairness in machine learning community has started to consider the fairness impact of having downstream decision makers \cite{madras2018predict,canetti2019soft,green2019disparate,dwork2018fairness} but in slightly different frameworks than the ones considered here and work has started to consider deferring in reinforcement learning \cite{meresht2020learning}.

A related framework to our setting is selective classification  \cite{el2010foundations} where instead of setting a cost for rejecting to predict one sets a constraint on the probability of rejection; here is no assumed downstream expert. Approaches range from deferring based on confidence scores \cite{geifman2017selective}, learning a deep network with two heads, one for predicting and the other for deferring \cite{geifman2019selectivenet} and learning with portfolio theory inspired loss functions \cite{ziyin2019deep}. Finally, our work bears resemblance to active learning with weak (the expert) and strong labelers (the ground truth) \cite{zhang2015active}.
\section{Problem Formulation}\label{sec:problem}

We are interested in predicting a target $Y \in \mathcal{Y}=\{1,\cdots,K\}$ based on covariates $X \in \mathcal{X}$ where $X,Y \sim \mathbf{P}$. We assume that we have query access to an expert $M$ that has access to a domain $\mathcal{Z}$  that may contain additional information than $\mathcal{X}$ to classify instances according to the target $\mathcal{Y}$. Querying the expert implies deferring the decision which incurs a cost $l_{exp}(x,y,m)$ that depends on the target $y$, covariate $x$ and the expert's prediction $m$. On the other hand, predicting without querying the expert implies that a classifier makes the final decision and incurs a cost $l(x,y,\hat{y})$ where $\hat{y}$ is the prediction of the classifier. 
 Our goal is to build a predictor $\hat{Y}: \mathcal{X} \to \mathcal{Y} \cup \{ \bot\}$ that can either predict or defer the decision to the expert denoted by $\bot$.  
 We can now formulate a natural system loss function $L$ for the system consisting of the classifier in conjunction with the expert:
\begin{align}\label{eq:system_loss_general}
 & L(\hat{Y}) =  \bE_{(x,y)\sim \mathbf{P},m \sim M|(x,y)} \ [ \ \underbrace{l(x,y,\hat{Y}(x))}_{\text{classifier cost}} \overbrace{\bI_{\hat{Y}(x)\neq \bot}}^{\text{predict}} +  \underbrace{l_{\textrm{exp}}(x,y,m)}_{\text{expert cost}} \overbrace{\bI_{\hat{Y}(x)= \bot}}^{\text{defer}} \  ]   
\end{align}

 Our strategy for learning the predictor $\hat{Y}$ will be to learn two separate functions $h: \mathcal{X} \to \mathcal{Y}$ (classifier)  and $r: \mathcal{X} \to \{0, 1 \}$ (rejector) and hence we write our loss as:
\begin{align}
    &L(h,r)= \label{eq:original_reject_loss}  \bE_{(x,y)\sim \mathbf{P},m \sim M|(x,y)} \ [ \ l(x,y,h(x)) \bI_{r(X) = 0} +  l_{\textrm{exp}}(x,y,m) \bI_{r(x)=1} \  ]  
\end{align}
\begin{figure}[h]  
\centering
  \includegraphics[clip,scale=0.6,trim={2cm 6.5cm 2cm 6cm}]{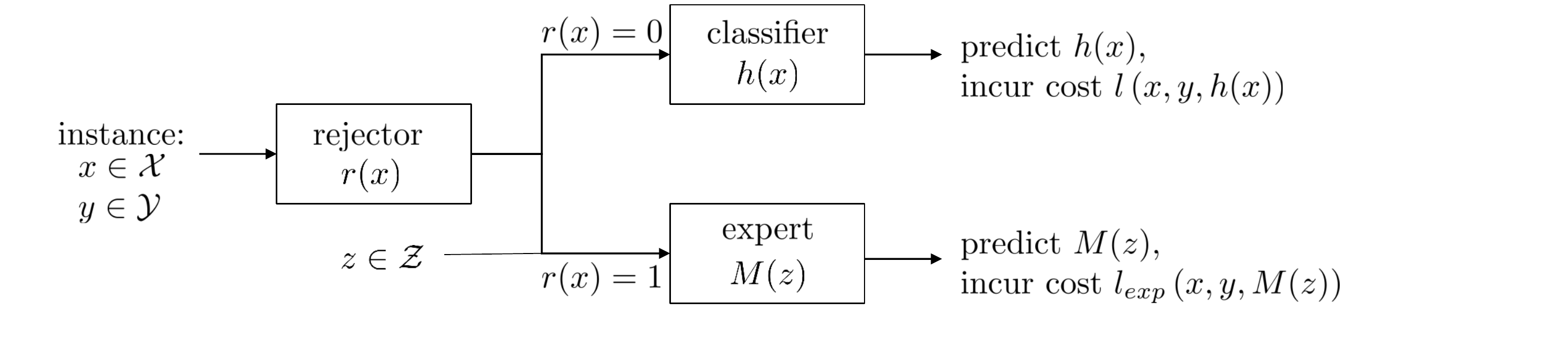}
    \caption{The expert deferral pipeline, the rejector first $r(x)$ decides who between the classifier $h(x)$ and expert $M(z)$  should predict and then whoever makes the final prediction incurs a specific cost.}
    \label{fig:setup}
\end{figure}
Figure \ref{fig:setup} illustrates our expert deferral setting with it's different components.
The above formulation is a generalization of the learning with rejection framework studied by \cite{cortes2016learning} as by setting $l_{\textrm{exp}}(x,y,m)=c$ for a constant $c>0$ the two objectives coincide. In \cite{madras2018predict}, the loss proposed assumes that the classifier and expert costs are the logistic loss between the target and their predictions in the binary target setting.

While our treatment extends to general forms of expert and classifier costs, we will pay particular attention in our theoretical analysis when the costs are the misclassification error with the target. Formally, we define a $0{-}1$ loss version of our system loss:
\begin{align}
    &L_{0{-}1}(h,r)= \label{eq:01_reject_loss}  \bE_{(x,y)\sim \mathbf{P},m \sim M|(x,y)} \ [ \ \bI_{h(x) \neq y} \bI_{r(x) = 0} +  \bI_{m \neq y}\bI_{r(x)=1} \  ]
\end{align}
One may also assume a constant additive cost function $c(x)$ for querying the expert depending on the instance $x$ making $l_{\textrm{exp}}(x,y,m)= \bI_{m \neq y} + c(x)$; such additive costs can be easily integrated into our analysis.

Our approach will be to cast this problem as a \emph{cost sensitive learning} problem over an augmented label space that includes the action of deferral. Let the random costs $\mathbf{c} \in \mathbb{R}_+^{K+1}$ where for $i \in [K]$,  $c(i)$ is the $i'$th component of $\mathbf{c}$ represents the cost of predicting $i \in \mathcal{Y}$ while $c[K+1]$ represents the cost of deferring to the expert. The goal of this setup is to learn a predictor $h: \mathcal{X} \to [K+1]$ minimizing the cost sensitive loss $\tilde{L}(h) :=\bE[c(h(x))]$. For example, giving an instance $(x,y)$, our loss \eqref{eq:original_reject_loss} is obtained by setting $c(i)=l(x,y,i)$ for $i \in [K]$ and $c(K+1)=l_{\textrm{exp}}(x,y,m)$.

For the majority of this paper we assume access to samples $S=\{(x_i,y_i,m_i)\}_{i=1}^n$ where $\{(x_i,y_i)\}_{i=1}^n$ are drawn i.i.d. from the unknown distribution $\mathbf{P}$ and $m_i$ is drawn from the distribution of the random variable $M|(X=x_i,Y=y_i)$ and access to the realizations of $l_{exp}$ and $l$ when required .


\section{Proposed Surrogate Loss}\label{sec:surrog}

It is clear that the system loss function \eqref{eq:original_reject_loss} is not only non-convex but also computationally hard to optimize. The usual approach in machine learning is to formulate upper bounding convex surrogate loss functions and optimize them in hopes of approximating the minimizers of the original loss \cite{bartlett2006convexity}. Work from rejection learning \cite{cortes2016learning,ni2019possibility} suggested learning two separate functions $h$ and $r$ and provided consistent convex surrogate loss functions only for the binary setting. We extend their proposed surrogates for our expert deferral setting for binary labels with slight modifications in  appendix \ref{apx:proofs}. Consistency is used to prove that a proposed surrogate loss is a good candidate and is often treated as a necessary condition. The issue with the proposed surrogates in \cite{cortes2016learning} for rejection learning is that when extended to the multiclass setting, it is impossible for them to be consistent as was shown by \cite{ni2019possibility}. Aside the consistency issue, \cite{ni2019possibility} found that simple baselines can outperform the proposed losses in practice.


The construction of our proposed surrogate loss for the multiclass expert deferral setting will be motivated via two  ways, the first is through  a novel reduction to cost sensitive learning and the second is inspired by the Bayes minimizer for the $0{-}1$ system loss \eqref{eq:01_reject_loss}.
Let $g_i: \mathcal{X} \to \mathbb{R}$ for $i \in [K+1]$ and define $h(x) = \arg \max_{i \in [K+1]}g_i$,
motivated by the success of the cross entropy loss, our proposed surrogate for cost-sensitive learning $\tilde{L}_{CE}$ takes the following form:
\begin{align}
&\tilde{L}_{CE}(g_1,\cdots,g_{K+1},x,c(1),\cdots,c(K+1)) =  - \sum_{i=1}^{K+1} (\max_{j \in [K+1]} c(j) - c(i)) \log\left( \frac{\exp(g_i(x))}{\sum_k \exp(g_k(x))} \right)
\end{align}
The loss $\tilde{L}_{CE}$ is a novel surrogate loss for cost sensitive learning that generalizes the cross entropy loss when the costs correspond to multiclass misclassification. The following proposition shows that the loss is consistent, meaning it's minimizer over all measurable functions agrees with the Bayes solution.

\begin{proposition}
 $\tilde{L}_{CE}$  is  convex in $\mathbf{g}$ and is a consistent loss function for $\tilde{L}$:
 \begin{center}
      let $\bm{\tilde{g}}=\arg  \inf_{\mathbf{g}} \bE\left[ \tilde{L}_{CE}(\mathbf{g},\mathbf{c}) |X=x\right]$, then:  $\arg  \max_{i \in [K+1]}\bm{\tilde{g}}_i = \arg \min_{i \in [K+1]} \bE[c(i)|X=x]$
 \end{center}
\label{prop:lce_cost_sensitive}
\end{proposition}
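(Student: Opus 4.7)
The plan is to prove convexity and consistency separately. For convexity, I would decompose each summand as $-\log\bigl(\exp(g_i)/\sum_k \exp(g_k)\bigr) = -g_i + \log\sum_k\exp(g_k)$. The log-sum-exp function is a standard convex function of $\mathbf{g}$ and $-g_i$ is linear, so each summand is convex. Since the coefficients $\max_j c(j) - c(i) \geq 0$ are non-negative almost surely, $\tilde{L}_{CE}$ is a non-negative combination of convex functions, hence convex in $\mathbf{g}$.

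For consistency, my first step would be to push the conditional expectation inside the coefficients. Writing $p_i(\mathbf{g}) := \exp(g_i(x))/\sum_k \exp(g_k(x))$ and $w_i(x) := \mathbb{E}[\max_j c(j) - c(i) \mid X = x] \geq 0$, one obtains
\[
\mathbb{E}\bigl[\tilde{L}_{CE}(\mathbf{g}, \mathbf{c}) \mid X = x\bigr] = -\sum_{i=1}^{K+1} w_i(x)\log p_i(\mathbf{g}).
\]
Because softmax surjects onto the relative interior of the probability simplex, the infimum over $\mathbf{g}$ matches the infimum of the weighted cross-entropy $-\sum_i w_i \log p_i$ over the simplex (attained in its closure).

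Next I would minimize this objective by Lagrange multipliers: stationarity gives $w_i/p_i^{\star} = \lambda$ on the support, and normalization yields $p_i^{\star} = w_i(x)/\sum_j w_j(x)$. Letting $\alpha(x) := \mathbb{E}[\max_j c(j) \mid X = x]$, which is index-independent, we have $w_i(x) = \alpha(x) - \mathbb{E}[c(i)\mid X=x]$, so
\[
\arg\max_i p_i^{\star} \;=\; \arg\max_i w_i(x) \;=\; \arg\min_i \mathbb{E}[c(i)\mid X=x].
\]
Since softmax is strictly order-preserving, $\arg\max_i \tilde{g}_i = \arg\max_i p_i(\tilde{\mathbf{g}})$, delivering the stated consistency.

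The main obstacle I anticipate is the boundary case: if some $w_i(x) = 0$, then $p_i^{\star} = 0$, which forces $\tilde g_i \to -\infty$ and prevents the infimum from being attained at any finite $\tilde{\mathbf{g}}$. I would address this by arguing that along any minimizing sequence $\mathbf{g}^{(n)}$, the index achieving $\max_i g_i^{(n)}$ eventually lies in $\arg\max_i w_i(x)$, so the argmax characterization remains valid in the limit. A minor additional subtlety is ties in $\mathbb{E}[c(i)\mid X=x]$, which I would handle by interpreting both sides of the stated equality as sets.
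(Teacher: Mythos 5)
Your proof is correct and follows essentially the same route as the paper's: both reduce to the pointwise weighted cross-entropy $-\sum_i w_i(x)\log p_i$ with $w_i(x)=\bE[\max_j c(j)-c(i)\mid X=x]$ and show the optimal softmax probabilities are the normalized weights $w_i(x)/\sum_j w_j(x)$, so that $\arg\max_i \tilde{g}_i=\arg\max_i w_i(x)=\arg\min_i \bE[c(i)\mid X=x]$; the paper obtains this by setting partial derivatives in each $g_y$ to zero rather than by Lagrange multipliers on the simplex, which is a cosmetic difference. You are in fact more careful than the paper on two points it passes over silently: the explicit convexity argument and the non-attainment of the infimum at finite $\mathbf{g}$ when some $w_i(x)=0$, which you correctly resolve via minimizing sequences without affecting the argmax conclusion.
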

Proof of Proposition \ref{prop:lce_cost_sensitive} can be found in Appendix \ref{apx:proofs}; $\tilde{L}_{CE}$ is a simpler consistent alternative to the surrogates derived in \cite{chen2019surrogate} for cost sensitive learning.

Now we consider when the system loss function is $L_{0{-}1}$ \eqref{eq:01_reject_loss}, our approach is to treat deferral as a new class and construct a new label space $\mathcal{Y}^\bot=\mathcal{Y} \cup \bot$ and a corresponding distribution $\bP(Y^\bot|X=x)$ such that minimizing the misclassification loss on this new space will be equivalent to minimizing our  system loss $L_{0{-}1}$. The Bayes optimal classifier on  $\mathcal{Y}^\bot$ is clearly $h^\bot= \arg \max_{y^\bot\in\mathcal{Y}^\bot} \bP(\mathcal{Y}^\bot=y^\bot|X=x)$, and we need it to match the decision of the Bayes solution $h^B,r^B$ of $L_{0{-}1}$ \eqref{eq:01_reject_loss}:
\begin{equation}
    h^B, r^B = \arg\inf_{h,r} L_{0{-}1}(h,r)
\end{equation}
where the infimum is over all measurable functions.
Denote by $\eta_y(x)=\bP(Y=y|X=x)$, it is clear that for $x\in \mathcal{X}$ the best classifier is the same as the Bayes solution for standard classification since if we don't defer we have to do our best. Now we only reject the classifier if it's expected error is higher than the expected error of the expert which we formalize in the below proposition:
\begin{proposition}
The minimizers of the loss $L_{0{-}1}$ \eqref{eq:01_reject_loss} are defined point-wise for all $x\in \mathcal{X}$ as:
\begin{align}
    &h^B(x) = \arg \max_{y \in \mathcal{Y}}\eta_y(x)  \nonumber    \\  
    &r^B(x)= \bI_ {\max_{y \in \mathcal{Y}}\eta_y(x) \leq  \bP(Y = M|X=x) }  \label{propeq:bayes_01}
\end{align}
\end{proposition}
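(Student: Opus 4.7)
The plan is to minimize the loss $L_{0{-}1}(h,r)$ pointwise in $x$, since $h$ and $r$ range over arbitrary measurable functions and the expectation factorizes through the tower property. First I would write
\begin{align*}
L_{0{-}1}(h,r) = \bE_X\bigl[ \, \ell(x, h(x), r(x)) \, \bigr],
\end{align*}
where $\ell(x,\hy,\rho) := \bE\bigl[\bI_{h(x)\neq Y}\bI_{r(x)=0} + \bI_{M\neq Y}\bI_{r(x)=1} \mid X=x\bigr]$ evaluated at $h(x)=\hy$, $r(x)=\rho$. It suffices to show that for each fixed $x$, the pair $(h^B(x), r^B(x))$ given in \eqref{propeq:bayes_01} minimizes $\ell(x,\cdot,\cdot)$ over $\mathcal{Y}\times\{0,1\}$.

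The second step is to compute the conditional loss in closed form. Using the definition of $\eta_y(x)$ and conditioning on $Y$ inside the expert term,
\begin{align*}
\ell(x,\hy,0) &= \bP(Y \neq \hy \mid X=x) = 1 - \eta_{\hy}(x), \\
\ell(x,\hy,1) &= \bP(M \neq Y \mid X=x),
\end{align*}
and note that when $r(x)=1$ the value of $h(x)$ is irrelevant to the loss. Thus for any $r(x)$ value, the optimal $h(x)$ is chosen by minimizing $1-\eta_{\hy}(x)$, which yields $h^B(x) = \arg\max_{y\in\mathcal{Y}} \eta_y(x)$.

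Third, I would substitute $h^B(x)$ back and compare the two options for $r(x)$. Choosing $r(x)=0$ incurs cost $1-\max_{y\in\mathcal{Y}}\eta_y(x)$, while $r(x)=1$ incurs cost $\bP(M\neq Y \mid X=x) = 1 - \bP(Y=M \mid X=x)$. The minimizer is therefore $r^B(x)=1$ whenever $1-\bP(Y=M\mid X=x) \le 1-\max_{y\in\mathcal{Y}}\eta_y(x)$, which rearranges exactly to the indicator condition $\max_{y\in\mathcal{Y}}\eta_y(x) \le \bP(Y=M\mid X=x)$ stated in \eqref{propeq:bayes_01}. Ties are resolved in favor of deferral, matching the weak inequality used.

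I do not foresee any genuine obstacle; the argument is a direct pointwise optimization. The only mild subtlety is verifying that the expert term really collapses to $\bP(M\neq Y\mid X=x)$ independently of $h(x)$ when $r(x)=1$, which requires carefully reading the loss \eqref{eq:01_reject_loss} and using that $M$ is conditioned on $(X,Y)$ so the marginalization over $Y$ and $M$ is well defined. Once that is observed, decoupling the optimization over $h$ and $r$ is immediate.
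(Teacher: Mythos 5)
Your proposal is correct and follows essentially the same route as the paper's proof: optimize pointwise in $x$, note that $h(x)$ only matters when $r(x)=0$ so the optimal classifier is the standard Bayes classifier $\arg\max_{y}\eta_y(x)$, and then choose $r(x)$ by comparing the conditional cost $1-\max_{y}\eta_y(x)$ of predicting against $\bP(M\neq Y\mid X=x)$ of deferring. Your version spells out the conditional expansion slightly more explicitly than the paper does, but there is no substantive difference in the argument.
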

Proof of the above proposition can be found in Appendix \ref{apx:proofs} 
and  equation \eqref{propeq:bayes_01} give us sufficient conditions for consistency to check our proposed loss.   
Let $g_y: \mathcal{X} \to \mathbb{R}$ for $y \in \mathcal{Y}$ and define $h(x) = \arg \max_{y \in \mathcal{Y}}g_y$, similarly let  $g_\bot: \mathcal{X} \to \mathbb{R}$ and define $r(x)= \bI_{\max_{y \in \mathcal{Y}}g_y(x) \leq g_\bot }$  the proposed surrogate loss  for $L_{0{-}1}$ \eqref{eq:original_reject_loss} in the multiclass setting is then:
\begin{align} \label{eq:proposed_CE_loss}
& L_{CE}(h,r,x,y,m) = -  \log\left(\frac{\exp(g_{y}(x))}{\sum_{y' \in \mathcal{Y} \cup  \bot}\exp(g_{y'}(x))} \right) - \bI_{m = y} \log\left(\frac{\exp(g_{\bot}(x))}{\sum_{y' \in \mathcal{Y} \cup \bot}\exp(g_{y'}(x))} \right)
\end{align}
The proposed surrogate $L_{CE}$ is in fact consistent and upper bounds $L_{0{-}1}$ as the following theorem demonstrates.
\begin{theorem}
The loss $L_{CE}$ is convex in $\mathbf{g}$, upper bounds  $L_{0{-}1}$ and is consistent: $\inf_{h,r}\bE_{x,y,m}[L_{CE}(h,r,x,y,m)]$ is attained at $(h^*_{CE},r^*_{CE})$ such that $h^B(x)=h^*_{CE}(x)$ and $r^B(x)=r^*_{CE}(x)$ for all $x \in \mathcal{X}$.
\end{theorem}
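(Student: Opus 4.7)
The plan is to verify convexity, upper-boundedness, and consistency separately; the last step is the substantive one.

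\textbf{Convexity.} Rewrite
\begin{align*}
L_{CE}(h,r,x,y,m) = \Bigl[-g_y(x) + \log \sum_{y'\in\cY\cup\{\bot\}} e^{g_{y'}(x)}\Bigr] + \bI_{m=y}\Bigl[-g_\bot(x) + \log \sum_{y'\in\cY\cup\{\bot\}} e^{g_{y'}(x)}\Bigr].
\end{align*}
Each bracket is a log-sum-exp (convex) minus a linear functional in $\mathbf{g}$, hence convex; $\bI_{m=y}$ is a nonnegative constant in $\mathbf{g}$, so the whole expression is convex in $\mathbf{g}$.

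\textbf{Upper bound.} I would argue pointwise, splitting on the value of $L_{0-1}$. The case $L_{0-1}=0$ is immediate from $L_{CE}\geq 0$. If $r(x)=0$ and $h(x)\neq y$, then $h(x)=\arg\max_{y'\in\cY} g_{y'}$ forces $p_{h(x)}\geq p_y$ (with $p_i$ the softmax probability), so $p_y\leq 1/2$ and $-\log p_y\geq \log 2$. If $r(x)=1$ and $m\neq y$, then $g_\bot\geq \max_{y'\in\cY} g_{y'}$ gives $p_\bot\geq p_y$ and again $p_y\leq 1/2$; since $\bI_{m=y}=0$ the second term vanishes and $L_{CE}=-\log p_y\geq \log 2$. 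Assembling these yields $L_{CE}\geq \log 2\cdot L_{0-1}$, i.e.\ the required surrogate bound (the constant disappears with base-$2$ logarithms).

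\textbf{Consistency.} Writing $p_i(x)=e^{g_i(x)}/\sum_{j\in\cY\cup\{\bot\}} e^{g_j(x)}$ and conditioning on $X=x$,
\begin{align*}
\bE[L_{CE}\mid X=x] = -\sum_{y\in\cY}\eta_y(x)\,\log p_y(x) \;-\; \bP(Y=M\mid X=x)\,\log p_\bot(x).
\end{align*}
Define the augmented weights $\alpha_y(x)=\eta_y(x)$ for $y\in\cY$ and $\alpha_\bot(x)=\bP(Y=M\mid X=x)$, so the conditional risk becomes $-\sum_i \alpha_i(x)\log p_i(x)$, a strictly convex function of the probability vector $\mathbf{p}(x)$ on $\cY\cup\{\bot\}$. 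A Lagrange multiplier argument for the simplex constraint $\sum_i p_i=1$ (equivalently, rewriting as a KL divergence up to an additive constant) shows the unique minimizer is $p_i^\star(x)=\alpha_i(x)/\sum_j \alpha_j(x)$; since the softmax image is the relative interior of the simplex, this infimum is realized (in the limit) by some sequence of $\mathbf{g}$'s.

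Finally, $\arg\max_i p_i^\star(x)=\arg\max_i \alpha_i(x)$, so restricting to $\cY$ gives $h^*_{CE}(x)=\arg\max_{y\in\cY}\eta_y(x)=h^B(x)$, and the rejector fires iff $\max_{y\in\cY} p_y^\star(x)\leq p_\bot^\star(x)$, i.e.\ iff $\max_{y\in\cY}\eta_y(x)\leq \bP(Y=M\mid X=x)$, which is exactly $r^B(x)$ from Proposition 2. The main technical point, and the source of the reduction's elegance, is this last step: recognizing that the two terms of $L_{CE}$ together encode a standard weighted log-likelihood on the enlarged label space $\cY\cup\{\bot\}$, with the new class $\bot$ carrying the precisely-right effective weight $\alpha_\bot=\bP(Y=M\mid X=x)$ so that the pointwise Bayes rule coincides with $(h^B,r^B)$.
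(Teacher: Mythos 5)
Your proposal is correct and follows essentially the same route as the paper: the same convexity decomposition, the same case analysis for the upper bound (including the observation that the bound requires base-$2$ logarithms), and the same pointwise reduction to the conditional risk $-\sum_{y}\eta_y(x)\log p_y(x)-\bP(Y=M\mid X=x)\log p_\bot(x)$. The only difference is cosmetic: you minimize this weighted cross-entropy over the probability simplex via Lagrange multipliers, whereas the paper differentiates with respect to the $g_i$'s directly, and both yield the identical optimal softmax $p_i^\star=\alpha_i/(1+\bP(Y=M\mid X=x))$ and hence agreement with $(h^B,r^B)$.
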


\begin{sproof} Please refer to appendix \ref{apx:proofs} for the detailed proof. First the infimum over functions $h,r$ can be replaced by a point-wise infimum as:
\begin{flalign*}
&\inf_{h,r}\bE_{x,y,m}[L_{CE}(h,r,x,y,m)]
= \bE_{x}\inf_{h(x),r(x)}\bE_{y|x}\bE_{m|x,y}[L_{CE}(h(x),r(x),x,y,m)]&
\end{flalign*}
Now let us expand the inner expectation:
\begin{flalign}
\bE_{y|x}\bE_{m|x,y}[L_{SH}(h(x),r(x),x,y,m)]
&=- \sum_{y \in \mathcal{Y}} \eta_y(x) \log\left(\frac{\exp(g_{y}(x))}{\sum_{y' \in \mathcal{Y} \cup \bot}\exp(g_{y'}(x))} \right)   \label{the_body:y|x_cross_loss_withq}    \\&- \bP(Y= M |X=x)   \log\left(\frac{\exp(g_{\bot}(x))}{\sum_{y' \in \mathcal{Y} \cup \bot}\exp(g_{y'}(x))} \right) \nonumber
\end{flalign}

 For ease of notation denote the RHS of equation \eqref{the_body:y|x_cross_loss_withq} as $L_{CE}(g_1,\cdots,g_{|\mathcal{Y}|},g_\bot)$, note that it is a a convex function, hence we will take the partial derivatives with respect to each argument and set them to $0$.
For any $g_\bot$ and  $i \in \mathcal{Y}$ we have :
\begin{flalign}
&   \frac{\exp(g_{i}^*(x))}{\sum_{y' \in \tilde{\mathcal{Y}}}\exp(g_{y'}(x))} = \frac{\eta_i(x)}{1 +\bP(Y= M|X=x)}\label{the_body:optimal_g}
\end{flalign}
The optimal $h^*$ for any $g_\bot$ should satisfy equation \eqref{the_body:optimal_g} for every $i \in \mathcal{Y}$. 
Plugging $h^*$ and taking the derivative with respect to $g_\bot$ we get:

\begin{flalign}
&\nonumber \frac{\exp(g_{\bot}^*(x))}{\sum_{y' \in \mathcal{Y}}\exp(g_{y'}^*(x))} = \frac{\bP(Y= M|X=x)}{1+\bP(Y= M|X=x)}
\end{flalign}
since  exponential is an increasing function we get that the optimal $h^*$ and $r^*$ in fact agrees with the Bayes solution.
\end{sproof}
When the costs $c(1), \cdots, c(K+1)$ are in accordance with our  expert deferral setting the loss $\tilde{L}_{CE}$ reduces to $L_{CE}$. 
Now stepping back and looking more closely at our loss $L_{CE}$, we can see that the loss on examples where the expert makes a mistake becomes the cross entropy loss with the target. On the other hand, when the expert agrees with the target, the learner faces two opposing decisions whether to defer or predict the target. We can encourage or hinder the action of deferral by modifying the loss with an additional parameter $\alpha \in \mathbb{R}^+$ as $L_{CE}^\alpha(h,r,x,y,m)$:
\begin{align}
\nonumber  L_{CE}^\alpha(h,r,x,y,m)=& -( \alpha \cdot \bI_{m = y} + \bI_{m \neq y} )\log\left(\frac{\exp(g_{y}(x))}{\sum_{y' \in \mathcal{Y} \cup \nonumber \bot}\exp(g_{y'}(x))} \right) \\&- \bI_{m = y} \log\left(\frac{\exp(g_{\bot}(x))}{\sum_{y' \in \mathcal{Y} \cup \bot}\exp(g_{y'}(x))} \right) 
\end{align}
Note that $ L_{CE}^1 = L_{CE}$. The effect of $\alpha$ is to re-weight examples where the expert is correct to discourage the learner of fitting them and instead focus on examples where the expert makes a mistake. In practice, one would treat $\alpha$ as an additional hyperparameter to optimize for.

\section{Theoretical analysis}\label{sec:theory}
In this section we focus on the zero-one system loss function $L_{0{-}1}$ and try to understand previous proposed solutions in the literature in comparison with our method from a theoretical perspective.

\subsection{Failure of Confidence Scores Method}
Let us first remind ourselves of the Bayes solution for the system loss:
\begin{align*}
    &h^B(x) = \arg \max_{y \in \mathcal{Y}}\eta_y(x), \quad r^B(x)= \bI_ {\max_{y \in \mathcal{Y}}\eta_y(x) \leq  \bP(Y = M|X=x) } 
\end{align*}
  The form of the Bayes solution above suggests a very natural approach: 1)  learn a classifier minimizing the misclassification loss with the target and obtain confidence scores for predictions, 2) obtain confidence scores for expert agreement with the target, this can be done by learning a model where the target is whether the expert agrees with the task label and extracting confidence scores from this model \cite{raghu2019direct}, and finally 3) compare who between the classifier and the expert is more confident and accordingly defer. 
 We refer to this as the confidence score method (Confidence), this approach leads to a consistent estimator for both the rejector and classifier and was proposed by \cite{raghu2019algorithmic}.
 
 In fact this is the standard approach in rejection learning \cite{bartlett2008classification,ramaswamy2018consistent,ni2019possibility}, a host of different methods exist for estimating a classifier's confidence on new examples including trust scores \cite{jiang2018trust}, Monte-Carlo dropout for neural networks \cite{gal2016dropout} among many others. However, the key pitfall of this method in the expert deferral setup it that it does not allow $h$ to adapt to the expert's strengths and weaknesses.
When we restrict our search space to a limited class of functions $\mathcal{H}$ and $\mathcal{R}$ this approach can easily fail. 
We now give a  toy example where learning the classifier independently fails which  motivates the need to jointly learn both the classifier and rejector.
\begin{wrapfigure}{r}{0.50\textwidth}
\centering
\resizebox{0.50\textwidth}{!}{
  \includegraphics[clip,scale=0.7,trim={0.0cm 9.3cm 21.0cm 0.0cm}]{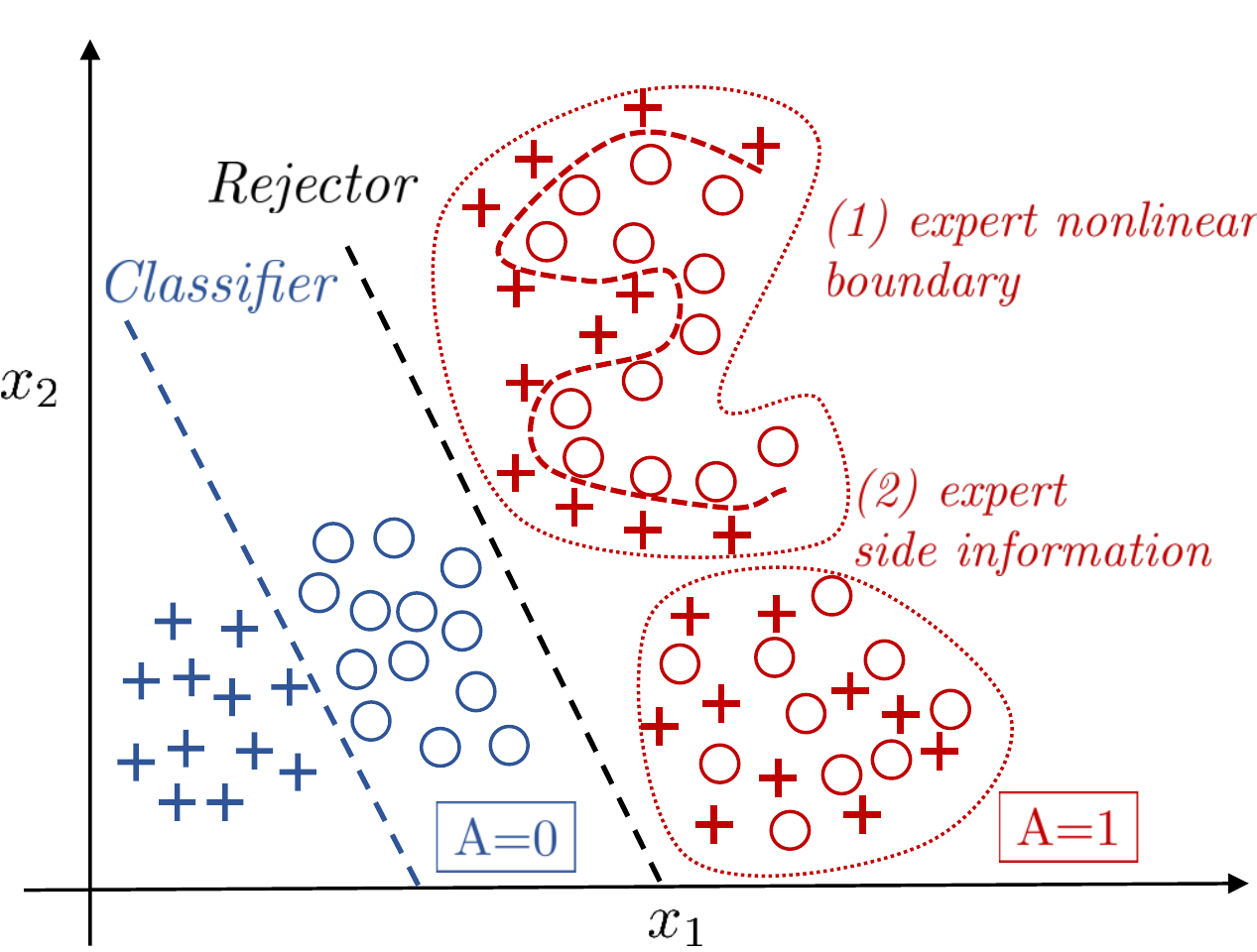}
  }
    \caption{Setting of two groups, red and blue, the task is binary classification with labels $\{o,+\}$, the expert fits the red majority group, hence the classifier should attempt to fit the blue group with the rejector (black line) separating the groups.}
    \label{fig:expert_gauss}
\end{wrapfigure}

 Assume that there exists two  sub-populations in the data denoted $A=1$ and $A=0$ where $\bP(A=1)\geq \bP(A=0)$ from which $X \in \mathbb{R}^d$ is generated from and conditional on the target and population, $X|(Y=y,A=0)$ is normally distributed according to $ \mathcal{N}(\mu_{y,0}, \Sigma)$ and $X|(Y=y,A=1)$ consists of two clusters: cluster (1) is normally distributed but the means are not well separated and cluster (2) is only separable by a complex non-linear boundary; the data is illustrated in Figure \ref{fig:expert_gauss}. Finally we assume the expert to be able to perfectly classify group $A=1$, on cluster (1) the expert is able to compute the complex nonlinear boundary and on cluster (2) the expert has side-information $Z$ that allows him to separate the classes which is not possible from only $X$.  Our hypothesis spaces $\mathcal{H}$ and $\mathcal{G}$ will be the set of all $d-$dimensional hyperplanes.
If we start by learning $h$, then the resulting hyperplane will try to minimize the average error across both groups, this will likely result into a hyperplane that separates neither group as the data is not linearly separable, especially on group $A=1$. 
If we assume that the boundary between the groups is linear as shown, then we can achieve the error of the Bayes solution within our hypothesis space: the optimal behavior in this setting is clearly to have $h$ fit group $A=0$, note here the Bayes solution corresponds to a hyperplane via linear discriminant analysis for $2$ classes on $A=0$, and the rejector $r$ separating the groups as illustrated in Figure \ref{fig:expert_gauss}. This example illustrates the complexities of this setting, due to model capacity there are significant gains to be achieved from adapting to the expert by focusing only group $A=0$. Setting aside model capacity, the nonlinear boundary of cluster (1) is sample intensive to learn as we only have access to finite data. Finally, cluster (2) cannot be separated even with infinite data, the side information of the expert is needed, and so the hard task is to identify the region of cluster (2).
This serves to illustrates the complexities of the setup and the importance of learning the classifier and rejector jointly.
\subsection{Inconsistency of mixtures of experts loss  and Realizable-consistency}

So far we have focused on classification consistency to verify the soundness of  proposed approaches, however, we usually have specific hypothesis classes $\cH, \cR$ in mind, and if the Bayes predictor is not in our class then consistency might not guarantee much \cite{ben2012minimizing}. For example, for binary classification with half-spaces, any predictor learned with a convex surrogate loss can have arbitrarily high error if the best half-space has non-zero error \cite{ben2012minimizing}. The previous example illustrated in Figure \ref{fig:expert_gauss} shows an the mode of failure that exists in the expert deferral setup even in the realizable setting. 
Therefore, a more relevant requirement to our example is that the minimizers of a proposed surrogate loss and the original loss agree for given hypothesis classes in the \emph{realizable} setting; this is formally defined with the below notion. 

\begin{definition}[realizable $(\mathcal{H},\mathcal{R})$-consistency]
A surrogate loss $L_{surr}$ is realizable $(\mathcal{H},\mathcal{R})$-consistent if for all distributions $\mathbf{P}$ and experts $M$ for which there exists $h^*,r^* \in \mathcal{H} \times \mathcal{R}$ that have zero error $L(h^*,r^*)=0$, we have $\forall \epsilon >0$, $\exists \delta >0$ such that if $(\hat{h},\hat{r})$ satisfies
\begin{center}

$\left| L_{surr}(\hat{h},\hat{r}) - \inf_{h \in \cH, r \in \cR}  L_{surr}(h,r) \right| \leq \delta$, then: 
$ L(\hat{h},\hat{r})  \leq \epsilon$
\end{center}

\end{definition}
A similar notion was introduced for classification by \cite{long2013consistency} and by \cite{cortes2016learning} for rejection learning, however here we have the the added dimension of the  expert. 
 
Note that the expert deferral setting considered here can be thought of as a hard mixture of two experts problem where one of the experts is fixed \cite{jordan1994hierarchical,shazeer2017outrageously,madras2018predict}. This observation motivates a natural mixture of experts type loss, let $g_y: \mathcal{X} \to \mathbb{R}$ for $y \in \mathcal{Y}$, $h(x) = \arg \max_{y \in \mathcal{Y}}g_y$,  $r_i: \mathcal{X} \to \mathbb{R}$ for $i \in \{0,1\}$ and  $r(x)= \arg\max_{i \in \{0,1\}} r_i(x)$, the mixture of experts loss is defined as:

\begin{align}
& L_{mix}(\mathbf{g},\mathbf{r},x,y,m) = -\log\left(\frac{\exp(g_{y}(x))}{\sum_{y' \in \mathcal{Y}}\exp(g_{y'}(x))} \right) \frac{\exp(r_{0}(x))}{\sum_{i \in \{0,1\}}\exp(r_{i}(x))}  + \bI_{m \neq y} \frac{\exp(r_{1}(x))}{\sum_{i \in \{0,1\}}\exp(r_{i}(x))}   \label{eq:mix_of_exp_loss}
\end{align}
The above loss extends \cite{madras2018predict} approach to the multiclass setting. As the next proposition demonstrates, $L_{mix}$ is in general \emph{not} classification consistent, however, it is realizable $(\mathcal{H},\mathcal{R})$-consistent for classes closed under scaling which include linear models and neural networks.

\begin{proposition} 
$L_{mix}$ is realizable $(\mathcal{H},\mathcal{R})$-consistent for classes closed under scaling but is not  classification consistent.
\label{prop:realizable_madras}
\end{proposition}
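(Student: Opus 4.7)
The plan is to handle the two halves of the proposition separately, exploiting the structural mismatch between the unbounded cross-entropy in the classifier summand of $L_{mix}$ and the bounded $0$-$1$ indicator in the expert summand.

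For the non-consistency half I would exhibit a one-point distribution on which the pointwise infimum of $\mathbb{E}[L_{mix}]$ disagrees with the Bayes rule for $L_{0{-}1}$. Take $\mathcal{X} = \{x_0\}$, $\mathcal{Y} = \{1,2\}$, $\eta_1(x_0) = \eta_2(x_0) = 1/2$, and an expert with $\mathbb{P}(Y \neq M \mid X = x_0) = 0.6$. The Bayes rule for $L_{0{-}1}$ prefers not to defer, since the optimal classifier error $1/2$ is strictly below the expert error $0.6$. Writing $p = \exp(g_1)/(\exp(g_1)+\exp(g_2))$ and $q = \exp(r_0)/(\exp(r_0)+\exp(r_1))$, the pointwise expected $L_{mix}$ simplifies to $H_2(p)\,q + 0.6\,(1-q)$, where $H_2$ denotes binary cross-entropy against $\eta$. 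Inner minimization over $p$ gives $p = 1/2$ with value $\log 2 > 0.6$, so the objective is linear-increasing in $q$ and any minimizing sequence forces $q \to 0$, i.e.\ $\hat r(x_0) = 1$ (defer). This is the opposite of $r^B(x_0)$, so the limiting $L_{0{-}1}$-cost is $0.6 > 0.5 = \inf L_{0{-}1}$, contradicting classification consistency.

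For realizable $(\mathcal{H},\mathcal{R})$-consistency, let $(h^*, r^*) \in \mathcal{H} \times \mathcal{R}$ with underlying scores $(\mathbf{g}^*, \mathbf{r}^*)$ satisfy $L(h^*, r^*) = 0$. Closure under scaling lets me consider the rescaled candidate $(c\mathbf{g}^*, c\mathbf{r}^*)$ as $c \to \infty$. On the region $\{r^*(x) = 0\}$ realizability forces $h^*(x) = y$ a.s., so the softmax over $c\mathbf{g}^*$ concentrates and $-\log p_y \to 0$, while $\mathbb{I}_{m \neq y}\, p_{\mathrm{defer}} \to 0$ because $p_{\mathrm{defer}} \to 0$. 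On $\{r^*(x) = 1\}$ the expert is correct a.s., so the expert term vanishes identically, and the classifier term $-\log p_y \cdot p_{\mathrm{no\,defer}}$ is the product of something that grows at most linearly in $c$ with something decaying exponentially in $c$. Hence $\inf_{\mathcal{H}\times\mathcal{R}} L_{mix} = 0$, reducing the consistency condition to the surrogate-to-target bound $L_{mix}(\hat{\mathbf{g}},\hat{\mathbf{r}}) \le \delta \Rightarrow L(\hat h, \hat r) \le O(\delta)$.

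For that converse bound I would dominate each summand of $L(\hat h, \hat r)$ pointwise by a constant multiple of the corresponding summand of $L_{mix}$. For the expert summand, $\mathbb{I}_{\hat r(x) = 1} \le 2\,\hat p_{\mathrm{defer}}$, since the argmax decision requires the softmax mass on defer to be at least $1/2$. For the classifier summand, the key observation is that whenever $\hat h(x) \ne y$ there is some class with probability at least $\hat p_y$, forcing $\hat p_y \le 1/2$ and hence $-\log \hat p_y \ge \log 2$, giving $\mathbb{I}_{\hat h(x) \ne y} \le (\log 2)^{-1}(-\log \hat p_y)$; combined with $\mathbb{I}_{\hat r(x)=0} \le 2\,\hat p_{\mathrm{no\,defer}}$, this yields $L(\hat h, \hat r) \le (2/\log 2)\, L_{mix}(\hat{\mathbf{g}},\hat{\mathbf{r}}) \le (2/\log 2)\,\delta$, so $\delta := \epsilon \log 2 / 2$ suffices. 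The main subtlety I expect is precisely here: cross-entropy is unbounded and a naive bound on $L$ fails, but the argument is rescued by the fact that cross-entropy only needs to control the indicator of a classifier mistake, and in that regime it is automatically bounded below by $\log 2$, which is exactly what upgrades it to an indicator bound.
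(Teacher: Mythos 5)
Your proof is correct, and the realizable-consistency half follows essentially the paper's own route: rescale the realizable pair by $c \to \infty$ (using closure under scaling), split on the regions $\{r^*=0\}$ and $\{r^*=1\}$ to conclude $\inf \bE[L_{mix}]=0$, and dominate $L_{0{-}1}$ pointwise by a constant multiple of $L_{mix}$. The one substantive difference there is your constant $2/\log 2$, which is in fact a \emph{correction} of the paper: the paper asserts $L_{0{-}1}\le 2L_{mix}$ (``factor of 2 is upper bound''), but with the natural logarithm this fails in the case $\hat h(x)\ne y$, $\hat r(x)=0$, where $L_{0{-}1}=1$ while $2L_{mix}$ can be as small as $\log 2\approx 0.69$; your observation that a classification mistake forces $-\log\hat p_y\ge\log 2$ is exactly the missing ingredient. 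The inconsistency half is where you genuinely diverge. The paper computes the pointwise minimizer of $\bE[L_{mix}]$ over all measurable functions --- the softmax recovers $\eta(\cdot|x)$, and the optimal gate defers iff the Shannon entropy of $\eta(\cdot|x)$ exceeds $\bP(Y\neq M|X=x)$ --- and then remarks that this entropy-versus-error rule ``will not always be in accordance'' with $r^B$, which compares $1-\max_y\eta_y(x)$ to the expert error. You instead exhibit a single two-class point sitting in the gap $1/2 < 0.6 < \log 2$. Your route is the more airtight disproof, since the paper never actually names a distribution on which the two rules disagree (which is what inconsistency formally requires), whereas the paper's computation buys a full characterization of what $L_{mix}$ optimizes. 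Two minor loose ends you share with the paper: both scaling arguments implicitly assume strict inequalities ($r_0^*\ne r_1^*$ on the relevant region, and a strict argmax for $\mathbf{g}^*$), since exact ties survive scaling and leave softmax mass $1/2$ in place; and the interchange of the $c\to\infty$ limit with the expectation deserves a word --- the paper waves at monotone convergence, while you assert $\inf\bE[L_{mix}]=0$ without comment.
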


Proof of proposition \ref{prop:realizable_madras} can be found in Appendix \ref{apx:proofs}.
Note that integrating more information about $M$ in $L_{mix}$ would not make the loss consistent, the inconsistency arises from the parameterization in $\bm{g}$, setting the classifier loss to simply be $\bI_{h(x)\neq y}$ would make $L_{mix}$ consistent at the cost of losing the convexity and differentiability in $\bm{g}$.
While $L_{mix}$ is indeed realizable consistent however it is not convex in both $\bm{g}$ and $\bm{r}$, hence it is not clear how to efficiently optimize it. 
Setting aside computational feasibilities, it is also not immediately clear which between consistency and realizable $(\cH,\cR)$-consistency will be more practically relevant. In our experimental section we show how the mismatch between the model and expert loss and their actual errors causes this method to learn the incorrect behavior which hints that classification consistency is crucial.



\subsection{Generalization Bound For Joint Learning}
In this subsection we analyze the sample complexity to jointly learn a rejector and classifier.  
The goal is to find the minimizer of the empirical version of our system loss when our hypothesis space for $h$ and $r$ are $\mathcal{H},\mathcal{R}$ respectively:
\begin{equation}
    \hat{h}^*,\hat{r}^* = \arg\min_{h \in \mathcal{H}, r \in \mathcal{G}} L^S_{0{-}1}(h,r):=
\frac{1}{n} \sum_{i=1}^n \ \bI_{h(x_i) \neq y_i} \bI_{r(x_i) = 0} +  \bI_{m_i \neq y_i}\bI_{r(x_i)=1}
\end{equation}
By going after the system loss directly, we can approximate the population minimizers $h^*,r^*$ over $\mathcal{H} \times \mathcal{R}$ of $L_{0{-}1}$ \eqref{eq:01_reject_loss}. The optimum $h^*$ may not necessarily coincide with the optimal minimizer of the misclassification loss with the target which is why learning jointly is critical. We now give a generalization bound for our empirical minimization procedure for a binary target.

\begin{theorem}
 For any expert $M$ and data distribution $\mathbf{P}$ over $\mathcal{X} \times \mathcal{Y}$, let $0<\delta<\frac{1}{2}$, then  with probability at least $1-\delta$, the following holds for the empirical minimizers $(\hat{h}^*,\hat{r}^*)$:
\begin{align}
    L_{0{-}1}(\hat{h}^*,\hat{r}^*) &\leq  L_{0{-}1}(h^*,r^*) + \mathfrak{R}_n(\mathcal{H}) +  \mathfrak{R}_{n}(\mathcal{R})  + \mathfrak{R}_{n \bP(M \neq Y)/2}(\mathcal{R})  \nonumber \\
    & + 2\sqrt{\frac{\log{\frac{2}{\delta}}}{2n}} +\frac{\bP(M\neq Y)}{2}  \exp\left(- \frac{n \bP(M \neq Y)}{8} \right)   \label{eq:the_generalization_bound}
\end{align}
\end{theorem}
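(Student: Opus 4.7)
The plan is to run a classical empirical-risk-minimization / uniform-convergence argument, but to split the system loss into two additive pieces that can be controlled separately: a ``classifier-and-rejector'' piece and a pure ``expert'' piece. The second piece is the subtle one, since it is only informative on examples where the expert errs; this is what forces the reduced sample size $n\bP(M\neq Y)/2$ to appear in the bound.

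First I would run the standard chain. Using empirical optimality of $(\hat h^*,\hat r^*)$, together with Hoeffding applied at the fixed pair $(h^*,r^*)$ (a bounded i.i.d.\ average), I get
$$L_{0{-}1}(\hat h^*,\hat r^*) - L_{0{-}1}(h^*,r^*) \;\leq\; \sup_{h\in\cH,\, r\in\cR}\bigl[L_{0{-}1}(h,r)-L^S_{0{-}1}(h,r)\bigr] \;+\; \sqrt{\tfrac{\log(2/\delta)}{2n}}.$$
I then decompose $L_{0{-}1}(h,r)=\Phi_1(h,r)+\Phi_2(r)$ with $\Phi_1(h,r)=\bE[\bI_{h(X)\neq Y}\bI_{r(X)=0}]$ and $\Phi_2(r)=\bE[\bI_{M\neq Y}\bI_{r(X)=1}]$, and bound $\sup|\Phi_j-\Phi_j^S|$ in turn.

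For $\Phi_1$ I use a standard Rademacher / McDiarmid argument. The class $\{(x,y)\mapsto \bI_{h(x)\neq y}\bI_{r(x)=0}: h\in\cH, r\in\cR\}$ is the product of two $\{0,1\}$-valued classes, so the elementary inequality $|ab-a'b'|\leq|a-a'|+|b-b'|$ on indicators gives a clean split of the empirical Rademacher average into a $\mathfrak{R}_n(\cH)$ piece (from the $\bI_{h\neq y}$ factor, using that negation and $y$-dependence are handled by a Rademacher symmetrization) and a $\mathfrak{R}_n(\cR)$ piece (from the $\bI_{r=0}$ factor), yielding the first two Rademacher terms of the bound.

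The main technical step is $\Phi_2$. Let $S_M=\{i:m_i\neq y_i\}$ and $n_M=|S_M|$, so $\bE[n_M]=n\bP(M\neq Y)$. Conditioned on the realized set $S_M$, the tuples $\{(x_i,y_i,m_i)\}_{i\in S_M}$ are i.i.d.\ draws from $\bP(\cdot\mid M\neq Y)$, so I can apply a Rademacher uniform-deviation bound to $\{\bI_{r=1}: r\in\cR\}$ on this subsample. Multiplicative Chernoff gives $\bP(E^c)\leq \exp(-n\bP(M\neq Y)/8)$ for the ``good'' event $E=\{n_M\geq \tfrac12 n\bP(M\neq Y)\}$, and on $E$ monotonicity of Rademacher complexity in the sample size replaces $\mathfrak{R}_{n_M}(\cR)$ by $\mathfrak{R}_{n\bP(M\neq Y)/2}(\cR)$, producing the third Rademacher term. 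A separate Hoeffding bound on the scalar $n_M/n$ contributes an $O(\sqrt{\log(1/\delta)/n})$ term that I absorb into the $2\sqrt{\log(2/\delta)/(2n)}$ piece of the statement. On the bad event $E^c$ I bound the $\Phi_2$ deviation trivially by $\bP(M\neq Y)$; weighted by $\bP(E^c)$ this yields the residual $\tfrac12\bP(M\neq Y)\exp\bigl(-n\bP(M\neq Y)/8\bigr)$ term. A final union bound over the failure events (deviation of $\Phi_1$, deviation of $\Phi_2$ on $E$, and the concentration at the fixed pair $(h^*,r^*)$) collects the pieces into the stated inequality.

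The main obstacle is unambiguously the $\Phi_2$ term: the effective sample size controlling its uniform deviation is the random variable $n_M$, so one must condition on $S_M$, verify the induced i.i.d.\ structure of the restricted sample, use Chernoff to rule out atypically small $n_M$, and pay the exponentially small residual on the complementary event. Getting the constants right so that the bad-event contribution collapses to exactly $\tfrac12\bP(M\neq Y)\exp(-n\bP(M\neq Y)/8)$ in the statement, rather than being swept into a generic $O(\cdot)$, is the delicate bookkeeping step.
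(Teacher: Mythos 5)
Your overall strategy matches the paper's: both proofs reduce to controlling a Rademacher-type complexity of the loss class, split off the expert term $\bI_{m\neq y}\bI_{r(x)=1}$, condition on the random set $S_M=\{i: m_i\neq y_i\}$, apply a multiplicative Chernoff bound to $n_M$, and use monotonicity of Rademacher complexity to arrive at $\mathfrak{R}_{n\bP(M\neq Y)/2}(\cR)$. However, there is a genuine gap in how you handle the bad event in the $\Phi_2$ step. You propose to bound the deviation on $E^c=\{n_M<\tfrac12 n\bP(M\neq Y)\}$ trivially and then ``weight'' it by $\bP(E^c)$ to produce the additive residual $\tfrac12\bP(M\neq Y)\exp(-n\bP(M\neq Y)/8)$. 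That multiplication is only valid inside an expectation (law of total expectation); in a high-probability argument the bad event can only be added to the \emph{failure probability}, giving a statement of the form ``with probability at least $1-\delta-\exp(-n\bP(M\neq Y)/8)$, the deviation is at most [good-event bound]'' --- which is a different theorem from the one claimed, where the exponential term sits additively in the bound and the confidence level is exactly $1-\delta$. As written, your conditional-deviation-plus-weighting step does not compose into the stated inequality.

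The paper avoids this exact pitfall by applying concentration only once, to the full loss class: the standard Rademacher bound (McDiarmid plus symmetrization) reduces everything to the \emph{deterministic} quantity $\mathfrak{R}_n(\mathcal{L}_{\cH,\cR})$, and all of the expert-term surgery --- sub-additivity of the supremum, the product-of-indicators lemma giving $\tfrac12\mathfrak{R}_n(\cH)+\tfrac12\mathfrak{R}_n(\cR)$, conditioning on $S_M$, the Chernoff bound, and the bound $\tfrac{n_M}{n}\le\tfrac12\bP(M\neq Y)$ on the bad event --- happens \emph{inside} that expectation, where weighting by $\bP(E^c)$ is legitimate. This restructuring also removes two other costs in your plan: you never need a separate Hoeffding bound on $n_M/n$ (the binomial fluctuation is absorbed by the Chernoff split inside the expectation), and you avoid the extra union bounds over per-piece deviation events, which in your version would degrade the constant $2\sqrt{\log(2/\delta)/(2n)}$ to three or more terms with $\log(4/\delta)$. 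So the fix is to reorder your argument: concentrate first (once), symmetrize, and only then condition on $S_M$ inside the expected Rademacher complexity; your remaining ingredients then go through essentially as in the paper.
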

Proof of the above theorem can be found in Appendix \ref{apx:proofs}. We can see that the performance of our empirical minimizer is controlled by the Rademacher complexity $\mathfrak{R}_{n}(\mathcal{R})$ and $\mathfrak{R}_{n}(\mathcal{H})$ of both the classifier and rejector model classes and the error of the expert.
Note that when $\bP(M \neq Y)=0$ we recover the bound proved in Theorem 1 \cite{cortes2016learning} for rejection learning when $c=0$; this gives evidence that deferring to an expert is a more sample intensive problem then rejection learning. Both our loss $L_{CE}$ and the confidence scores approach lead to consistent estimators, however, as we will later show in our experiments, one differentiating factor will be that of sample complexity. We can already see in the bound \eqref{eq:the_generalization_bound}, that we pay  the complexity of the rejector and classifier model classes, however, our approach combines the rejector and classifier in one model to avoid these added costs.

\section{Experiments}\label{sec:experiments}
We provide code to reproduce our experiments \footnote{\url{https://github.com/clinicalml/learn-to-defer}}. In Appendix \ref{apx:guide} we give a detailed guide on implementing our method. Additional experimental details and results are left to Appendix \ref{apx:experiments}.



\subsection{Synthetic Data}

As a first toy example to showcase that our proposed loss $L_{CE}^\alpha$ is able to adapt to the underlying expert behavior, we perform experiments in a Gaussian mixture setup akin to the example in section \ref{sec:theory}. 
The covariate space is $\mathcal{X}= \mathbb{R}^d$ and target $\mathcal{Y} = \{0,1\}$, we assume that there exists two  sub-populations in the data denoted $A=1$ and $A=0$.
Furthermore, $X|(Y=y,A=a)$ is normally distributed according to $ \mathcal{N}(\mu_{y,a}, \Sigma_{y,a})$.
The expert follows the Bayes solution for group $A=1$ which here corresponds to a hyperplane. Our hypothesis spaces $\mathcal{H}$ and $\mathcal{R}$ will be the set of all $d-$dimensional hyperplanes.

\textbf{Setup:} We perform 200 trials where on each trial we generate: random group proportions $\bP(A=1) \sim U(0,1)$ fixing $\bP(Y=1|A=a)=0.5$, random means and variances for each Gaussian component $X|Y=y,A=a \sim \mathcal{N}(\mu_{y,a}, \Sigma_{y,a})$ where $\mu_{y,a} \sim U(0,10)^d$ and similarly for the diagonal components of $\Sigma_{y,a}(i,i) \sim U(0,10)$ keeping non-diagonal components $0$ with dimension $d=10$; we generate in total $1000$ samples each for training and testing. We compare against oracle behavior and two baselines: 1) An oracle baseline (Oracle) that trains only on $A=0$ data and trains the rejector to separate the groups with knowledge of group labels and 2) the confidence score baseline (Confidence) that trains a linear model on all the data and then trains a different linear model on all the data where labels are the expert's agreement with the target and finally compares which of the two is more confident according to the probabilities assigned by the corresponding models and 3) our implementation of the approach in \cite{madras2018predict} (MixOfExp). 

\textbf{Results:} We train a multiclass logistic regression model with our loss $L_{CE}^\alpha$ with $\alpha \in \{0,0.5,1\}$ and record in table \ref{table:guassian} the difference in accuracy between our method and baselines for the best performing $\alpha$. We can see that our method with $\alpha=0$ outperforms the confidence baseline by $6.39$ on average in classification accuracy and matches the oracle method  with $0.22$ positive difference which shows the success of our method. When trained with loss $L_{CE}^1$ or $L_{CE}^{.5}$ the model matches the confidence baseline, the reason being is that with $\alpha \neq 0$ the model will still try to fit the target $Y$ but the model class here is not rich enough to allow the model to reasonably fit the target and adapt to the expert.

\begin{table}[H]
\caption{Comparison of our methods with the confidence score baseline, oracle baseline and our implementation of \cite{madras2018predict} method. We compute a 95\% confidence interval for the average difference between the baselines and our method.}
\label{table:guassian}
\vskip 0.15in
\begin{center}
\begin{small}
\begin{sc}
\begin{tabular}{lcr}
\toprule
Difference in system accuracy & Average & 95\%  interval  \\
\midrule
$L_{CE}^0$-Confidence \cite{raghu2019algorithmic}    & 6.39 & [3.71,9.06] \\
$L_{CE}^0$-Oracle    & 0.22 & [-1.71,2.15] \\
$L_{CE}^0$- MixOfExp \cite{madras2018predict}    & 2.01 & [0.14,4.06] \\
\bottomrule
\end{tabular}
\end{sc}
\end{small}
\end{center}
\vskip -0.1in
\end{table}

\subsection{CIFAR-10}
As our first real data experimental evaluation we conduct experiments on the celebrated CIFAR-10 image
classification dataset \cite{krizhevsky2009learning} consisting of $32 \times32$ color
images drawn from 10  classes split into 50,000 train and 10,000 test images.

\textbf{Synthetic Expert.} We simulate multiple synthetic experts of varying competence in the following way: let $k \in [10]$, then if the image belongs to the first $k$ classes the expert predicts perfectly, otherwise the expert predicts uniformly over all classes. The classifier and expert costs are assumed to be the misclassification costs.

\textbf{Base Network.} Our base network for classification will be the Wide Residual Networks (WideResNets) \cite{zagoruyko2016wide} which with data augmentation and hyperparameter tuning can achieve a $96.2\%$ test accuracy. Since our goal is not to achieve better accuracies but to show the merit of our approach for a given fixed model, we disadvantage the model by not using data augmentation and a smaller network size. The WideResNet with 28 layers minimizing the cross-entropy loss achieves $90.47\%$ test accuracy with training until fitting the data in 200 epochs; this will be our benchmark model. We use SGD with momentum and a cosine annealing learning rate schedule.

\textbf{Proposed Approach:} Following section 4, we parameterize $h$ and $r$ (specifically $g_\bot$) by a WideResNet with $11$ output units where the first $10$ units represent $h$ and the $11'th$ unit is $g_\bot$ and minimize the proposed surrogate $L_{CE}^\alpha$ \eqref{eq:proposed_CE_loss}. We also experimented with having $h$ be a WideResNet with $10$ output units and $g_\bot$  a WideResNet with a single output unit and observed identical results. 
We show results for $\alpha \in \{0.5,1\}$.

\textbf{Baselines:} We compare against three baselines. The first baseline trains the rejector to recognize if the image is in the first $k$ classes and accordingly defers, we call this baseline "LearnedOracle"; this rejector is a learned implementation of what the optimal rejector should do. The second baseline is the confidence score method \cite{raghu2019algorithmic} and the third is the mixture-of-experts loss of \cite{madras2018predict}, details of the implementation of this final baseline are left to Appendix \ref{apx:madras}.

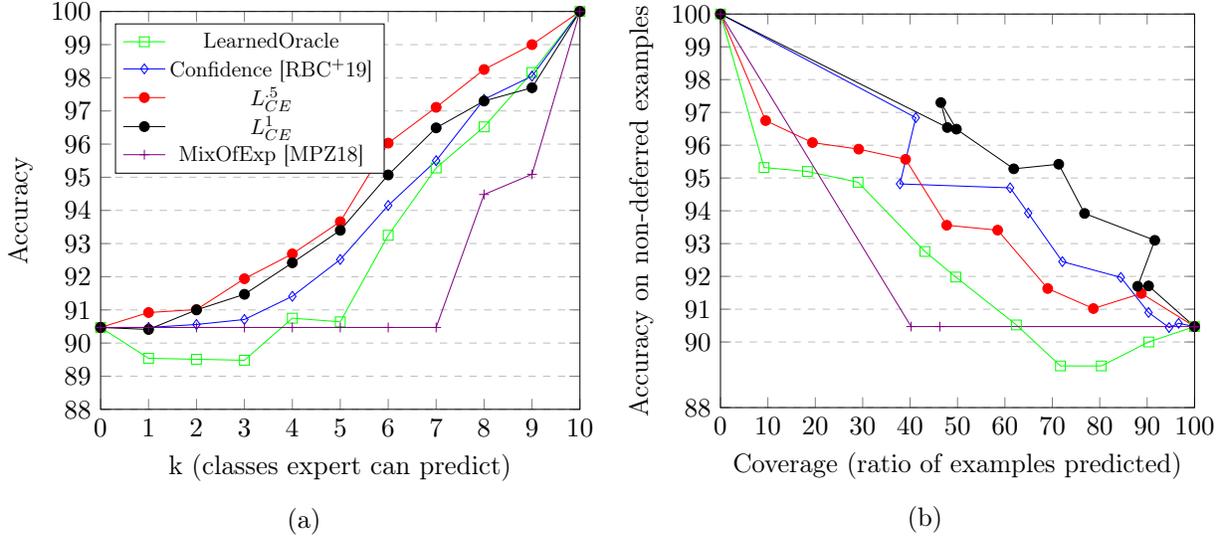
\begin{figure}
\centering
\begin{subfigure}{.5\textwidth}
\centering
\resizebox{3.2in}{!}{%
\begin{tikzpicture}
\begin{axis}[
    title={},
    xlabel={k (classes expert can predict)},
    ylabel={Accuracy},
    xmin=0, xmax=10,
    ymin=88, ymax=100,
    xtick={0,1,2,3,4,5,6,7,8,9,10},
  ytick={88,89,90,91,92,93,94,95,96,97,98,99,100},
    legend pos=north west,
legend style={nodes={scale=0.8, transform shape}},
    ymajorgrids=true,
    grid style=dashed,
]

\addplot[
    color=green,
    mark=square,
    ]
    coordinates {
    (0,90.47)(1,89.54)(2,89.51)(3,89.48)(4,90.75)(5,90.64)(6,93.25)(7,95.28)(8,96.52)(9,98.16)(10,100)
    };
    \addlegendentry{LearnedOracle}

\addplot[
    color=blue,
    mark=diamond,
    ]
    coordinates {
    (0,90.47)(1,90.47)(2,90.56)(3,90.71)(4,91.41)(5,92.52)(6,94.15)(7,95.5)(8,97.35)(9,98.05)(10,100)
    };
    \addlegendentry{Confidence \cite{raghu2019algorithmic}}
    
    \addplot[
    color=red,
    mark=*,
    ]
    coordinates {
    (0,90.47)(1,90.92)(2,91.01)(3,91.94)(4,92.69)(5,93.66)(6,96.03)(7,97.11)(8,98.25)(9,99)(10,100)
    };
    \addlegendentry{$L_{CE}^{.5}$}
    
        \addplot[
    color=black,
    mark=*,
    ]
    coordinates {
    (0,90.47)(1,90.41)(2,91)(3,91.47)(4,92.42)(5,93.4)(6,95.07)(7,96.49)(8,97.3)(9,97.7)(10,100)
    };
    \addlegendentry{$L_{CE}^{1}$}
        \addplot[
    color=violet,
    mark=+,
    ]
    coordinates {
    (0,90.47)(1,90.47)(2,90.47)(3,90.47)(4,90.47)(5,90.47)(6,90.47)(7,90.47)(8,94.48)(9,95.09)(10,100)
    };
    \addlegendentry{MixOfExp \cite{madras2018predict}}

\end{axis}
\end{tikzpicture}
}%
\caption{
}
\label{fig:kvssystem2}
\end{subfigure}%
\begin{subfigure}{.5\textwidth}
\centering
\resizebox{3.2in}{!}{%
\begin{tikzpicture}
\begin{axis}[
    title={},
    xlabel={Coverage (ratio of examples predicted)},
    ylabel={Accuracy on non-deferred examples},
    xmin=0, xmax=100,
    ymin=88, ymax=100,
    xtick={0,10,20,30,40,50,60,70,80,90,100},
    ytick={88,90,91,92,93,94,95,96,97,98,99,100},
    ymajorgrids=true,
    grid style=dashed,
]
\addplot[
    color=green,
    mark=square,
    ]
    coordinates {
    (100,90.47)(90.34,90)(80.33,89.27)(71.71,89.27)(62.43,90.52)(49.65,91.98)(43.12,92.76)(29.03,94.87)(18.34,95.2)(9.2,95.32)(0,100)
    };
    \addlegendentry{LearnedOracle}

\addplot[
    color=blue,
    mark=diamond,
    ]
    coordinates {
    (100,90.47)(96.64,90.57)(94.63,90.44)(90.29,90.9)(84.45,91.97)(72.12,92.45)(64.91,93.93)(61.1,94.7)(37.86,94.82)(41.2,96.84)(0,100)
    };
    \addlegendentry{Confidence \cite{raghu2019algorithmic}}
    
    \addplot[
    color=red,
    mark=*,
    ]
    coordinates {
    (100,90.47)(88.78,91.48)(78.66,91.02)(69.02,91.63)(58.47,93.41)(47.72,93.56)(39.05,95.57)(29.15,95.88)(19.41,96.08)(9.53,96.75)(0,100)
    };
    \addlegendentry{$L_{CE}^{.5}$}
    
        \addplot[
    color=black,
    mark=*,
    ]
    coordinates {
    (100,90.47)(90.3,91.71)(88,91.7)(91.61,93.1)(76.79,93.92)(71.35,95.42)(61.86,95.28)(49.77,96.49)(46.48,97.3)(47.85,96.54)(0,100)
    };
    \addlegendentry{$L_{CE}^{1}$}
         \addplot[
    color=violet,
    mark=+,
    ]
    coordinates {
    (100,90.47)(100,90.47)(100,90.47)(100,90.47)(100,90.47)(100,90.47)(100,90.47)(100,90.47)(46.27,90.47)(40.22,90.47)(0,100)
    };
    \addlegendentry{MixOfExp \cite{madras2018predict}}   
   \legend{};
\end{axis}
\end{tikzpicture}
}%
\caption{
}
\label{fig:covvsacc}
\end{subfigure}
\caption{Left figure shows overall system accuracy of our method and baselines (k is the number of classes the expert can predict) and right figure compares the accuracy on the non-deferred examples versus the coverage for every $k$ }
\label{fig:cifar10_2figs}
\end{figure}

\textbf{Results.} In figure \ref{fig:kvssystem} we plot the accuracy of the combined algorithm and expert system versus $k$, the number of classes the expert can predict perfectly. We can see that the model trained with $L_{CE}^{0.5}$ and $L_{CE}^1$  outperforms the  baselines by $1.01$\% on average for the confidence score baseline and by $1.94$ on average for LearnedOracle. To look more closely at the behavior of our method, we plot in figure \ref{fig:covvsacc} the accuracy on the non-deferred examples versus the coverage, the fraction of the examples non-deferred, for each $k$. We can see that that the model trained with $L_{CE}^1$ dominates all other baselines giving better coverage and accuracy for the classifier's predictions. This gives evidence that our loss allows the model to only predict when it is highly confident.

\textbf{Why do we outperform the baselines?}

1) \emph{Sample complexity}: The Confidence baseline \cite{raghu2019algorithmic} requires training two networks while ours only requires one, when data is limited our approach gives significant improvements in comparison. We experiment with increasing training set sizes while keeping the test set fixed and training our model with  $L_{CE}^1$ and the Confidence baseline. Figure \ref{fig:dataregimes} plots system accuracy versus training set size when training with expert $k=5$. We can see when data is limited our approach massively improves on the baseline, for example with $2000$ training points, Confidence achieves $62.33$\% accuracy while our method achieves $70.12$\%, a $7.89$ point increase.

2) \emph{Taking into consideration both expert and model confidence}: the LearnedOracle baseline ignores model confidence entirely and only focuses on the region where the expert is correct. While this is the behavior of the Bayes classifier in this setup, when dealing with a limited model class and limited data, this no longer is the correct behavior. For this reason, our model outperforms the LearnedOracle baseline.

3) \emph{Consistency}: the mixtures of experts loss of \cite{madras2018predict} fails in this setup and learns never to defer. The reason is that when training, the loss of the classifier will converge to zero and validation classifier accuracy will still improve in the mean-time, however the loss of the expert remains constant, thus we never defer.

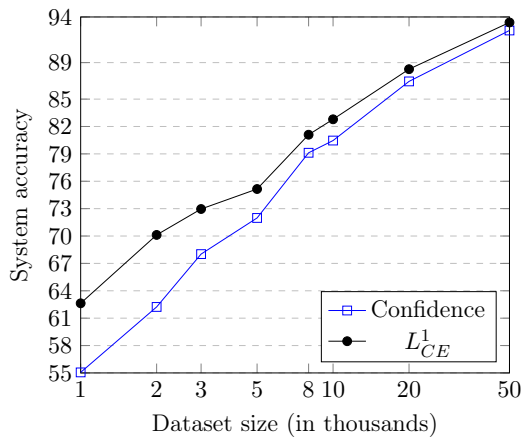
\begin{figure}[h]
\centering
\resizebox{2.8in}{!}{%
\begin{tikzpicture}
\begin{semilogxaxis}[
    title={},
    xlabel={Dataset size (in thousands)},
    ylabel={System accuracy},
    xmin=1, xmax=50,
    ymin=55, ymax=94,
    xtick={1,2,3,5,8,10,20,50},
    ytick={55,58,61,64,67,70,73,76,79,82,85,89,94},
    legend pos=south east,
    log ticks with fixed point,
    ymajorgrids=true,
    yminorticks=false,
    grid style=dashed,
]
 
\addplot[
    color=blue,
    mark=square,
    ]
    coordinates {
    (1,55.06)(2,62.23)(3,68.02)(5,71.98)(8,79.12)(10,80.46)(20,86.94)(50,92.52)
    };
    \addlegendentry{Confidence}

        \addplot[
    color=black,
    mark=*,
    ]
    coordinates {
    (1,62.63)(2,70.12)(3,72.96)(5,75.15)(8,81.1)(10,82.79)(20,88.28)(50,93.4)
    };
    \addlegendentry{$L_{CE}^{1}$}
\end{semilogxaxis}
\end{tikzpicture}
}%
\caption{Varying training set size when training with expert $k=5$ for Confidence baseline and our method $L_{CE}^1$.}
\label{fig:dataregimes}
\end{figure}

\subsection{CIFAR-100}
We repeat the experiments described above on the CIFAR-100 dataset \cite{krizhevsky2009learning}. A 28 layer WideResNet achieves a 79.28 \% test accuracy when training with data augmentation (random crops and flips). The simulated experts also operate in a similar fashion, for $k \in \{10,20,\cdots,100\}$, if the image is in the first $k$ classes, the expert predicts the correct label with probability $0.94$ to simulate SOTA performance on CIFAR-100 with 93.8\% test accuracy \cite{kolesnikov2019large}, otherwise the expert predicts uniformly at random. 

Compared against the confidence score baseline, the model trained with $L_{CE}^1$ outperforms it by a  1.60 difference in test accuracy for $30 \leq k\leq 90 $ on average and otherwise performs on par. This gives again gives evidence for the efficacy of our method, full experimental results are available in appendix \ref{apx:expcifar100}. 


\subsection{CIFAR10H and limited expert data}
Obtaining expert labels for entire datasets may in fact be prohibitively expensive as standard dataset sizes have grown into million of points \cite{deng2009imagenet}. 
Therefore it is more realistic to expect that the expert has labeled only a fraction of the data. In the following experiments we assume access to fully labeled data $S_l = \{(x_i,y_i,m_i)\}_{i=1}^m$ and data without expert labels $S_u=\{ (x_i,y_i)\}_{i=m+1}^n$.  The goal again is to learn a classifier $h$ and rejector $r$ from the two datasets $S_l$ and $S_u$.

\textbf{Data:} To experiment in settings where we have limited expert data, we use the dataset \texttt{CIFAR10H} \cite{peterson2019human} initially developed to improve model robustness. \texttt{CIFAR10H} contains for each data point in the CIFAR-10 test set fifty crowdworker annotations recorded as counts for each of the 10 classes. The training set of CIFAR-10 will constitute $S_u$, and we randomly split the test set in half where one half constitutes $S_l$ and the other is for testing; we randomize the splitting over 10 trials.

\textbf{Expert:} We simulate the behavior of an average human annotator by sampling  from the class counts for each data point. The performance of our simulated expert has an average classification accuracy of 95.22 with a standard deviation of 0.18 over 100 runs.  The performance of the expert is non uniform over the classes, for example on the class \textit{cat} the expert has 91.0\% accuracy while on \textit{horse} a 97.8\% accuracy. 

\textbf{Proposed Approach:} 
Our method will be to learn $f_m: \mathcal{X} \to \{0,1 \}$ to predict whether the expert errs from data $\tilde{S}_l = \{(x_i,\bI_{y_i \neq m_i})\}_{i=1}^m$, using $f_m$ we label $S_u$ with the expert disagreement labels to use in our loss function  an obtain $\hat{S}_u$. Note since our loss function does not care which label the expert predicts but whether he errs or not, our task simplifies to binary classification instead of classification over the target $\mathcal{Y}$. Finally we train using our loss $L_{CE}$ on $\hat{S}_u \cup S_l$; we refer to our method as "$L_{CE}$ impute"


\begin{table}[h]
\caption{Comparing our proposed methods on \texttt{CIFAR10H} and a baseline based on confidence scores recording system accuracy, coverage and classifier accuracy on non-deferred examples.}
\label{table:cifar10h}
\vskip 0.15in
\begin{center}
\begin{small}
\begin{sc}

\begin{tabular}{lccr}
\toprule
Method & System  & Coverage & Classifier   \\
\midrule
$L_{CE}$ impute   & \textbf{96.29}$\pm$0.25 & 51.67$\pm$1.46 &\textbf{ 99.2} $\pm$ 0.08 \\
$L_{CE}$ 2-step   & 96.03$\pm$0.21 & 60.81$\pm$0.87 & 98.11 $\pm$ 0.22 \\
Confidence \cite{raghu2019algorithmic}   & 95.09$\pm$0.40 & \textbf{79.48}$\pm$5.93 & 96.09 $\pm$ 0.42 \\
\bottomrule
\end{tabular}

\end{sc}
\end{small}
\end{center}
\vskip -0.1in
\end{table}

\textbf{Results.} We compare against a confidence score baseline where we train a classifier on $S_u$ and then model the expert on $S_l$. Results are shown in table \ref{table:cifar10h} and we can see that our method outperforms the confidence method by $1.2$ points on system accuracy and an impressive $3.1$ on data points where the classifier has to predict. To show the effect of imputing expert labels on $S_u$, we train first our model using $L_{CE}$ on $S_u$ and then fine tune to learn deferral on $S_l$, we refer to this as "$L_{CE}$ 2-step". It is possible that further approaches inspired by SOTA methods in semi supervised learning methods give further improvements
\cite{oliver2018realistic,berthelot2019mixmatch}.

\subsection{Hate Speech and Offensive Language Detection}
We conduct experiments on the dataset created by
\cite{davidson2017automated}  consisting of 24,783 tweets annotated as hate
speech, offensive language or neither. 
We create a synthetic expert that has differing error rates according to the demographic of the tweet's author as described in what follows.

\textbf{Expert.} 
\cite{blodgett-etal-2016-demographic} developed a probabilistic language model  that can identify if a tweet is in African-American English (AAE), this model was used by \cite{davidson2019racial} to audit for racial bias in classifiers. We use the same model and predict that a tweet is in AAE if the probability predicted is higher than $0.5$. 
Our expert model is as follows: if the tweet is in AAE then with probability $p$ we predict the correct label and otherwise predict uniformly at random. On the other hand if the tweet is not in AAE, we predict with probability $q$ the correct label. We experiment with 3 different expert probabilities for $p$ and $q$: 1) a fair expert with $\{p=0.9,q=0.9\}$, 2) a biased expert towards AAE tweets $\{p=0.75,q=0.9\}$ and 3) a biased expert towards non AAE tweets $\{p=0.9,q=0.75\}$.

\textbf{Our Approach.} For our model we use the CNN developed in \cite{kim2014convolutional} for text classification with 100 dimensional Glove embeddings \cite{pennington2014glove} and $300$ filters of sizes $\{3,4,5\}$ using dropout. This CNN achieves a 89.5\% average accuracy on the classification task, comparable to the 91\% achieved by \cite{davidson2017automated} with a feature heavy linear model.
 We randomly split the dataset with a $60,10,30$\% split into a training, validation and test set respectively; we repeat the experiments for 5 random splits. We used a grid search over the validation set to find $\alpha$. 

\textbf{Results.} We compare against two baselines: the first is Confidence, the second is an oracle baseline that trains first a model on the classification task and then implements the Bayes rejector $r^B(x)$ equipped with the knowledge of $p,q$ and the tweet's demographic group.
Both our model trained with $L_{CE}^1$ and the confidence score baseline achieve similar accuracy and coverage with the oracle baseline performing only slightly better across the three experts. For the AAE biased expert, our model trained with $L_{CE}^1$ achieves 92.91$\pm$0.17 system accuracy, Confidence 92.42$\pm$0.40 and Oracle 93.22$\pm$0.11. This suggests that both approaches are performing optimally in this setting.

\textbf{Racial Bias.} A major concern in  this setting is whether the end to end system consisting of the classifier and expert is discriminatory. We define the discrimination of a predictor as the difference in the false positive rates of AAE tweets versus non AAE tweets where false positives indicate tweets that were flagged as hate speech or offensive when they were not. Surprisingly, the confidence score baseline with the fair expert doubles the discrimination of the overall system compared to the classifier acting on it's own: the classifier has a discrimination of $0.226$ on all the test data, the fair expert a discrimination of $0.03$ while the confidence score baseline has a discrimination of $0.449$. This again reiterates the established fact that fairness does not compose \cite{dwork2018fairness}. In fact, the end-to-end system can be less discriminatory even if the individual components are more discriminatory, for the second expert that has higher error rates on non AAE tweets with discrimination of $0.084$, the discrimination of the confidence score method reduces to $0.151$. While our method does not achieve significantly lower discrimination than the baseline, however integrating fairness constraints for the end-to-end system becomes easier as we can adapt the classifier. Complete experimental results can be found in Appendix \ref{apx:exphate}.


\subsection{Synthetic Experts on CheXpert}\label{subsec:chexpert}
\subsubsection{Setup}
\textbf{Task.} CheXpert is a large chest radiograph dataset that contains over 224 thousand images of 65,240 patients automatically labeled for the presence of 14 observations using radiology reports \cite{irvin2019chexpert}. In addition to the automatically labeled training set, \cite{irvin2019chexpert} make publicly accessible a validation set of 200 patients labeled by a consensus of 3 radiologists and hide a further testing set of 500 patients labeled by 8 radiologists. We focus here on the detection of only the 5 observations that make up the "competition tasks" \cite{irvin2019chexpert}: Atelectasis, Cardiomegaly,  Consolidation, Edema, and Pleural Effusion. This is a multi-task problem, we have 5 separate binary tasks, we will learn to defer on an individual task basis.

\textbf{Expert.} We create a simulated expert as follows: if the chest X-ray contains support devices (the presence of support devices is part of the label) then the expert is correct with probability $p$ on all tasks independently and if the X-ray does not contain support devices, then the expert is correct with probability $q$. We vary $q \in \{0.5,0.7\}$ and  $p \in \{0.7,0.8,0.9,1\}$ to obtain different experts, we let $p\geq q$ as one can think that a patient that has support devices might have a previous medical history that the expert is aware of and can use as side-information.

\textbf{Data.} We use the downsampled resolution version of CheXpert \cite{irvin2019chexpert} and split the training data set with an 80-10-10 split on a patient basis for training, validation and testing respectively, no patients are shared among the splits. Images are normalized and resized to be compatible with pre-trained ImageNet models, we use data augmentation in the form of random resized crops, horizontal flips and random rotations of up to $15\degree$  while training. Note that
a small subset of the training data has an uncertainty label "U" instead of a binary label that implies that the automatic annotator is uncertain, we ignore these points on a task basis while training and testing.

\textbf{Baselines.} We implement two baselines: a threshold confidence baseline that learns a threshold to maximize system AU-ROC on just the confidence of the classifier model to defer (ModelConfidence), this is the post-hoc thresholding method in \cite{madras2018predict}, and the Confidence baseline \cite{raghu2019algorithmic}. We use temperature scaling \cite{guo2017calibration} to ensure calibration of all baselines on the validation set.

\textbf{Model.} Following \cite{irvin2019chexpert}, we use the DenseNet121 architecture for our model with pre-trained weights on ImageNet, the loss for the baseline models is the average of the binary cross entropy for each of the tasks. We train the baseline models using Adam for 4 epochs. For our approach we train for 3 epochs using the cross entropy loss  and then train for one epoch using $L_{CE}^\alpha$ with $\alpha$ chosen to maximize the area under the receiver operating characteristic curve (AU-ROC) of the combined system on the validation set for each of the 5 tasks (each task is treated separately). We also observe similar results if we train for the first three epochs with $L_{CE}^1$ and then train for one epoch with a validated choice of $\alpha$.

\textbf{Experimental setup.} In a clinical setting there might be a cost associated to querying a radiologist, this then imposes a constraint on how often we can query the radiologist i.e. our model's coverage (fraction of examples where algorithm predicts). We constrain our method and the baselines to achieve $c\%$ coverage for $c \in [100]$ to simulate the spectrum between complete automation and none.\\
We achieve this for our method by first sorting the test set based on  $g_{\bot}(x) - \max(g_0(x),g_1(x)):=q(x)$ across all patients $x$ in the test set, then to achieve coverage $c$, we define $\tau = q(x_c)$ where $q(x_c)$ is the $c$'th percentile of the outputs $q(x)$, then we let $r(x) =1 \iff q(x) \geq \tau$. The definition of $\tau$ ensures that we obtain exactly $c\%$ coverage.

For ModelConfidence we achieve this by letting $q(x) = 1- \max(g_0(x),g_1(x))$ ($g$ is the result of a separate trained model than the one for our method), this is the natural classifier's probability of error from the softmax output, and for the Confidence we let $q(x)$ be the difference between the radiologists confidence and  the classifier's confidence.
\subsubsection{Results}
\textbf{Results.} In Figure \ref{fig:auc_vs_cov_toy_orig} we plot the overall system (expert and algorithm combined) AU-ROC for each desired coverage for the methods and in Figure \ref{fig:ap_vs_cov_toy_orig} we plot the overall system area under the precision-recall curve (AU-PR) versus the coverage; this is for the expert with $q=0.7$ and $p=1$. We can see that the curve for our method dominates the baselines over the entire coverage range for both AU-ROC and AU-PR, moreover the curves are concave and we can achieve higher performance by combining expert and algorithm than using both separately. Our method is able to achieve a higher maximum AU-ROC and AU-PR than both baselines: the difference between the maximum attainable AU-ROC of our method and Confidence is 0.043, 0.029, 0.016, 0.022 and 0.025 respectively for each of the five tasks.
There is a clear hierarchy between the 3 compared methods: our method dominates Confidence and Confidence in turn dominates ModelConfidence, in fact ModelConfidence is a special case of the Confidence baseline, since the expert does not have uniform performance over the domain there are clear gains in modeling the expert. 

This hierarchy continues to hold as we change the expert behavior as we vary the probabilities $p$ and $q$, in Table \ref{table:toy_expert} we show for each of the 5 tasks the difference between the average AU-ROC across all coverages (average value of the curves shown in Figure \ref{fig:auc_vs_cov_toy_orig}) for our method and the Confidence baseline for different expert probabilities and the difference between the maximum achievable AU-ROC. A positive average difference serves to show the degree of dominance of our method over the Confidence baseline, note that the difference alone cannot imply dominance of the curves however dominance is still observed. Our method improves on the baselines as the difference between $q$ and $p$ increases, this difference encodes the non-uniformity of the expert behavior over the domain.

\begin{figure}[ht]
\centering
\begin{subfigure}{.9\textwidth}
\centering

      \includegraphics[width=\textwidth,trim={5.0cm 0.0cm 5.0cm 0.0cm}]{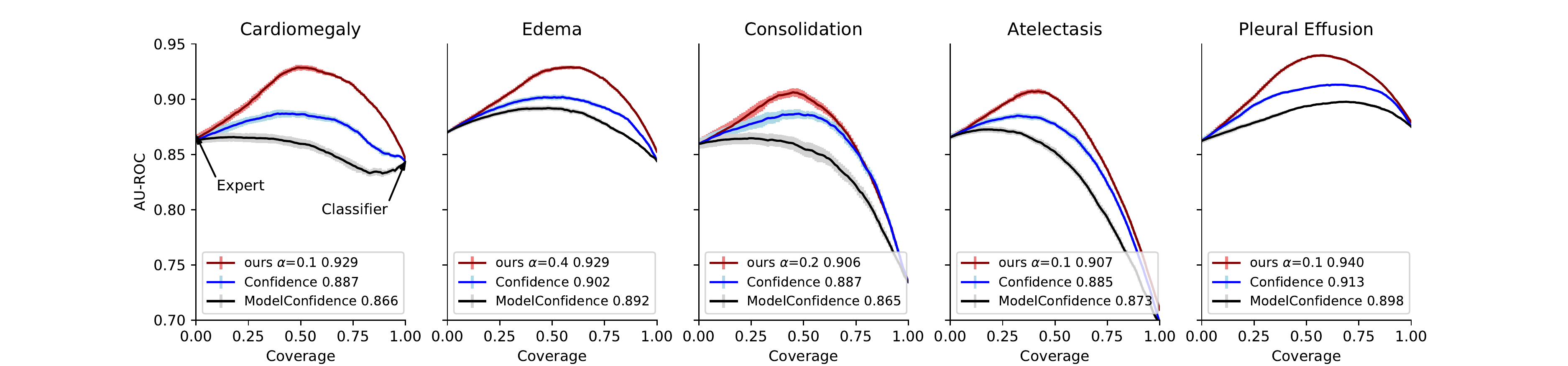}
      \caption{AU-ROC vs coverage for expert $q=0.7,p=1$, maximum AU-ROC is noted.}
      \label{fig:auc_vs_cov_toy_orig}
\end{subfigure}

\begin{subfigure}{.9\textwidth}
\centering
  \includegraphics[width=\textwidth,trim={5.0cm 0.0cm 5.0cm 0.0cm}]{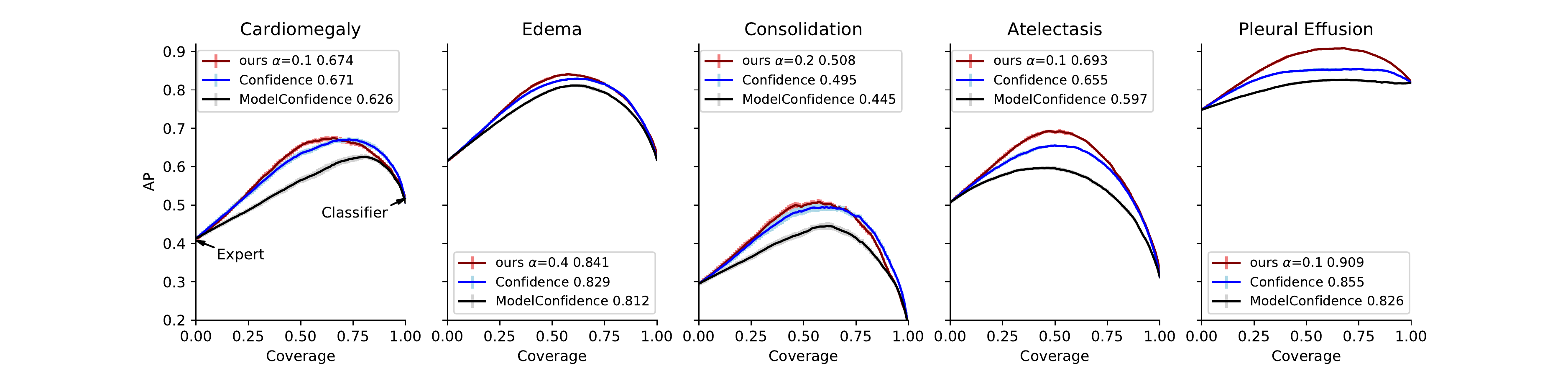}
\caption{AU-PR vs coverage for expert $q=0.7,p=1$, maximum AU-PR is noted.}
      \label{fig:ap_vs_cov_toy_orig}
\end{subfigure}

\caption{Plot of AU-ROC of the ROC curve (a) for each level of coverage (0 coverage means only the expert predicting and 1 coverage is only the classifier predicting) and of the area under the precision-recall curve (AU-PR) (b) for each of the 5 tasks comparing our method with the baselines on the training derived test set for the toy expert with $q=0.7,p=1$. We report the maximum AU-ROC and AU-PR achieved on each task, error bars are standard deviations derived from 10 runs (averaging over the expert's randomness).}
\label{fig:plots_auc_ap_toyexpert}
\end{figure}

\begin{table}[h]
\caption{Average difference in AU-ROC across all coverage and difference between maximum achievable AU-ROC between our method and the Confidence baseline for each of the 5 tasks and different toy expert probabilities $p$ and $q$; each entry is (average difference $\pm$ standard deviation; difference of maximums). The difference between our method and the ModelConfidence is roughly twice the values noted in table \ref{table:toy_expert}, only at Expert $(0.7,0.7)$ does Confidence and ModelConfidence achieve the same performance since the expert has uniform error over the domain.}
\label{table:toy_expert}
\vskip 0.15in
\begin{center}
\begin{small}
\begin{sc}
\resizebox{\textwidth}{!}{
\begin{tabular}{l|ccccc}
\toprule
Expert $(p,q)$ &  \textbf{Cardiomegaly} & \textbf{Edema} & \textbf{Consolidation}  &   \textbf{Atelectasis}  & \textbf{Pleural Effusion}  \\
\midrule
(0.5,0.7) &0.032$\pm$0.024; 0.002  &0.015$\pm$0.012; 0.007    &  0.015$\pm$0.008; 0.007    & 0.017$\pm$0.009; 0.007     & 0.007$\pm$0.003 ;0.007   \\ \hline
(0.5,0.9) & 0.032$\pm$0.017; 0.014 &  0.026$\pm$0.016; 0.024 & 0.010$\pm$0.005; 0.015    &   0.016$\pm$0.008; 0.026   & 0.012$\pm$0.010; 0.004  \\ \hline
(0.5,1)  & 0.022$\pm$0.012; 0.029 & 0.013$\pm$0.009; 0.019  & 0.007$\pm$0.008; 0.012    & 0.013$\pm$0.006; 0.020     & 0.010$\pm$0.008; 0.012   \\ \hline
(0.7,0.7) & 0.024$\pm$0.018; 0.005  & 0.011$\pm$0.009; 0.010  &  0.011$\pm$0.010; 0.009   &  0.006$\pm$0.006; 0.008    &  0.001$\pm$0.001; 0.003  \\ \hline 
(0.7,0.9) & 0.032$\pm$0.020; 0.024   & 0.010$\pm$0.007; 0.010  & 0.007$\pm$0.007; 0.017    &  0.014$\pm$0.008; 0.017    &0.010$\pm$0.006; 0.006   \\ \hline 
(0.7,1) &0.027$\pm$0.014; 0.042  &  0.016$\pm$0.010; 0.027 &  0.007$\pm$0.007; 0.019   &  0.013$\pm$0.007; 0.022    &  0.014$\pm$0.010; 0.027  \\ \hline 
(0.8,1) &  0.017$\pm$0.009; 0.023   & 0.011$\pm$0.008; 0.012   & 0.001$\pm$0.004; 0.007    & 0.012$\pm$0.006; 0.009     & 0.010$\pm$0.006; 0.018 \\ 
\bottomrule
\end{tabular}
}
\end{sc}
\end{small}
\end{center}
\vskip -0.1in
\end{table}

\subsubsection{Further Analysis}

\paragraph{Sample Complexity}
Training data for chest X-rays is a valuable resource that may not be abundantly available when trying to deploy a machine learning model in a new clinical setting where for example the imaging mechanism may differ. It is important to see the effectiveness of the proposed approaches when training data size is limited, this furthermore helps us understand the comparative sample complexity of our method versus the baselines.

\textbf{Experimental details.} We restrict the training data size for our model and baselines while keeping the same validation and testing data as previously; the validation data is used only for calibration of models and optimizing over choice of $\alpha$. We train using the same procedure as before and report the maximum achievable AU-PR and AU-ROC. The expert we defer to is the synthetic expert described above with $q=0.7$ and $p=1$.

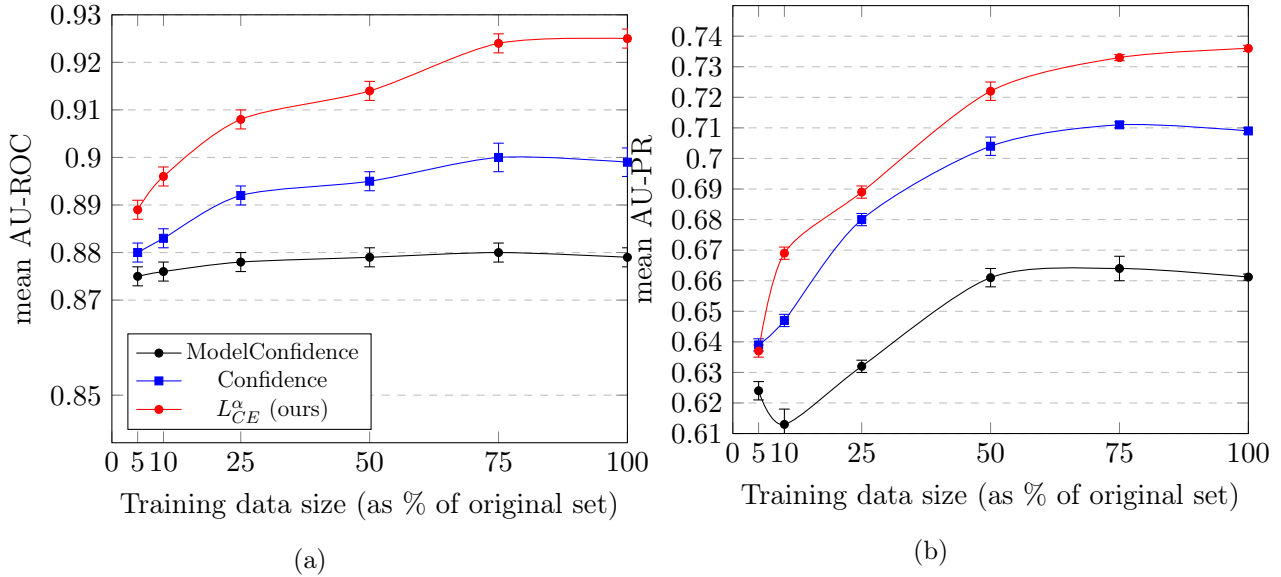
\begin{figure}[h]
\centering
\begin{subfigure}{.5\textwidth}
\centering
\begin{tikzpicture}
\begin{axis}[
    title={},
    xlabel={Training data size (as \% of original set)},
    ylabel={mean AU-ROC},
    xmin=0, xmax=100,
    ymin=0.84, ymax=0.93,
    xtick={0,5,10,25,50,75,100},
  ytick={0.85,0.87,0.88,0.89,0.9,0.91,0.92,0.93,0.95},
    legend pos=south west,
legend style={nodes={scale=0.8, transform shape}},
    ymajorgrids=true,
    grid style=dashed,
]

 \addplot+[
  black, mark options={black, scale=0.75},
  smooth, 
  error bars/.cd, 
    y fixed,
    y dir=both, 
    y explicit
] table [x=x, y=y,y error=error, col sep=comma] {
    x,  y,       error
    5,  0.875,       0.002
    10, 0.876, 0.002
    25, 0.878, 0.002
    50, 0.879, 0.002
    75, 0.88, 0.002
    100, 0.879, 0.002

};
\addlegendentry{ModelConfidence}

 \addplot+[
  blue, mark options={blue, scale=0.75},
  smooth, 
  error bars/.cd, 
    y fixed,
    y dir=both, 
    y explicit
] table [x=x, y=y,y error=error, col sep=comma] {
    x,  y,       error
    5,  0.880,       0.002
    10,  0.883,       0.002
    25,  0.892,       0.002
    50,  0.895,       0.002
    75,  0.9,       0.003
    100,  0.899,       0.003

};
\addlegendentry{Confidence}

 \addplot+[
  red, mark options={red, scale=0.75},
  smooth, 
  error bars/.cd, 
    y fixed,
    y dir=both, 
    y explicit
] table [x=x, y=y,y error=error, col sep=comma] {
    x,  y,       error
    5,  0.889,       0.002
    10,  0.896,       0.002
    25,  0.908,       0.002
    50,  0.914,       0.002
    75,  0.924,       0.002
    100,  0.925,       0.002

};
\addlegendentry{$L_{CE}^\alpha$ (ours)}

\end{axis}
\end{tikzpicture}

\caption{
}
\label{fig:samplecmplx_auc}
\end{subfigure}%
\begin{subfigure}{.5\textwidth}
\centering
\begin{tikzpicture}
\begin{axis}[
       title={},
    xlabel={Training data size (as \% of original set)},
    ylabel={mean AU-PR},
    xmin=0, xmax=100,
    ymin=0.61, ymax=0.75,
    xtick={0,5,10,25,50,75,100},
  ytick={0.61,0.62,0.63,0.64,0.65,0.66,0.67,0.68,0.69,0.7,0.71,0.72,0.73,0.74},
    legend pos=north west,
legend style={nodes={scale=0.8, transform shape}},
    ymajorgrids=true,
    grid style=dashed,
]

 \addplot+[
  black, mark options={black, scale=0.75},
  smooth, 
  error bars/.cd, 
    y fixed,
    y dir=both, 
    y explicit
] table [x=x, y=y,y error=error, col sep=comma] {
    x,  y,       error
    5,  0.624,       0.003
    10,  0.613,       0.005
    25,  0.632,       0.002
    50,  0.661,       0.003
    75,  0.664,       0.004
    100,  0.6612,       0.001
};
\addlegendentry{ModelConfidence}

 \addplot+[
  blue, mark options={blue, scale=0.75},
  smooth, 
  error bars/.cd, 
    y fixed,
    y dir=both, 
    y explicit
] table [x=x, y=y,y error=error, col sep=comma] {
    x,  y,       error
    5,  0.639,       0.002
    10,  0.647,       0.002
    25,  0.680,       0.002
    50,  0.704,       0.003
    75,  0.711,       0.001
    100,  0.709,       0.001
};
\addlegendentry{Confidence}

 \addplot+[
  red, mark options={red, scale=0.75},
  smooth, 
  error bars/.cd, 
    y fixed,
    y dir=both, 
    y explicit
] table [x=x, y=y,y error=error, col sep=comma] {
    x,  y,       error
    5,  0.637,       0.002
    10,  0.669,       0.002
    25,  0.689,       0.002
    50,  0.722,       0.003
    75,  0.733,       0.001
    100,  0.736,       0.001
};
\addlegendentry{$L_{CE}^\alpha$ (ours)}

   \legend{};
\end{axis}
\end{tikzpicture}

\caption{
}
\label{fig:samplecmplx_pr}
\end{subfigure}
\caption{Left figure shows the average of the maximum achievable AU-ROC for the 5 tasks (average over the tasks) when the changing the size of the training data (as a \% of the original set) and right figure shows the same for AU-PR }
\label{fig:samplecmplx_chexpert}
\end{figure}

\textbf{Results.} In Figure \ref{fig:samplecmplx_chexpert} we plot the average of the maximum achievable AU-ROC \ref{fig:samplecmplx_auc} and AU-PR \ref{fig:samplecmplx_pr} across the 5 tasks for the different methods as we vary the the training set size. We observe that our method consistently outperforms the baselines and continues to take advantage of further data as the baselines performance starts to saturate. If we look at the AU-ROC and AU-PR of the expert on deferred examples, we observe negligible differences as the training set size increases for each method, however if we look at classifier performance on the non-deferred examples, we start to observe a significant difference in AU-ROC and AU-PR for our method while the baselines lag behind. In Figure \ref{fig:plots_classauc_ap_toyexpert} (found in Appendix \ref{apx:exp_chexpert}) we plot the classifier AU-ROC on non-deferred examples versus the coverage for each of the 5 tasks, we can see for example on Cardiomegaly, our method at full training data obtains an AU-ROC that is at least 0.2 points greater than that of ModelConfidence at coverage levels less than 50\%. One expects  ModelConfidence to achieve the best performance when looking at non-deferred examples, and this in fact is true when we look at accuracy, however for AU-ROC, what happens is that the ModelConfidence baseline never defers on negative predicted examples due to the class imbalance which makes the model very confident in it's negative predictions. Thus, any positive labeled example that the model mistakenly labels as negative with high confidence will cause the AU-ROC to be reduced at low coverage levels. This also allows us to see that our method, and to an extent the Confidence baseline, make very different deferral decisions that factor in the expert.

\paragraph{Impact of input noise} In our previous experimental setup, the input domain of the classifier $\mathcal{X}$, the chest X-ray, is assumed to be sufficient to perfectly predict the label $Y$ as our golden standard is the prediction of expert radiologists from just looking at the X-ray. Therefore, given enough training data and a sufficiently rich model class, a learned classifier from $\mathcal{X}$ will be able to perfectly predict the target and won't need to defer to any expert to achieve better performance. In this set of experiments, we perform two studies: the first we hide the left part of the chest X-ray on both training and testing examples to obtain a new input domain $\tilde{\mathcal{X}}$. This now limits the power of any learned classifier even in the infinite data regime as the left part of the X-ray may hide crucial parts of the input. 
Figure \ref{fig:noisy_xray} shows this noise applied to a patient's X-ray, the size of the rectangular region was chosen to cover one side of the chest area, we later experiment with varying the scale of the noise.
In the second experiment, we train with the original chest X-rays but evaluate with noisy X-rays with noise in the form of erasing a randomly placed rectangular region of the X-ray. This second experiment is meant to the illustrate the robustness of the different methods to input noise. 

\begin{figure}
\centering
\begin{subfigure}{.5\textwidth}
\centering
\includegraphics[scale=0.7,trim={10.0cm 1.0cm 10.0cm 1.0cm}]{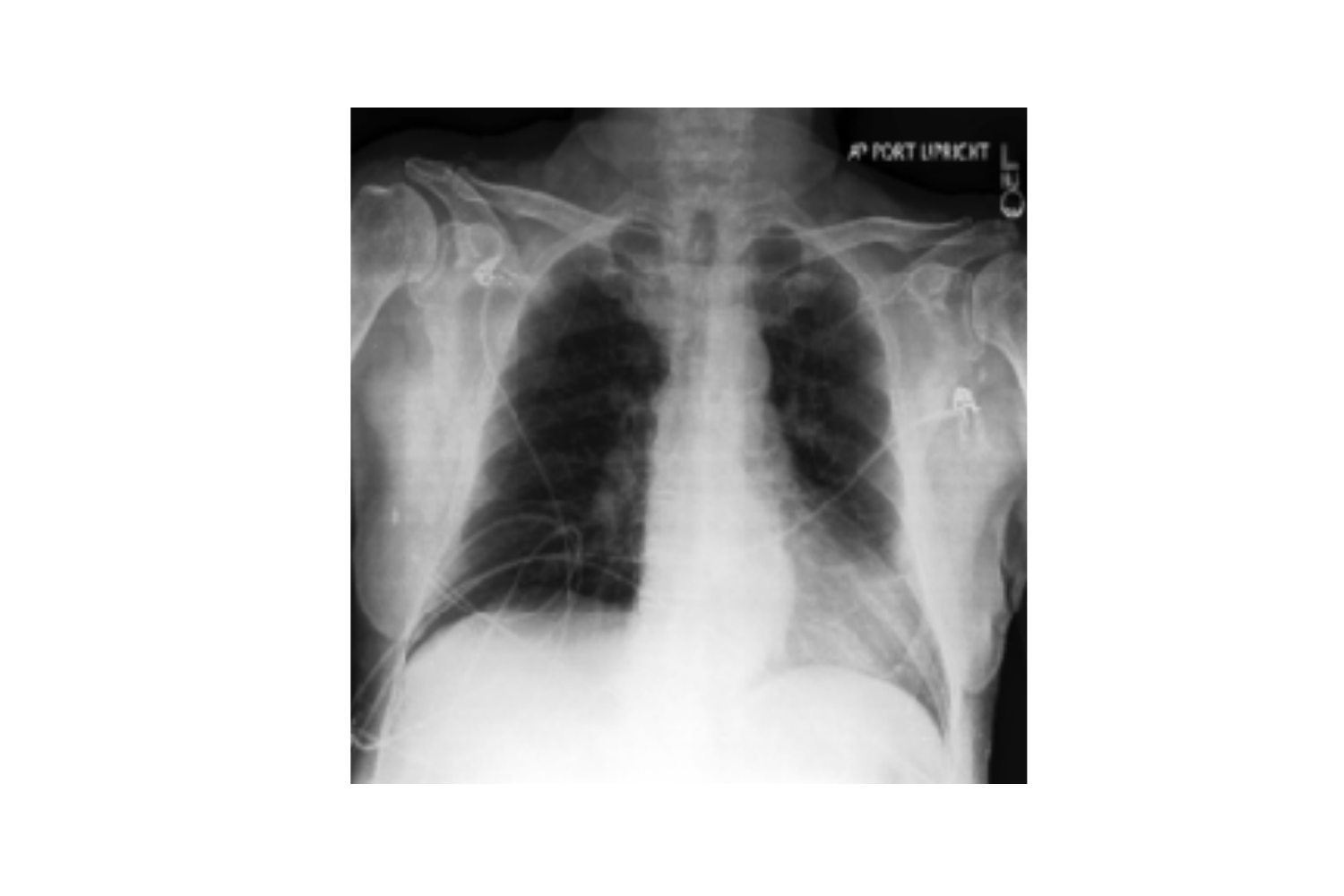}
\caption{Original X-ray
}
\label{fig:kvssystem}
\end{subfigure}%
\begin{subfigure}{.5\textwidth}
\centering
\includegraphics[scale=0.7,trim={10.0cm 1.0cm 10.0cm 1.0cm}]{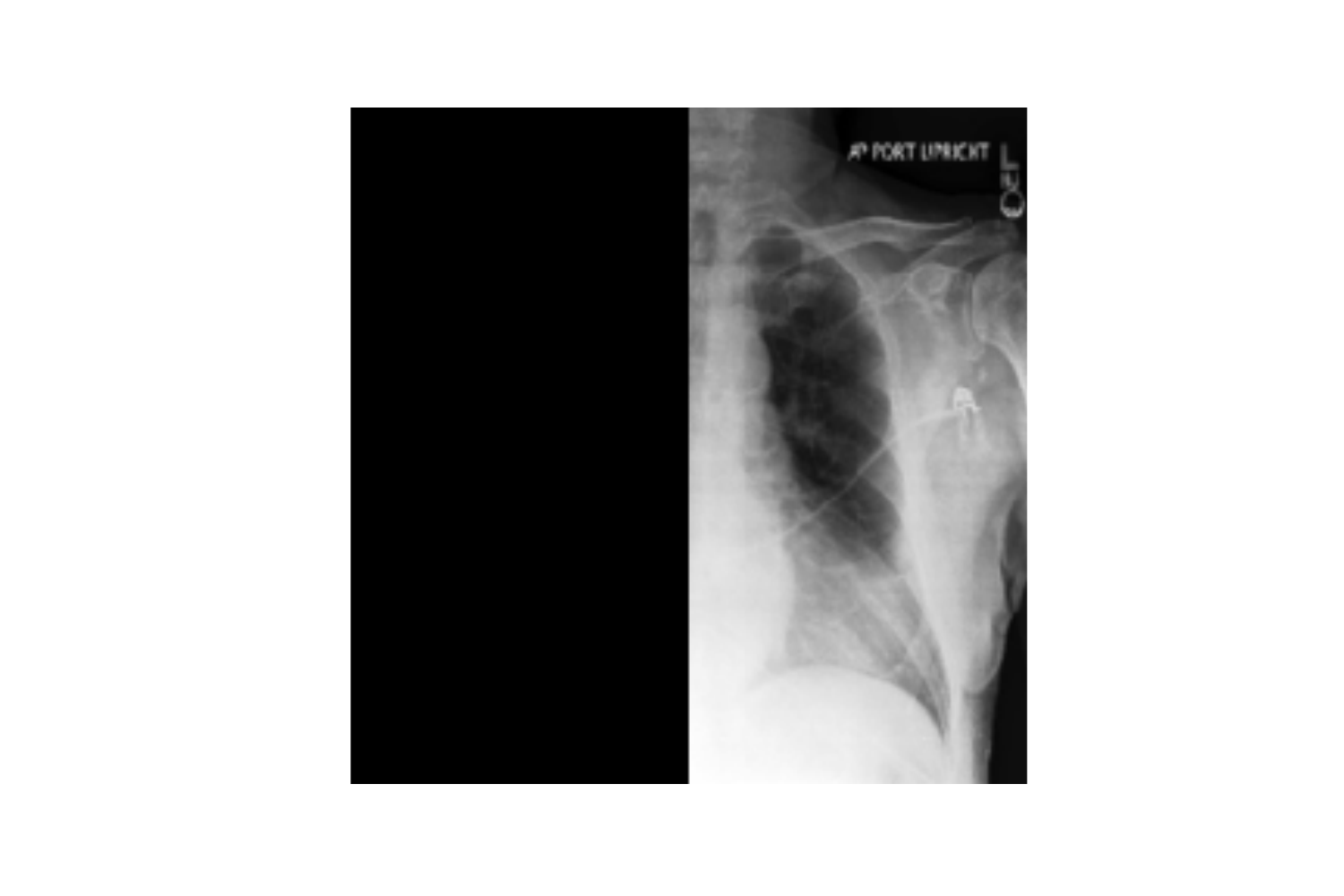}

\caption{X-ray with hidden left part
}
\label{fig:covvsacc_hidden}
\end{subfigure}
\caption{Left figure (a) shows the chest X-ray of a patient with Cardiomegaly, the right figure (b) shows that same X-ray but now with the left part hidden which is used as input to the models.  }
\label{fig:noisy_xray}
\end{figure}

\textbf{Experimental details.} The noise in the second set of experiments consists of a 2:1 (height:width) randomly located rectangular region of scale (area) that we vary from $0.1$ to $0.66$. The expert is the synthetic expert model with $q=0.7$ and $p=1$.

\textbf{Results.} In Figure \ref{fig:plots_auc_ap_toyexpert_left} we plot the AU-ROC and AU-PR of the different methods as we vary coverage when training and testing while hiding the left section of the X-rays. We can first observe that the maximum achievable performance for the different methods is significantly reduced, however the gap between the different methods is still observed. In Figure \ref{fig:noise_chexpert} we plot the 
average maximum AU-ROC and AU-PR across the 5 tasks as we vary the area of the rectangular region. While the performance of all the methods degrade with the scale of the noise, the gap between the methods remains constant in terms of AU-PR but diminishes in terms of AU-ROC as the performance of the baselines remains steady.

\begin{figure}[h]
\centering
\begin{subfigure}{.5\textwidth}
\centering
\begin{tikzpicture}
\begin{axis}[
    title={},
    xlabel={Noise scale},
    ylabel={mean AU-ROC},
    xmin=-0.1, xmax=0.7,
    ymin=0.84, ymax=0.93,
    xtick={0,0.1,0.2,0.33,0.5,0.66},
  ytick={0.85,0.87,0.88,0.89,0.9,0.91,0.92,0.93,0.95},
    legend pos=south west,
legend style={nodes={scale=0.8, transform shape}},
    ymajorgrids=true,
    grid style=dashed,
]

 \addplot+[
  black, mark options={black, scale=0.75},
  smooth, 
  error bars/.cd, 
    y fixed,
    y dir=both, 
    y explicit
] table [x=x, y=y,y error=error, col sep=comma] {
    x,  y,       error
    0,  0.879,       0.002
    0.1, 0.878, 0.002
    0.2, 0.873, 0.002
    0.33, 0.870, 0.002
    0.5, 0.868, 0.002
    0.66, 0.868, 0.002

};
\addlegendentry{ModelConfidence}

 \addplot+[
  blue, mark options={blue, scale=0.75},
  smooth, 
  error bars/.cd, 
    y fixed,
    y dir=both, 
    y explicit
] table [x=x, y=y,y error=error, col sep=comma] {
    x,  y,       error
    0,  0.896,       0.002
    0.1, 0.891, 0.002
    0.2, 0.884, 0.002
    0.33, 0.877, 0.002
    0.5, 0.871, 0.002
    0.66, 0.871, 0.002
};
\addlegendentry{Confidence}

 \addplot+[
  red, mark options={red, scale=0.75},
  smooth, 
  error bars/.cd, 
    y fixed,
    y dir=both, 
    y explicit
] table [x=x, y=y,y error=error, col sep=comma] {
    x,  y,       error
    0,  0.922,       0.002
    0.1, 0.914, 0.002
    0.2, 0.903, 0.002
    0.33, 0.893, 0.002
    0.5, 0.881, 0.002
    0.66, 0.875, 0.002

};
\addlegendentry{$L_{CE}^\alpha$ (ours)}

\end{axis}
\end{tikzpicture}

\caption{
}
\label{fig:noise_auc}
\end{subfigure}%
\begin{subfigure}{.5\textwidth}
\centering
\begin{tikzpicture}
\begin{axis}[
       title={},
    xlabel={Noise scale},
    ylabel={mean AU-PR},
    xmin=-0.1, xmax=0.7,
    ymin=0.55, ymax=0.75,
    xtick={0,0.1,0.2,0.33,0.5,0.66},
  ytick={0.55,0.57,0.59,0.61,0.63,0.65,0.67,0.69,0.71,0.73,0.74},
    legend pos=north west,
legend style={nodes={scale=0.8, transform shape}},
    ymajorgrids=true,
    grid style=dashed,
]

 \addplot+[
  black, mark options={black, scale=0.75},
  smooth, 
  error bars/.cd, 
    y fixed,
    y dir=both, 
    y explicit
] table [x=x, y=y,y error=error, col sep=comma] {
    x,  y,       error
    0,  0.6612,       0.002
    0.1, 0.648, 0.002
    0.2, 0.612, 0.002
    0.33, 0.595, 0.002
    0.5, 0.572, 0.002
    0.66, 0.560, 0.002
};
\addlegendentry{ModelConfidence}

 \addplot+[
  blue, mark options={blue, scale=0.75},
  smooth, 
  error bars/.cd, 
    y fixed,
    y dir=both, 
    y explicit
] table [x=x, y=y,y error=error, col sep=comma] {
    x,  y,       error
    0,  0.701,       0.002
    0.1, 0.682, 0.002
    0.2, 0.656, 0.002
    0.33, 0.619, 0.002
    0.5, 0.589, 0.002
    0.66, 0.566, 0.002
};
\addlegendentry{Confidence}

 \addplot+[
  red, mark options={red, scale=0.75},
  smooth, 
  error bars/.cd, 
    y fixed,
    y dir=both, 
    y explicit
] table [x=x, y=y,y error=error, col sep=comma] {
    x,  y,       error
    0,  0.736,       0.002
    0.1, 0.702, 0.002
    0.2, 0.678, 0.002
    0.33, 0.654, 0.002
    0.5, 0.616, 0.002
    0.66, 0.580, 0.002
};
\addlegendentry{$L_{CE}^\alpha$ (ours)}

   \legend{};
\end{axis}
\end{tikzpicture}

\caption{
}
\label{fig:noise_pr}
\end{subfigure}
\caption{Left figure shows the average of the maximum achievable AU-ROC for the 5 tasks (average over the tasks) when the changing the scale of the noise (size of rectangular region) and right figure shows the same for AU-PR. }
\label{fig:noise_chexpert}
\end{figure}
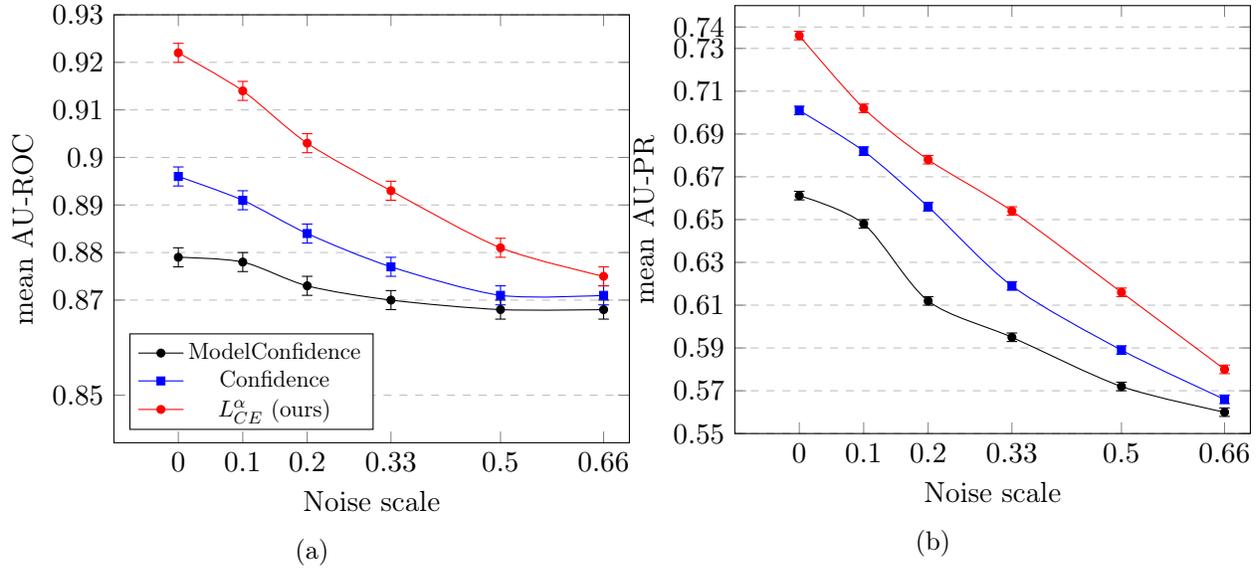

\begin{figure}[H]
\centering
\begin{subfigure}{.9\textwidth}
\centering

      \includegraphics[width=\textwidth,trim={5.0cm 0.0cm 5.0cm 0.0cm}]{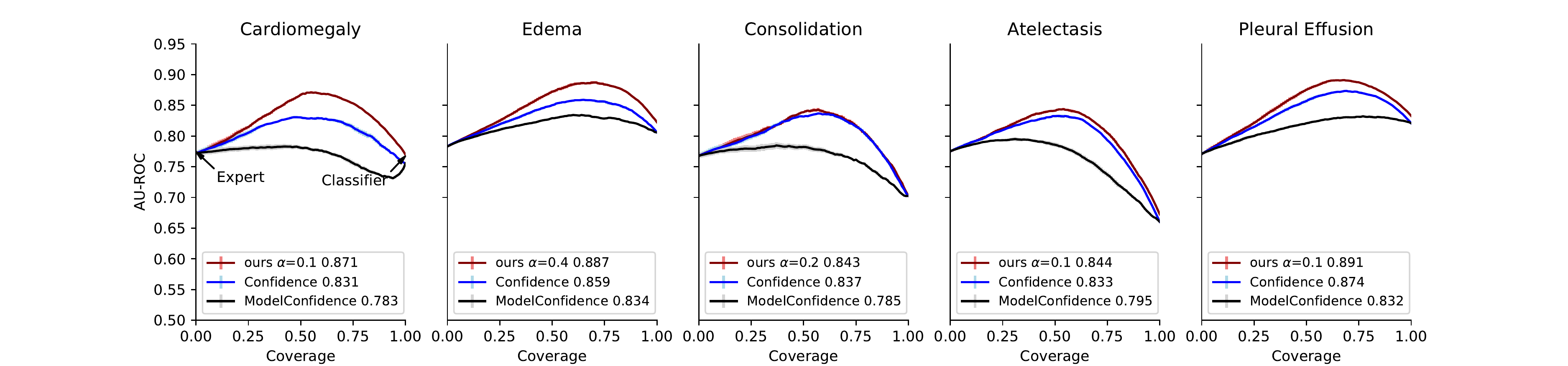}
      \caption{AU-ROC vs coverage when hiding left part of X-ray.}
      \label{fig:auc_vs_cov_toy_left}
\end{subfigure}

\begin{subfigure}{.9\textwidth}
\centering
  \includegraphics[width=\textwidth,trim={5.0cm 0.0cm 5.0cm 0.0cm}]{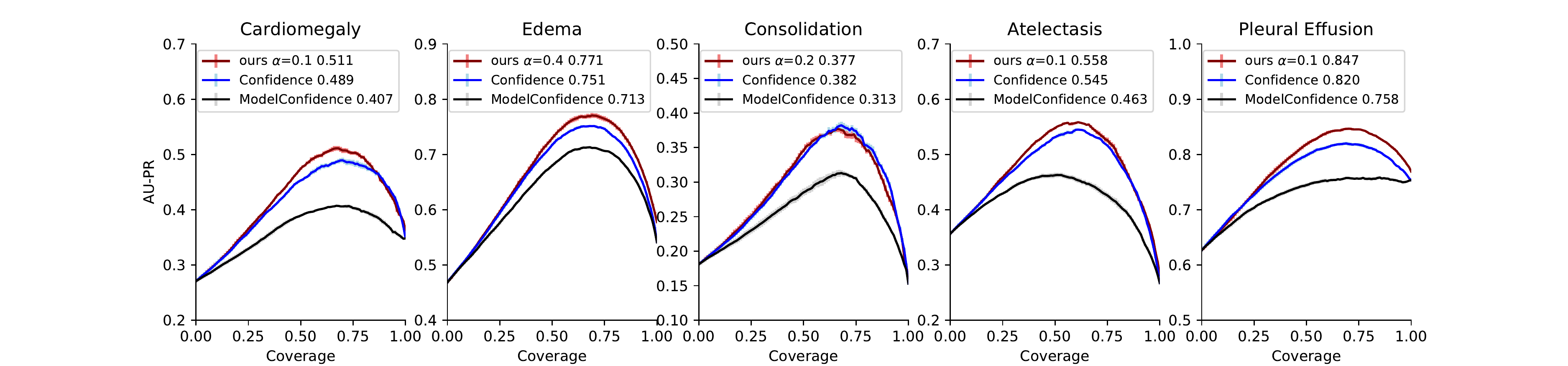}
\caption{AU-PR vs coverage when hiding left part of X-ray.}
      \label{fig:ap_vs_cov_toy_left}
\end{subfigure}

\caption{Plots of AU-ROC and AU-PR as we vary coverage when training and testing with chest X-rays that have their left section hidden. The expert model is $q=0.7$ and $p=1$.}
\label{fig:plots_auc_ap_toyexpert_left}
\end{figure}

\section{Conclusion}
In this work we explored a framework where the learning model can choose to defer to an expert or predict. We analyzed the framework theoretically and proposed a novel surrogate loss via a reduction to multiclass cost sensitive learning. Through experiments on image and text classification tasks, we showcased that our approach not only achieves better accuracy than confidence score baselines but does so with better sample complexity and computational cost. We hope that our method will inspire machine learning practitioners to integrate downstream decision makers into their learning algorithms. Future work will explore how to defer in settings where we have limited expert data,  learning from biased expert data and learning to defer to multiple experts simultaneously.

\section*{Acknowledgements}
This work was supported by NSF CAREER award $\#1350965$.

\bibliographystyle{alpha}
\bibliography{ref}
\newpage
\appendix

\section{Practitioner's guide to our approach}\label{apx:guide}
\subsection{General implementation}
Given a dataset of tuples $S = \{(x_i,y_i,m_i) \}_{i=1}^n$ where $x_i$ represents the covariates, $y_i$ is the target and $m_i$ are the expert labels, we want to  construct a classifier $h: \mathcal{X} \to \mathcal{Y}$ and rejector function $r: \mathcal{X} \to \{-1,1 \}$. Our method for predicting on a new example $x \in \mathcal{X}$ given expert context $z \in \mathcal{Z}$ that only the expert can observe, a function class $\mathcal{H}$ where $h \in \mathcal{H}:\mathcal{X} \to \mathbb{R}^{|\mathcal{Y}|+1}$ (an example would be the set of deep networks with $|\mathcal{Y}|+1$ output units) , and an expert $M: \mathcal{Z} \to \mathcal{Y}$ is summarized below in Algorithm \ref{alg:our_method}.

\begin{algorithm}[H]
	\DontPrintSemicolon 
	\SetAlgoLined
	\textbf{Input}: training data $S = \{(x_i,y_i,m_i) \}_{i=1}^n$, function class $\mathcal{H}$, example $x$, Expert $M$ and expert input $z$ \\
    $g_1, \cdots, g_{|\mathcal{Y}|},g_\bot \gets \arg \min_{\mathbf{g} \in \mathcal{H}} \sum_{i \in S} L_{CE}^\alpha(\mathbf{g},x_i,y_i,m_i)$ \\
     prediction $= 0$ \\
     $r(x) \gets \textrm{sign}(-\max_{y \in \mathcal{Y}}g_y(x) + g_\bot(x))$ \\
    \eIf{$r(x)=0$}{
      $h(x) \gets \arg\max_{y \in \mathcal{Y}} g_y(x)$ \\ 
      prediction  $\gets h(x)$
      }{
      $m \gets M(z)$ (expert query)\\
      prediction $\gets m$
      }
     \textbf{Return}: prediction
	\caption{Our proposed method for prediction on a new example $x \in \mathcal{X}$ with expert input $z \in \mathcal{Z}$}
	\label{alg:our_method}
\end{algorithm}
Where the loss $L_{CE}^\alpha$ used in algorithm is the following:
\begin{align*}
  L_{CE}^\alpha(h,r,x,y,m)=& -( \alpha \cdot \bI_{m = y} + \bI_{m \neq y} )\log\left(\frac{\exp(g_{y}(x))}{\sum_{y' \in \mathcal{Y} \cup \nonumber \bot}\exp(g_{y'}(x))} \right) \\&-  \bI_{m = y} \log\left(\frac{\exp(g_{\bot}(x))}{\sum_{y' \in \mathcal{Y} \cup \bot}\exp(g_{y'}(x))} \right) 
\end{align*}
Practically, integrating an expert decision maker into a machine learning model amounts to two modifications in training:
increasing the output size of the function class in consideration by an additional output unit  representing deferral and training with the loss $L_{CE}^\alpha$ instead of the cross entropy loss. We show how to implement $L_{CE}^\alpha$ in PyTorch below:
\begin{lstlisting}[language=Python]
def deferral_loss_L_CE(outputs, target, expert, k_classes, alpha):
    '''
    outputs: model outputs
    target: target labels
    expert: expert agreement labels for batch
    k_classes: cardinality of target Y
    '''
    batch_size = outputs.size()[0]
    defer_position = 
    outputs = torch.nn.functional.softmax(outputs, dim=1)
    loss = -expert*torch.log2(outputs[range(batch_size),k_classes]) 
           - (alpha*expert + (1-expert)) *
           torch.log2(outputs[range(batch_size), labels])  
    return torch.sum(loss)/batch_size
    
\end{lstlisting}

\subsection{Choice of $\alpha$}
The choice of the hyperparameter $\alpha$ has sizable influence on system performance. Naive validation over  $\alpha$ requires re-training on the entire training set from scratch over the search space. We find that a simple validation strategy often works as well as re-training from scratch especially in scenarios where there is little gain in adapting to the expert but there are major gains in being able to defer correctly. 

The strategy first requires splitting the training set into two sets $S_{T1}$ and $S_{T2}$ where $S_{T1}$ is larger than $S_{T2}$ (e.g. an 80-20 split), access to a validation set $S_V$ and a set of possible values $\mathcal{A}$ for $\alpha$ (an evenly spaced grid over $[0,10]$ is more than sufficient). The strategy then proceeds in two steps:
\begin{itemize}
    \item \textbf{Step 1:}Train on $S_{T1}$ with $L_{CE}^1$ (i.e. setting $\alpha =1$) to maximize system performance on $S_V$. One may find more success instead training on $S_{T1}$ with the \emph{cross-entropy loss} (however with the model having an extra output) to maximize \emph{classifier} performance on $S_V$ rather than system performance. Call the resulting model of this first step $M_1$
    \item \textbf{Step 2:} For each $\alpha \in \mathcal{A}$, fine-tune on $S_{T2}$ starting from model $M_1$ to maximize system performance on $S_V$ measuring it with the rejector $r(x) = \mathbb{I}\{-\max_{y \in \mathcal{Y}}g_y(x) + g_\bot(x) \geq \tau\} $ where the threshold $\tau$ is chosen to maximize performance on $S_V$ post-hoc. The resulting model $M_1'$ and $\tau^*$ that obtains best system performance across all choices of $\alpha$ and choices of $\tau$ is the final model. \\
    \item \textbf{Inference time:} Use the rejector defined by $r(x) = \mathbb{I}\{-\max_{y \in \mathcal{Y}}g_y(x) + g_\bot(x) \geq \tau^* \} $ and proceed as in Algorithm \ref{alg:our_method}.
\end{itemize}

\textit{Note} that system performance here refers to metrics measured with respect to the machine+expert system with deferral while classifier performance refers to metrics measured as if the system never deferred. 
\section{Experimental Details and Results}\label{apx:experiments}
All experiments were run on a Linux system with an NVIDIA Tesla K80 GPU on PyTorch 1.4.0.

\subsection{CIFAR-10}\label{apx:cifar10}
\textbf{Implementation Details.}
We employ the implementation in \url{https://github.com/xternalz/WideResNet-pytorch} for the Wide Residual Networks. To train, we run SGD with an initial learning rate of 0.1, Nesterov momentum at 0.9 and weight decay of 5e-4 with a cosine annealing learning rate schedule \cite{loshchilov2016sgdr}. We train for a total of 200 epochs for all experiments, at this point the network has perfectly fit the training set, we found that early stopping based on a validation set did not make any difference and similarly training for more than 200 epochs also did not hurt test accuracy. 

\textbf{Expert Accuracy.} In Table \ref{table:cifar10-expert-acc} we show the accuracy of the expert on the deferred examples versus the classes the expert can predict $k$. We can see that our method $L_{CE}^{.5}$ has higher expert accuracy than all other baselines except at $k=1,2$ where coverage is very high. This contrasts with Figure \ref{fig:covvsacc} that shows the classifier accuracy on non-deferred accuracy where $L_{CE}^{.5}$ had lower accuracy for each expert level compared to Confidence and $L_{CE}^1$. Hence there is a clear trade-off between choosing the hyper-parameter $\alpha <1$ and $\alpha=1$. For $\alpha<1$, the model will prefer to always defer to the expert if it is correct, this is advantageous in this setup as the expert is perfect on a subset of the data and uniformly random on the other. However, for $\alpha=1$, the model will compare the confidence of the expert and the model essentially performing the computation of the Bayes rejector $r^B$ as shown by the consistency of the loss $L_{CE}^1$; note that for $\alpha \neq 1$ the loss $L_{CE}$ is no longer consistent.

\begin{table}[H]
\caption{Accuracy of the expert on deferred examples shown for the methods and baselines proposed with varying expert competence (k) on CIFAR-10.}
\label{table:cifar10-expert-acc}
\vskip 0.15in
\begin{center}
\begin{small}
\begin{sc}
\resizebox{\textwidth}{!}{
\begin{tabular}{lcccccccccr}
\toprule
Method /\ Expert (k) & 1  & 2 & 3&4& 5&6&7&8&9&10  \\
\midrule
$L_{CE}^1$  & 73.65 & 86.01 &73.66 &87.41 &88.81 &94.7 &96.67 &98.72 &98.65 &100 \\
$L_{CE}^{.5}$  & 86.44 & 90.96 &\textbf{92.65} &\textbf{91.67} & \textbf{93.71} &\textbf{96.32} &\textbf{97.61} &98.77 &\textbf{99.24} &\textbf{100} \\
Confidence  & \textbf{87.5} & \textbf{92.74 }&88.88 &88.3 &92.8 & 94.56 &96.76 &\textbf{98.89} &98.89 &100 \\
OracleReject  & 85.3 & 90.49 &88.23 &91.13 &89.33 &93.61 &95.45& 96.82 &98.45 &100 \\
\bottomrule
\end{tabular}}
\end{sc}
\end{small}
\end{center}
\vskip -0.1in
\end{table}

\textbf{Increasing data size.} In table \ref{table:cifar10-increasing-data} we show the accuracy of the classifier and the coverage of the system for our method compared to the baseline Confidence for expert $k=5$. We can see that when data is limited, our method retains high classification accuracy for the classifier versus the baseline. This is due in fact to the low coverage of our method compared to Confidence, as data size grows the coverage our method increases as now the classifier's performance improves and the system can now safely defer to it more often. On the other hand, the baseline remains at almost constant coverage, not adapting to growing data sizes.

\begin{table}[H]
\caption{Accuracy of the classifier on non-deferred examples shown for our method $L_{CE}^1$ and baseline Confidence with varying training set size for expert $k=5$ on CIFAR-10.}
\label{table:cifar10-increasing-data}
\vskip 0.15in
\begin{center}
\begin{small}
\begin{sc}
\resizebox{\textwidth}{!}{
\begin{tabular}{lcccccccr}
\toprule
Method /\ Data size (thousands) & 1  & 2 & 3&5& 8&10&20&50  \\
\midrule
$L_{CE}^1$ (classifier) & \textbf{62.84} & \textbf{71.51} & \textbf{72.63} & \textbf{75.03 }& 80.1 &\textbf{82.11} & 86.44 & \textbf{95.42} \\
Confidence (classifier) & 50.31 & 59 & 66.3 &70.12 &\textbf{80.33 }&78.67 & \textbf{87.01} & 92.45 \\
\midrule
$L_{CE}^1$ (coverage) & 25.7 & 35.87 & 40.42 &49.62 &46.38 &46.51 & 50 &71.35 \\
Confidence (coverage) & \textbf{69.32} &\textbf{ 72.93} & \textbf{71.99} & \textbf{75.05} & \textbf{73.09} & \textbf{65.9 }& \textbf{74.16} & \textbf{72.12} \\
\bottomrule
\end{tabular}}
\end{sc}
\end{small}
\end{center}
\vskip -0.1in
\end{table}

\subsection{CIFAR-10H}
\textbf{Class-wise Accuracy of Expert.} Table \ref{table:cifar10h-expertacc} shows the average accuracy of the synthetic \texttt{CIFAR10H} \cite{peterson2019human} expert on each of the 10 classes. We can see that the expert has very different accuracies for the classes which gives an opportunity for an improvement.

\textbf{Results.} Table \ref{table:cifar10h-complete} shows full experimental results for the CIFAR-10H results.

\begin{table}[H]
\caption{Accuracy of the \texttt{CIFAR10H} \cite{peterson2019human} expert on each of the 10 classes }
\label{table:cifar10h-expertacc}
\vskip 0.15in
\begin{center}
\begin{small}
\begin{sc}
\resizebox{\textwidth}{!}{
\begin{tabular}{lcccccccccc}
\toprule
Class  & 1  & 2 & 3&4&5&6&7&8&9&10 \\
\midrule
Accuracy &95.15&97.23&94.75&91.58&90.51&94.90&96.22&97.91&97.33&96.74\\
\bottomrule
\end{tabular}}
\end{sc}
\end{small}
\end{center}
\vskip -0.1in
\end{table}

\begin{table}[H]
\caption{Complete results of table \ref{table:cifar10h} comparing our proposed approaches and baseline.}
\label{table:cifar10h-complete}
\vskip 0.15in
\begin{center}
\begin{small}
\begin{sc}

\begin{tabular}{lcccr}
\toprule
Method & System Accuracy  & Coverage & Classifier Accuracy & Expert Accuracy   \\
\midrule
$L_{CE}$ impute   & \textbf{96.29}$\pm$0.25 & 51.67$\pm$1.46 &\textbf{ 99.2} $\pm$ 0.08 &\textbf{93.18 }$\pm$ 0.48 \\
$L_{CE}$ 2-step   & 96.03$\pm$0.21 & 60.81$\pm$0.87 & 98.11 $\pm$ 0.22 &92.77 $\pm$ 0.58\\
Confidence    & 95.09$\pm$0.40 & \textbf{79.48}$\pm$5.93 & 96.09 $\pm$ 0.42 &90.94 $\pm$ 1.34\\
\bottomrule
\end{tabular}

\end{sc}
\end{small}
\end{center}
\vskip -0.1in
\end{table}
\subsection{CIFAR-100}\label{apx:expcifar100}
\textbf{Results.} In figure \ref{fig:cifar100-kvssystem} we plot the accuracy of the combined algorithm and expert system versus $k$, the number of classes the expert can predict. We can see that our method dominates the baseline over all k. In table \ref{table:cifar100-all-results} we show expert, classifier and system accuracy along with coverage of both methods. Our approach $L_{CE}^1$ obtains both better expert and classifier accuracy however gets lower coverage than Confidence.

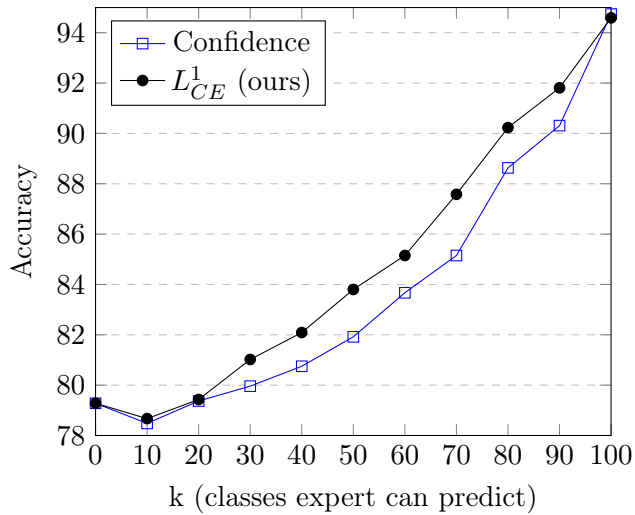
\begin{figure}[H]
\centering
\begin{tikzpicture}
\begin{axis}[
    title={},
    xlabel={k (classes expert can predict)},
    ylabel={Accuracy},
    xmin=0, xmax=100,
    ymin=78, ymax=95,
    xtick={0,10,20,30,40,50,60,70,80,90,100},
  ytick={78,80,82,84,86,88,90,92,94},
    legend pos=north west,
    ymajorgrids=true,
    grid style=dashed,
]
 
\addplot[
    color=blue,
    mark=square,
    ]
    coordinates {
    (0,79.28)(10,78.48)(20,79.37)(30,79.96)(40,80.75)(50,81.92)(60,83.67)(70,85.15)(80,88.63)(90,90.31)(100,94.74)
    };
    \addlegendentry{Confidence}
    
        \addplot[
    color=black,
    mark=*,
    ]
    coordinates {
    (0,79.28)(10,78.67)(20,79.43)(30,81.02)(40,82.09)(50,83.8)(60,85.15)(70,87.58)(80,90.23)(90,91.81)(100,94.59)
    };
    \addlegendentry{$L_{CE}^{1}$ (ours)}

\end{axis}
\end{tikzpicture}

\caption{Comparison of the developed method  $L_{CE}^1$ on CIFAR-100 versus the confidence baseline. k is the number of classes the expert can predict}
\label{fig:cifar100-kvssystem}
\end{figure}
\begin{table}[H]
\caption{Accuracy of the expert on deferred examples shown for the methods and baselines proposed with varying expert competence (k) on CIFAR-100.}
\label{table:cifar100-all-results}
\vskip 0.15in
\begin{center}
\begin{small}
\begin{sc}
\resizebox{\textwidth}{!}{
\begin{tabular}{lcccccccccr}
\toprule
Method /\ Expert (k) & 10  & 20 & 30&40& 50&60&70&80&90&100  \\
\midrule
$L_{CE}^1$ (system)  & \textbf{78.67} & \textbf{79.43} &\textbf{81.02} &\textbf{82.09 }&\textbf{83.8} &\textbf{85.15} &\textbf{87.58} &\textbf{90.23} &\textbf{91.81 }&94.59 \\
Confidence (system)  &78.48 & 79.37 & 79.67 &80.75 &81.92 & 83.67 & 85.15 & 88.63 & 90.31 & \textbf{94.74} \\
\midrule
$L_{CE}^1$ (coverage)  & 89.19 & 82.44 & 84.79 &71.66 & 74.52 & 65.72 & 62.23 &59.37 &52.15 &49.07 \\
Confidence (coverage) & \textbf{99.17} & \textbf{95.47}& \textbf{93.96} & \textbf{86.64} & \textbf{86.71 }& \textbf{80.67} & \textbf{79.56} & \textbf{75.36 }&\textbf{ 72.39} & \textbf{63.32} \\
\midrule
$L_{CE}^1$ (classifier) & \textbf{82.35} & \textbf{84.03} & \textbf{84.07} & \textbf{85.29} &\textbf{86.44} &\textbf{87.78} &\textbf{90.13} &\textbf{91.89} &\textbf{92.4} &94.59\\
Confidence (classifier) & 78.99& 80.66&81.79 &84.75 &84.62 & 87.30 &88.75 & 90.97 &92.07 &\textbf{94.97} \\
\midrule
$L_{CE}^1$ (expert) & \textbf{47.36} & \textbf{57.8} &\textbf{68.87} &\textbf{73.99} &\textbf{76.06} &\textbf{79.65} &\textbf{83.37} &\textbf{87.79} &\textbf{91.16} &\textbf{94.57} \\
Confidence (expert) & 18.07 & 52.09 & 51.49 &54.79 &64.4 & 68.55 & 71.13 & 82.11 &85.70 &94.30 \\
\bottomrule
\end{tabular}}
\end{sc}
\end{small}
\end{center}
\vskip -0.1in
\end{table}

\subsection{Hate Speech experiments}\label{apx:exphate}
\textbf{Implementation details.} We train all models with Adam for 15 epochs and select the best performing model on the validation set.

\textbf{Results.} Table \ref{table:hatespeech-all-results} shows complete results of our method, baselines, expert and classifier. The performance of our method and the baselines all achieve comparable results.

\begin{table}[H]
\caption{Detailed results for our method and baselines on the hate speech detection task \cite{davidson2017automated}. sys: system accuracy, class: classifier accuracy, disc: system discrimination, AAE-biased: Expert 2 that has higher error rate for AAE group, non-AAE biased: Expert 3 that has higher error for non AAE tweets }
\label{table:hatespeech-all-results}
\vskip 0.15in
\begin{small}
\begin{sc}
\resizebox{\textwidth}{!}{
\begin{tabular}{lccc|ccr}
\toprule
Method/Expert &  & Fair &  &  &  AAE-biased  &   \\
\midrule
 & sys & class & disc & sys & class & disc   \\
\midrule
$L_{CE}^1$ (ours) & 93.36 $\pm$ 0.16 & \textbf{95.60}  $\pm$ 0.44 & \textbf{0.294}  $\pm$0.03 & 92.91 $\pm$ 0.17 &\textbf{ 94.67} $\pm$ 0.61 & \textbf{0.37} $\pm$ 0.06  \\
Confidence & 93.22  $\pm$0.11 & 94.49 $\pm$ 0.12 & 0.45 $\pm$ 0.02 & 92.42 $\pm$ 0.40 & 94.56 $\pm$ 0.40 & 0.41 $\pm$ 0.02 \\
Oracle & \textbf{93.57}  $\pm$0.11 & 94.87  $\pm$0.22 & 0.32  $\pm$0.02 & \textbf{93.22}  $\pm$0.11 & 94.49  $\pm$0.12 & 0.449  $\pm$0.024 \\
\midrule
Expert & 89.76 & -- & 0.031 & 84.28 & -- & 0.071  \\
Classifier & 88.26 & 88.26 & 0.226 & 88.26 & 88.26 & 0.226\\
\bottomrule
\end{tabular}}
\end{sc}
\end{small}
\vskip 0.1in
\begin{small}
\begin{sc}
\begin{tabular}{lccr}
\toprule
Method/Expert  &  & non-AAE biased   & \\
\midrule
  & sys & class & disc \\
\midrule
$L_{CE}^1$ (ours)   & 90.42 $\pm$ 0.38 & \textbf{94.04}  $\pm$0.81 & 0.231  $\pm$0.04\\
Confidence  & 90.60  $\pm$0.13 & 93.68 v0.24 & 0.15  $\pm$0.03\\
Oracle  & \textbf{91.09} $\pm$ 0.12 &  92.57  $\pm$0.15 & \textbf{0.15}  $\pm$0.02\\
\midrule
Expert  & 80.4 & -- & 0.084\\
Classifier & 88.26 & 88.26 & 0.226\\
\bottomrule
\end{tabular}
\end{sc}
\end{small}
\vskip -0.1in
\end{table}

\subsection{Baseline Implementation}\label{apx:madras}
\textbf{Description of \cite{madras2018predict} approach.}
A different approach to our method, is to try directly to approximate the system loss \eqref{eq:system_loss_general},  this was the road taken by \cite{madras2018predict} in their differentiable model method. Let us introduce the loss used in \cite{madras2018predict}:
\begin{equation}
    L(h,r,M) = \bE_{(x,y) \sim \mathbf{P},m \sim M|(x,y)}\left[ (1-r(x,h(x))) l(y,h(x)) + r(x,h(x)) l(y,m)\right] \label{eq:madras_loss}
\end{equation}
where $h: \mathcal{X} \to \Delta^{|\mathcal{Y}|-1}$ (classifier), $r: \mathcal{X} \times \Delta^{|\mathcal{Y}|-1} \to \{0,1\}$ (rejector) and the expert $M: \mathcal{Z} \to \Delta^{|\mathcal{Y}|-1}$. 
\cite{madras2018predict} considers only binary labels and  uses the logistic loss for $l(.,.)$ and thus requires the expert to produce uncertainty estimates for it's predictions instead of only a label; we can extend this to the multiclass setting by using the cross entropy loss for $l$. It is clear that the loss \eqref{eq:madras_loss} is non-convex in $r$, hence to optimize 
it \cite{madras2018predict}  estimates the gradient through the
Concrete relaxation \cite{maddison2016concrete,jang2016categorical}. However, in the code of  \cite{madras2018predict} found at \url{https://github.com/dmadras/predict-responsibly}, the authors replace $r(x)$ by it's estimated probability from it's model. \cite{madras2018predict} considers an additional parameter $\gamma_{defer}$ found in the code, however it is not clear what effect this parameter has as we found it's description in  the paper did not match the code. In detail, let $r_0,r_1 : \mathcal{X} \to \mathbb{R}$ and $r(x) = \arg \max_{i \in \{0,1\}} r_i$, the loss \cite{madras2018predict} considers is:

\begin{equation}
    \tilde{L}(h,r,M) = \bE_{(x,y) \sim \mathbf{P},m \sim M|(x,y)}\left[  \frac{ \exp(r_0(x))}{\exp(r_0(x)) + \exp(r_1(x))} l(y,h(x)) + \frac{\exp(r_1(x))}{\exp(r_0(x)) + \exp(r_1(x))} l(y,m)\right] \label{eq:madras_loss_ours}
\end{equation}
All terms in loss \eqref{eq:madras_loss_ours} are on the same scale which is crucial for the model to train well. We explicitly have two functions $r_0$ and $r_1$ defining $r$ even though $r$ is binary; this is for ease of implementation.

Another key detail of \cite{madras2018predict} approach, is that the classifier is independently trained of the rejector by  stopping the gradient from $r$ to
backpropagate through $h$. This no longer allows $h$ to adapt to the expert, $h$ is trained with the cross entropy loss on it's own concurrently with $r$.

\textbf{CIFAR-10 details.} In our CIFAR-10 setup, the dataset $S$ contains only the final prediction $m$ of the expert $M$, thus to compute $l(y,m)$ we set $l(y,m) = - \log(1-\epsilon)$ if $y=m$ and $l(y,m) = - \log(\frac{1}{|\mathcal{Y}|})$ if $y \neq m$ (simulating a uniform prediction in accordance with our expert behavior) with $\epsilon = 10^{-12}$. One could instead train a network to model the expert's prediction, we found this approach to fail as there is a big amount of noise in the labels caused by the expert's random behavior.

\textbf{Results on CIFAR-10.} For expert $k<8$, we found that the \cite{madras2018predict} baseline to almost never defer to the expert and  when $k=8,9$ at the end of training (200 epochs) the rejector never defers but the optimal system is found in the middle of training ($\sim$100 epochs). The optimal systems achieve 46.27 and 40.22 coverage, 98.81 and 98.89 expert accuracy on deferred examples and 89.38 and 89.40 classifier accuracy on non-deferred examples respectively for $k=8,9$. The classifier alone for the optimal systems achieve $\sim$86 classification accuracy on all of the validation set for both experts, notice that there is not much difference between the classification accuracy on all the data and non-deferred examples, while for our method and other baselines there is a considerable increase. This indicates that the rejector is only looking at the expert loss and ignoring the classifier

What is causing this behavior is that as the classifier $h$ trains, it's loss $l(y,h(x))$ eventually goes to $0$, however the loss of the expert $l(y,m)$ is either $0$ or equal to $-\log(0.1)$, hence the rejector will make the easier decision to never defer. At initial epochs, we have a non-trivial rejector as the classifier $h$ is still learning, and the coverage progressively grows till $100\%$ over training. Essentially, what \cite{madras2018predict} approach is trying to do is choosing between the lower cost between expert and classifier: a cost-sensitive learning problem at it's heart. Therefore, one can use the losses developed here to tackle the problem better; we leave this to future investigations. Another potential fix is to learn the classifier and rejector on two different data sets.

\begin{table}[h]
\caption{System accuracy of our implementation of \cite{madras2018predict} and our method and baselines with varying expert competence (k) on CIFAR-10.}
\label{table:cifar10-madras}
\vskip 0.15in
\begin{center}
\begin{small}
\begin{sc}
\resizebox{\textwidth}{!}{
\begin{tabular}{lcccccccccr}
\toprule
Method /\ System accuracy (k) & 1  & 2 & 3&4& 5&6&7&8&9&10  \\
\midrule
$L_{CE}^{.5}$  & \textbf{90.92} & \textbf{91.01} &\textbf{91.94} &\textbf{92.69} &\textbf{93.66} &\textbf{96.03} &\textbf{97.11 }&\textbf{98.25} &\textbf{99} &\textbf{100} \\
$L_{CE}^{1}$  & 90.41 & 91.00 &91.47 &92.42 &93.4 &95.06 &96.49 &97.30 &97.70 &100 \\
Confidence  & 90.47 & 90.56 &90.71 &91.41 &92.52 &94.15 &95.5 &97.35 &98.05 &100 \\
OracleReject  & 89.54 & 89.51 &89.48 &90.75 &90.64 &93.25 &95.28& 96.52 &98.16 &100 \\
\cite{madras2018predict}  & 90.40 & 90.40 &90.40 &90.40 &90.40 &90.40 &90.40& 94.48 &95.09&100 \\
\bottomrule
\end{tabular}}
\end{sc}
\end{small}
\end{center}
\vskip -0.1in
\end{table}

\subsection{CheXpert Experiments}\label{apx:exp_chexpert}

\begin{figure}[h]
\centering
\begin{subfigure}{.9\textwidth}
\centering

      \includegraphics[width=\textwidth,trim={5.0cm 0.0cm 5.0cm 0.0cm}]{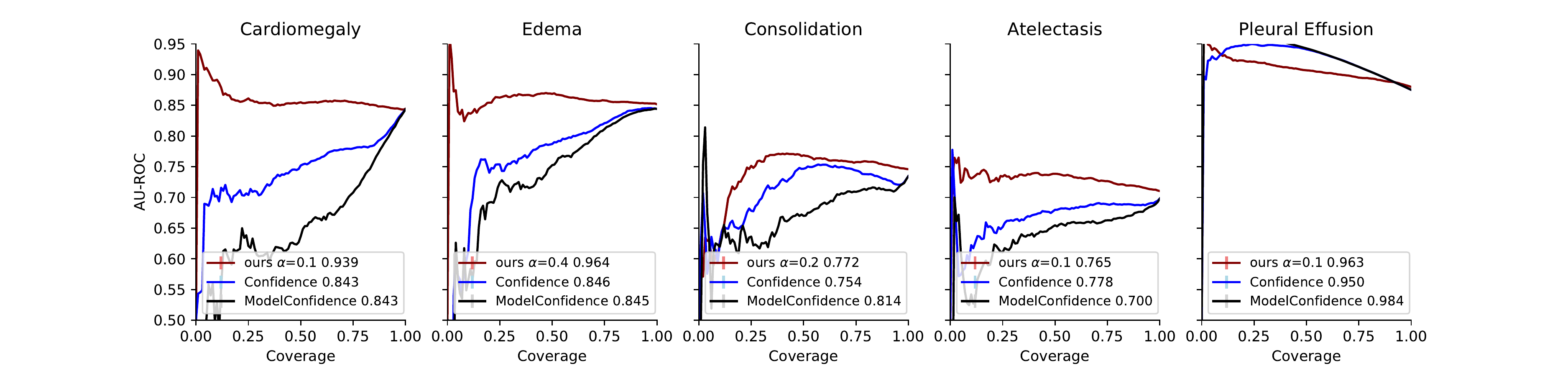}
      \caption{classifier AU-ROC on non-deferred examples vs coverage for expert $q=0.7,p=1$ with 100\% of training data.}
      \label{fig:classauc_vs_cov_toy}
\end{subfigure}

\begin{subfigure}{.9\textwidth}
\centering
  \includegraphics[width=\textwidth,trim={5.0cm 0.0cm 5.0cm 0.0cm}]{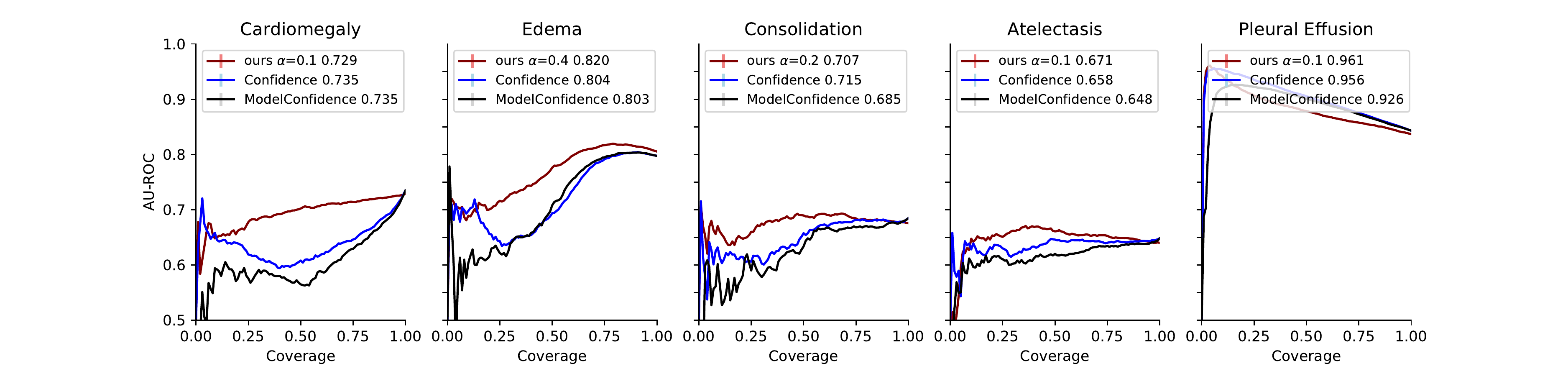}
\caption{classifier AU-ROC on non-deferred examples vs coverage for expert $q=0.7,p=1$ with 10\% of training data.}
      \label{fig:classauc_vs_cov_toy_10}
\end{subfigure}

\caption{Plot of classifier AU-ROC on non-deferred examples versus coverage for (a) for systems learned with 100\% of training data  (b) and learned with 10\% of training data. Noise at low coverage is due to reduced data size.}
\label{fig:plots_classauc_ap_toyexpert}
\end{figure}

\clearpage
\section{Deferred Proofs and Derivations}\label{apx:proofs}

\subsection{Section \ref{sec:surrog}}
\subsubsection{Binary Setting}
As we eluded to in the body of the paper, we can extend the losses introduced by \cite{cortes2016learning} to our setting for binary labels. 
Let $\mathcal{Y}=\{-1,+1\}$ and  $r,h: \mathcal{X} \to \mathbb{R}$ where we defer if $r(x)\leq 0$, for generality we assume $l_{exp}(x,y,m) = \max(c,\mathbb{I}_{m \neq y})$ as this allows to treat rejection learning as an immediate special case.
Following the derivation in \cite{cortes2016learning}, let $u \to \phi(-u)$ and $u \to \psi(-u)$ be two convex function upper bounding $\bI_{u\leq 0}$ and let $\alpha,\beta >0$, then:
\begin{flalign}
 &\nonumber L_c(h,r,x,y,m)  =\mathbb{I}_{h(x)y\leq 0} \bI_{r(x) > 0} + \max(c,\mathbb{I}_{m \neq y}) \bI_{r(x) \leq 0}&\\
 \nonumber&\leq\max\left\{\mathbb{I}_{\max\{h(x)y,-r(x)\}\leq 0} , \max(c,\mathbb{I}_{m \neq y}) \bI_{r(x) \leq 0}  \right\}\\
  \nonumber&\overset{(a)}{\leq}\max\left\{\mathbb{I}_{ \frac{\alpha}{2}(h(x)y-r(x))\leq 0} , \max(c,\mathbb{I}_{m \neq y}) \bI_{ \beta r(x) \leq 0}  \right\}\\
    &\overset{(b)}{\leq}\max\{\phi\left( \frac{-\alpha}{2}(h(x)y-r(x))\right) , \max(c,\mathbb{I}_{m \neq y}) \psi\left( -\beta r(x) \right)  \} \label{surogate:MH}\\
        &\leq\phi\left( \frac{-\alpha}{2}(h(x)y-r(x))\right) + \max(c,\mathbb{I}_{m \neq y}) \psi\left( -\beta r(x) \right) \label{surogate:PH}
\end{flalign}
step $(a)$ is by noting that $max(a,b)\geq \frac{a+b}{2}$, step $(b)$ since $\phi(u)$ and $\psi(u)$ upper bound $\bI_{u\leq0}$. Both the right hand sides of equations \eqref{surogate:MH} and \eqref{surogate:PH} are convex functions of both $h$ and $r$.
When $\phi$ and $\psi$ are both the exponential loss we obtain the following loss with $\beta(x,y,m): \mathcal{X} \times \mathcal{Y}^2 \to \mathbb{R}^+$:
\begin{align*}
     \nonumber &L_{SH}(h,r,x,y,m):= \exp\left( \frac{\alpha}{2}(r(x)-h(x)y)\right) + (c+\mathbb{I}_{m \neq y}) \exp\left( -\beta(x,y,m) r(x) \right)
\end{align*}
we will see that it will be necessary that $\beta$ is no longer constant for the loss to be consistent while in the standard case it sufficed to have $\beta$ constant \cite{cortes2016learning}. The following proposition shows that for an appropriate choice of $\beta$ and $\alpha$ we can make $L_{SH}$ consistent.

\begin{proposition}
 Let $c(x) = c - c\bP(Y \neq M|X=x) + \bP(Y \neq M|X=x)$, for $\alpha = 1$ and $\beta=\sqrt{\frac{1-c(x)}{c(x)}}$,  $\inf_{h,r}\bE_{x,y,m}[L_{SH}(h,r,x,y,m)]$ is attained at $(h^*_{SH},r^*_{SH})$ such that $sign(h^B)=sign(h^*_{SH})$ and $sign(r^B)=sign(r^*_{SH})$.
\end{proposition}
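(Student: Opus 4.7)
The plan is to mirror the derivation used for $L_{CE}$ in the main text: reduce to a pointwise problem, minimize first in $h$ (which decouples from $r$), substitute back and minimize in $r$, and compare the signs of the optimizers with those of $h^B,r^B$. First I would observe that
\[
\inf_{h,r}\bE_{x,y,m}[L_{SH}(h,r,x,y,m)] = \bE_x \inf_{h(x),r(x)} F(h(x),r(x)),
\]
where, setting $\eta := \bP(Y=1\mid X=x)$ and using $\bE_{y,m\mid x}[\max(c,\bI_{m\neq y})] = c\,\bP(M=Y\mid x) + \bP(M\neq Y\mid x) = c(x)$ together with the fact that $\beta$ depends only on $x$,
\[
F(h,r) = e^{r/2}\bigl(\eta\, e^{-h/2} + (1-\eta)\, e^{h/2}\bigr) + c(x)\, e^{-\beta r}.
\]
The function $F$ is jointly convex in $(h,r)$, so its minimum is characterized by first-order conditions.

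Setting $\partial F/\partial h = 0$ gives $\eta e^{-h/2} = (1-\eta)e^{h/2}$, hence $h^*(x) = \log\!\bigl(\eta/(1-\eta)\bigr)$, whose sign equals $\text{sign}(2\eta-1)$, i.e.\ that of $h^B(x)$. Substituting $h^*$ collapses the bracket in $F$ to $2\sqrt{\eta(1-\eta)}$, so the remaining optimization reduces to minimizing $G(r) := 2\sqrt{\eta(1-\eta)}\,e^{r/2} + c(x)\,e^{-\beta r}$. Solving $G'(r)=0$ yields
\[
r^*(x) = \frac{2}{1+2\beta}\log\!\frac{\beta c(x)}{\sqrt{\eta(1-\eta)}}, \qquad \text{sign}(r^*(x)) = \text{sign}\bigl(\beta c(x) - \sqrt{\eta(1-\eta)}\bigr).
\]

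Plugging in $\beta = \sqrt{(1-c(x))/c(x)}$ gives $\beta c(x) = \sqrt{c(x)(1-c(x))}$, so the sign condition simplifies to $\text{sign}(r^*(x)) = \text{sign}\bigl(c(x)(1-c(x)) - \eta(1-\eta)\bigr)$. Writing $\eta(1-\eta) = m(1-m)$ with $m := \min(\eta,1-\eta)\in[0,1/2]$ and using strict monotonicity of $t\mapsto t(1-t)$ on $[0,1/2]$, in the natural regime $c(x)\in[0,1/2]$ this sign coincides with $\text{sign}(c(x)-m)$, which is exactly the sign of $r^B(x)$: the Bayes rule defers (i.e.\ $r^B\leq 0$) if and only if the expert's expected cost $c(x)$ is below the classifier's minimum conditional error $m$.

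The main obstacle is the regime $c(x) > 1/2$, where $t\mapsto t(1-t)$ is no longer monotone on the relevant range and the sign equivalence can fail. In that regime one has $c(x) > m$ for every $\eta$, so $r^B(x) > 0$ always, and one must argue separately (or restrict the assumption on $c$) that the chosen $\beta$ also forces $r^*(x) > 0$; the cleanest fix is to assume the expert is at least as good as random, $c(x)\leq 1/2$, which covers all practically relevant cases and coincides with the analogous condition used in \cite{cortes2016learning}. Verifying joint convexity of $L_{SH}$ in $(h,r)$ and the standard measurability arguments that justify the pointwise inf-expectation exchange are otherwise routine.
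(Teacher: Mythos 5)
Your proposal is correct and follows essentially the same route as the paper's proof: reduce to a pointwise problem, minimize in $h$ first (your $h^*(x)=\log\bigl(\eta/(1-\eta)\bigr)$ is exactly the paper's $u^*=\frac{1}{\alpha}\log\bigl(\eta/(1-\eta)\bigr)$ at $\alpha=1$), substitute back, and minimize in $r$ (your $r^*$ matches the paper's $v^* = \frac{1}{\alpha/2+\beta}\log\bigl(\frac{c(x)\beta}{\alpha}\sqrt{1/(\eta(x)(1-\eta(x)))}\bigr)$ at $\alpha=1$). The one substantive difference is the final step: the paper outsources both the formula for $v^*$ and the sign-matching condition $\beta/\alpha=\sqrt{(1-c(x))/c(x)}$ to the proof of Theorem 1 of \cite{cortes2016boosting}, whereas you carry out the computation explicitly and, via the factorization $c(x)(1-c(x))-m(1-m)=(c(x)-m)\,(1-c(x)-m)$ with $m=\min(\eta,1-\eta)$, expose that the sign equivalence actually requires $c(x)+m<1$ --- guaranteed when $c(x)\le 1/2$. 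This is the standard rejection-learning assumption that deferral is never costlier than random guessing; it is implicit in the cited result but absent from the proposition's statement, so the ``obstacle'' you identify is a gap in the statement rather than in your argument, and your proposed fix is the right one. Two minor points where the paper is more careful: your first-order conditions presuppose $\eta\in(0,1)$, and the paper treats $\eta\in\{0,1\}$ as separate cases (there the infimum in $h$ is not attained at a finite point, though the sign conclusion still holds); you should add those boundary cases for completeness.
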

\begin{proof}
Denote $\eta(x)=\bP(Y=1|X=x)$ and $q(x,y)=\bP(M=1|X=x,Y=y)$, we have:
\begin{flalign*}
\inf_{h,r}\bE_{x,y,m}[L_{SH}(h,r,x,y,m)] &= \inf_{h,r} \bE_{x}\bE_{y|x}\bE_{m|x,y}[L_{SH}(h,r,x,y,m)] &\\
&= \bE_{x}\inf_{h(x),r(x)}\bE_{y|x}\bE_{m|x,y}[L_{SH}(h(x),r(x),x,y,m)]
\end{flalign*}
Now we will expand the inner expectation:
\begin{flalign}
&\label{prop:inner_loss_exp}\bE_{y|x}\bE_{m|x,y}[L_{SH}(h(x),r(x),x,y,m)]&\\\nonumber&= \eta(x) q(x,1) ( \exp\left( \frac{\alpha}{2}(r(x)-h(x))\right) + c \exp\left( -\beta r(x) \right) ) \\
&\nonumber+(1-\eta(x)) q(x,-1) ( \exp\left( \frac{\alpha}{2}(r(x)+h(x))\right) + (1) \exp\left( -\beta r(x) \right) )\\
&\nonumber+\eta(x) (1-q(x,1)) ( \exp\left( \frac{\alpha}{2}(r(x)-h(x))\right) + (1) \exp\left( -\beta r(x) \right) )\\
&\nonumber+(1-\eta(x)) (1-q(x,-1)) ( \exp\left( \frac{\alpha}{2}(r(x)+h(x))\right) + c \exp\left( -\beta r(x) \right) )
\end{flalign}
The Bayes optimal solution for our original loss in the binary setting is:
\begin{flalign*}
&h^B(x)= \eta(x) -\frac{1}{2} &\\
&r^B(x)= |\eta(x) -\frac{1}{2}| - (\frac{1}{2} -c - \bP(M \neq Y |X=x)) 
\end{flalign*}

\textbf{Case 1:} if $\eta(x)=0$, writing $v=r(x),u=h(x)$ then term \eqref{prop:inner_loss_exp} becomes:
\begin{equation*}
q(x,-1)( \exp\left( \frac{\alpha}{2}(v+u)\right) + 1 \exp\left( -\beta v \right) ) +(1-q(x,-1))( \exp\left( \frac{\alpha}{2}(v+u)\right) + c \exp\left( -\beta v \right) )
\end{equation*}
then to minimize the above it is necessary that the optimal solutions are such that $u^*<0,v^*>0$ which agree with the sign of the original Bayes solution.

\textbf{Case 2:} if $\eta(x)=1$, then term \eqref{prop:inner_loss_exp} becomes:
\begin{equation*}
q(x,1)( \exp\left( \frac{\alpha}{2}(v-u)\right) + c \exp\left( -\beta v \right) ) +(1-q(x,1))( \exp\left( \frac{\alpha}{2}(v-u)\right) + (1) \exp\left( -\beta v \right) )
\end{equation*}
then to minimize the above it is necessary that the optimal solutions are such that $u^*>0,v^*>0$ which agree with the sign of the original Bayes solution.

\textbf{Case 3:} $\eta(x)\in(0,1)$, for ease of notation denote the RHS of equation \eqref{prop:inner_loss_exp} as $L_{\psi}(u,v)$, note that $L_{\psi}(u,v)$ is a convex function of both $u$ and $v$ and therefore to find the optimal solution it suffices to take the partial derivatives with respect to each and set them to $0$.

For $u$:
\begin{flalign*}
&\frac{\partial_{\psi}(u,v)}{\partial u} = 0 &\\
&\iff - \eta(x) \frac{\alpha}{2} \exp\left( \frac{\alpha}{2}(v-u^*)\right) + (1-\eta(x))\exp\left( \frac{\alpha}{2}(v+u^*)\right) = 0 \\
&\iff - \eta(x) \frac{\alpha}{2} \exp\left( \frac{-\alpha}{2}u^*\right) + (1-\eta(x)) \frac{\alpha}{2}\exp\left( \frac{\alpha}{2}u^*\right) = 0 \\
&\iff u^* = \frac{1}{\alpha} \log(\frac{\eta(x)}{1- \eta(x)})
\end{flalign*}
we note that $u^*$ has the same sign as the minimizer of the exponential loss and hence has the same sign as $h^B(x)$.

Plugging $u^*$ and taking the derivative with respect to $v$:

\begin{flalign*}
&\frac{\partial_{\psi}(u^*,v)}{\partial v} = 0 &\\
&\iff  \eta(x) \frac{\alpha}{2} \exp\left( \frac{\alpha}{2}(v^*-u^*)\right) + (1-\eta(x))\exp\left( \frac{\alpha}{2}(v^*+u^*)\right)\\ &- \beta c(\eta(x)q(x,1) + (1-\eta(x))(1-q(x,-1)) \exp(-\beta v^*) \\&-(1-\eta(x))q(x,-1)  \beta  \exp(-\beta v^*)  -\eta(x)(1-q(x,1))  \beta  \exp(-\beta v^*) = 0   \\
&\iff  \eta(x) \frac{\alpha}{2} \exp\left( \frac{\alpha}{2}(v^*-u^*)\right) + (1-\eta(x))\exp\left( \frac{\alpha}{2}(v^*+u^*)\right) \\&- \beta (c -c\bP(M\neq Y|X=x) + \bP(M\neq Y|X=x) ) \exp(-\beta v^*)  =0 
\end{flalign*}
Appealing to the proof of Theorem 1 in \cite{cortes2016boosting} we obtain that:
\begin{equation*}
    v^* = \frac{1}{\alpha/2 + \beta} \log \left( \frac{c(x)\beta}{\alpha} \sqrt{\frac{1}{\eta(x)(1-\eta(x))}} \right)
\end{equation*}

Furthermore by the proof of Theorem 1 in \cite{cortes2016boosting}, the sign of $v^*$ matches that of $r^B(x)$ if and only if:
\[
\frac{\beta}{\alpha} = \sqrt{\frac{1-c(x)}{c(x)}}
\]

\end{proof}

\subsubsection{Multiclass setting}

\textbf{Proposition 1.} \textit{ \noindent
 $\tilde{L}_{CE}$  is  convex and is a consistent loss function for $\tilde{L}$:
 \begin{center}
      let $\bm{\tilde{g}}=\arg  \inf_{\mathbf{g}} \bE\left[ \tilde{L}_{CE}(\mathbf{g},\mathbf{c}) |X=x\right]$, then:  $\arg  \max_{i \in [K+1]}\bm{\tilde{g}}_i = \arg \min_{i \in [K+1]} \bE[c(i)|X=x]$
 \end{center}
 }
\begin{proof}
Writing the expected loss:
\begin{flalign*}
\inf_{\mathbf{g}}\bE_{x,\mathbf{c}}[\tilde{L}_{CE}(\mathbf{g},x,\mathbf{c})] &= \inf_{\mathbf{g}} \bE_{x}\bE_{\mathbf{c}|x}[\tilde{L}_{CE}(\mathbf{g},x,\mathbf{c})] = \bE_{x}\inf_{\mathbf{g}(x)}\bE_{\mathbf{c}|x}[\tilde{L}_{CE}(\mathbf{g}(x),x,\mathbf{c})]
\end{flalign*}
Now we will expand the inner expectation:
\begin{flalign}
&\bE_{\mathbf{c}|x}[\tilde{L}_{CE}(\mathbf{g}(x),x,\mathbf{c})] \nonumber = - \sum_{y \in [K+1]} \bE[\max_j c(j) - c(y)|X=x] \log\left( \frac{\exp(g_y(x))}{\sum_k \exp(g_k(x))} \right)
\end{flalign}
The loss $\tilde{L}_{CE}$ is convex in the predictor, so it suffices to differentiate with respect to each $g_y$ for $y \in \mathcal{Y}^\bot$ and set to 0. 

\begin{flalign*}
&\nonumber\frac{\partial L_{CE}}{\partial g_y^*} = 0 &\\ 
&\nonumber\iff    \bE[\max_j c(j) - c(y)|X=x] -   \frac{\exp(g_y^*(x))}{\sum_k \exp(g_k(x))}   \sum_{i \in  [K+1]} \bE[\max_j c(j) - c(i)|X=x]  = 0 \\
& \iff   \frac{\exp(g_y^*(x))}{\sum_k \exp(g_k(x))}  = \frac{ \bE[\max_j c(j) - c(y)|X=x]}{\sum_{i \in  [K+1]} \bE[\max_j c(j) - c(i)|X=x]}
\end{flalign*}
From this we can deduce:
\begin{flalign*}
h(x) &= \arg\max_{y \in  [K+1]} g_y^*(x)= \arg \max_{y \in  [K+1]} \frac{\exp(g_{y}^*(x))}{\sum_{y \in  [K+1]}\exp(g_{y}^*(x))}&\\& = \arg \max_{y \in  [K+1]} \frac{ \bE[\max_j c(j) |X=x] -\bE[ c(y)|X=x]}{\sum_{i \in  [K+1]} \bE[\max_j c(j) - c(i)|X=x]} \\
&= \arg \min_{y \in  [K+1]}  \bE[ c(y)|X=x] =\tilde{h}^B(x)
\end{flalign*}
\end{proof}

\textbf{Proposition 2.} \textit{ \noindent  The minimizers of the loss $L_{0{-}1}$ \eqref{eq:01_reject_loss} are defined point-wise for all $x\in \mathcal{X}$ as:
\begin{align}
    &h^B(x) = \arg \max_{y \in \mathcal{Y}}\eta_y(x)  \nonumber    \\  
    &r^B(x)= \bI_ {\max_{y \in \mathcal{Y}}\eta_y(x) \leq  \bP(Y = M|X=x) }  
\end{align}
}\begin{proof}
When we don't defer, the loss incurred by the model is the misclassification loss in the standard multiclass setting and hence by standard arguments \cite{friedman2001elements} we can define $h^B$ point-wise regardless of  $r$:
\begin{flalign*}
    h^B(x) = \arg \inf_{h} \bE_{y} [\bI_{h \neq y}] =  \arg \max_{y \in \mathcal{Y}}\eta_y(x)
\end{flalign*}
Now for the rejector, we should only defer if the expected loss of having the expert predict is less than the error of the classifier $h^B$ defined above, define $r^B: \mathcal{X} \to \{0,+1 \}$ as:
\begin{flalign*}
    r^B(x) &= \bI_{\bE[\bI_{M \neq Y}|X=x]  \leq \bE[\bI_{h^B(x) \neq Y}|X=x] }\\
    &=\bI_{ \bP(Y \neq M)  \leq (1-\max_{y \in \mathcal{Y}}\eta_y(x)) } \\
    &= \bI_{\bP(Y = M)  \geq \max_{y \in \mathcal{Y}}\eta_y(x) }
\end{flalign*}
\end{proof}

\textbf{Theorem 2.} \textit{ \noindent
The loss $L_{CE}$ is a convex upper bound of $L_{0{-}1}$ and is consistent:\\
$\inf_{h,r}\bE_{x,y,m}[L_{CE}(h,r,x,y,m)]$ is attained at $(h^*_{CE},r^*_{CE})$ such that $h^B(x)=h^*_{CE}(x)$ and $r^B(x)=r^*_{CE}(x)$ for all $x \in \mathcal{X}$.
}
\begin{proof}
The fact that $L_{CE}$ is convex is immediate as $\bI_{m = y}\geq 0$ and the cross entropy loss is convex.

Now we show that $L_{CE}$ is an upper bound of $L_{0{-}1}$:
\begin{align}
 \nonumber & L_{0{-}1}(h,r,x,y,m)= \mathbb{I}_{h(x)\neq y } \bI_{r(x) = 0} + \bI_{m \neq y} \bI_{r(x) = 1}&\\
 &\overset{(a)}{\leq}-  \log\left(\frac{\exp(g_{y}(x))}{\sum_{y' \in \mathcal{Y} \cup \bot}\exp(g_{y'}(x))} \right) - \bI_{m= y} \log\left(\frac{\exp(g_{\bot}(x))}{\sum_{y' \in \mathcal{Y} \cup \bot}\exp(g_{y'}(x))} \right) \label{eq:multiclass_entropy_loss}
\end{align}
To justify inequality $(a)$, consider first if $r(x)=0$, then if $\mathbb{I}_{h(x)\neq y }=1$ we know that $\frac{\exp(g_{y}(x))}{\sum_{y' \in \mathcal{Y} \cup \bot}\exp(g_{y'}(x))} \leq \frac{1}{2}$ giving $-\log\left(\frac{\exp(g_{y}(x))}{\sum_{y' \in \mathcal{Y} \cup \bot}\exp(g_{y'}(x))} \right)  \geq 1$, moreover all the terms in the RHS of $(a)$ are always positive.

On the other hand if $r(x)=1$, then again  $\frac{\exp(g_{y}(x))}{\sum_{y' \in \mathcal{Y} \cup \bot}\exp(g_{y'}(x))} \leq \frac{1}{2}$ as we decided to reject and since also giving $-\log\left(\frac{\exp(g_{y}(x))}{\sum_{y' \in \mathcal{Y} \cup \bot}\exp(g_{y'}(x))} \right)  \geq 1$. Finally note that $ L_{0{-}1}(h,r,x,y,m)\leq 1$.

 We will now show that the optimal rejector minimizing the upper bound \eqref{eq:multiclass_entropy_loss} is in fact consistent.



Denote $q_m(x,y)=\bP(M=m|X=x,Y=y)$ and $\eta_y(x) = \bP(Y=y|X=x)$, we have:
\begin{flalign*}
\inf_{h,r}\bE_{x,y,m}[L_{CE}(h,r,x,y,m)] &= \inf_{h,r} \bE_{x}\bE_{y|x}\bE_{m|x,y}[L_{CE}(h,r,x,y,m)] 
\\&= \bE_{x}\inf_{h(x),r(x)}\bE_{y|x}\bE_{m|x,y}[L_{CE}(h(x),r(x),x,y,m)]
\end{flalign*}
Let us expand the inner expectation:
\begin{flalign}
\nonumber&\bE_{y|x}\bE_{m|x,y}[L_{CE}(h(x),r(x),x,y,m)]&\\\nonumber &= \bE_{y|x} \left[ -\log\left(\frac{\exp(g_{y}(x))}{\sum_{y' \in \mathcal{Y} \cup \bot}\exp(g_{y'}(x))} \right)- \sum_{m \in \mathcal{Y}} \bI_{m = y} \nonumber \log\left(\frac{\exp(g_{\bot}(x))}{\sum_{y' \in \mathcal{Y} \cup \bot}\exp(g_{y'}(x))} \right) \right] \\\nonumber
&=- \sum_{y \in \mathcal{Y}} \eta_y(x) \log\left(\frac{\exp(g_{y}(x))}{\sum_{y' \in \mathcal{Y} \cup \bot}\exp(g_{y'}(x))} \right) \\&- \sum_{y \in \mathcal{Y}} \eta_y(x) \sum_{m \in \mathcal{Y}} q_m(x,y)\bI_{m = y} \nonumber \log\left(\frac{\exp(g_{\bot}(x))}{\sum_{y' \in \mathcal{Y} \cup \bot}\exp(g_{y'}(x))} \right) \\\nonumber
&\overset{(a)}{=}- \sum_{y \in \mathcal{Y}} \eta_y(x) \log\left(\frac{\exp(g_{y}(x))}{\sum_{y' \in \mathcal{Y} \cup \bot}\exp(g_{y'}(x))} \right)  - \sum_{y \in \mathcal{Y}} \eta_y(x) q_y(m,y) \log\left(\frac{\exp(g_{\bot}(x))}{\sum_{y' \in \mathcal{Y} \cup \bot}\exp(g_{y'}(x))} \right) \\ 
&\overset{(b)}{=}- \sum_{y \in \mathcal{Y}} \eta_y(x) \log\left(\frac{\exp(g_{y}(x))}{\sum_{y' \in \mathcal{Y} \cup \bot}\exp(g_{y'}(x))} \right)  \nonumber\\&- \bP(Y=M |X=x) \log\left(\frac{\exp(g_{\bot}(x))}{\sum_{y' \in \mathcal{Y} \cup \bot}\exp(g_{y'}(x))} \right) \label{prop:y|x_cross_loss_withq}
\end{flalign}
In step $(a)$ all terms that differed on $y$ and $m$ disappear, in step $(b)$ we have:
\begin{flalign*}
&\sum_{y \in \mathcal{Y}} \eta_y(x) q_y(m,y) = \sum_{y \in \mathcal{Y}} \bP(M=y,Y=y|X=x)
= \bP(Y=M|X=x)
\end{flalign*}

 For ease of notation denote the RHS of equation \eqref{prop:y|x_cross_loss_withq} as $L_{CE}(g_1,\cdots,g_{|\mathcal{Y}|},g_\bot)$, note that it is a a convex function, hence we will take the partial derivatives with respect to each argument and set them to $0$.

For any $g_\bot$, and for $i \in \mathcal{Y}$ we have :
\begin{flalign}
&\nonumber\frac{\partial L_{CE}(g_1^*,\cdots,g_{|\mathcal{Y}|^*},g_\bot)}{\partial g_i^*} = 0 &\\ 
&\label{prop:optimal_g}\iff   \frac{\exp(g_{i}^*(x))}{\sum_{y' \in \tilde{\mathcal{Y}}}\exp(g_{y'}^*(x))} = \frac{\eta_i(x)}{1 + \bP(Y=M|X=x)}
\end{flalign}
The optimal $h^*$ for any $g_\bot$ should satisfy equation \eqref{prop:optimal_g} for every $i \in \mathcal{Y}$, however
since exponential is an increasing function we get that the optimal $h^*$ in fact agrees with the Bayes solution as:
\begin{flalign*}
\arg\max_{y \in \mathcal{Y}} g_y^*(X)&= \arg \max_{y \in \mathcal{Y}} \frac{\exp(g_{y}^*(x))}{\sum_{y \in \mathcal{Y}}\exp(g_{y}^*(x)) + \exp(g_\bot(x))}&\\& = \arg \max_{y \in \mathcal{Y}} \frac{\eta_y(x)}{1 +  \bP(Y=M|X=x)} = h^B(x)
\end{flalign*}

Plugging $h^*$ and taking the derivative with respect to the optimal $g_\bot^*$:

\begin{flalign}
&\nonumber\frac{\partial L_{CE}(g_1^*,\cdots,g_{|\mathcal{Y}|}^*,g_\bot^*)}{\partial g_\bot^*} = 0 &\\ 
& \nonumber \iff  \frac{\exp(g_{\bot}^*(x))}{\sum_{y' \in \mathcal{Y}}\exp(g_{y'}^*(x))} = \frac{ \bP(Y=M|X=x)}{1+ \bP(Y=M|X=x)}
\end{flalign}
Note note that $r^*(x)=1$ only if $\bP(Y=M|X=x) \geq \max_{y \in \mathcal{Y}} \eta_y(x) $ which agrees with $r^B(x)$
\end{proof}

\subsection{Section \ref{sec:theory}}

\textbf{Proposition 3.}\textit{ \noindent 
$L_{mix}$ is realizable $(\mathcal{H},\mathcal{R})$-consistent for classes closed under scaling but is not  classification consistent.
}
\begin{proof}

We first prove that $L_{mix}$ is realizable $(\mathcal{H},\mathcal{R})$-consistent.
Let $\mathbf{P}$ and $M$ be such that  there exists $h^*,r^* \in \mathcal{H} \times \mathcal{R}$ that have zero error $L(h^*,r^*)=0$. Assume  that  $(\hat{h},\hat{r})$ satisfy 
\[\left| \bE[L_{mix}(\hat{h},\hat{r},x,y,m)] - \inf_{h \in \cH, r \in \cR}  \bE[L_{mix}(h,r,x,y,m)] \right| \leq \delta\]

Let $u>0$, we have:
\begin{align}
\nonumber
&\bE[L(\hat{h},\hat{r},x,y,m)] \\&\leq \nonumber 2\bE[L_{mix}(\hat{h},\hat{r},x,y,m)] \quad \textrm{ (factor of 2 is upper bound)}\\ \nonumber
&\leq 2\bE[L_{mix}(uh^*,ur^*,x,y,m)]+ 2\delta \quad \textrm{(by assumption and closed under scaling) }\\ \nonumber
&= 2\bE[L_{mix}(uh^*,ur^*,x,y,m)|r^*=1]\bP(r^*=1) + 2\bE[L_{mix}(uh^*,ur^*,x,y,m)|r^*=0]\bP(r^*=0) + 2\delta \quad \textrm{ }\\ \nonumber
&= 2\bE[-\log\left(\frac{\exp(ug_{y}(x))}{\sum_{y' \in \mathcal{Y}}\exp(ug_{y'}(x))} \right) \frac{\exp(ur_{0}(x))}{\sum_{i \in \{0,1\}}\exp(ur_{i}(x))}  + \bI_{m \neq y} \frac{\exp(ur_{1}(x))}{\sum_{i \in \{0,1\}}\exp(ur_{i}(x))}|r^*=0]\bP(r^*=0) \\&+  2\bE[-\log\left(\frac{\exp(ug_{y}(x))}{\sum_{y' \in \mathcal{Y}}\exp(ug_{y'}(x))} \right) \frac{\exp(ur_{0}(x))}{\sum_{i \in \{0,1\}}\exp(ur_{i}(x))} \nonumber \\ &+ \bI_{m \neq y} \frac{\exp(ur_{1}(x))}{\sum_{i \in \{0,1\}}\exp(ur_{i}(x))}|r^*=1]\bP(r^*=1) + 2\delta \quad \textrm{ } \label{eq:proof_mixexp_realizable}
\end{align} 
Let us examine each term in the RHS of \eqref{eq:proof_mixexp_realizable}, when $r^*=1$ we have $r_1(x)>r_0(x)$ hence: \[\lim_{u \to \infty}\frac{\exp(ur_{0}(x))}{\sum_{i \in \{0,1\}}\exp(ur_{i}(x))} = 0 \]
Furthermore it most be that $\bI_{m \neq y} = 0$ as we decided to defer. 

When $r^*=0$, we have $r_0(x) \geq r_1(x)$ hence: \[\lim_{u \to \infty}\frac{\exp(ur_{1}(x))}{\sum_{i \in \{0,1\}}\exp(ur_{i}(x))} = 0 \]
moreover we have $h^*(x)=y$ by optimality of $(h^*,r^*)$ (as we did not defer) and realizability thus:
\[
\lim_{u \to \infty} \log\left(\frac{\exp(ug_{y}(x))}{\sum_{y' \in \mathcal{Y}}\exp(ug_{y'}(x))} \right) = 0
\]

We can conclude that
taking the limit as $u \to \infty$ on the RHS of \eqref{eq:proof_mixexp_realizable} and applying the monotone convergence theorem (swap of expectation and limit) we get:
\begin{align*}
&\bE[L(\hat{h},\hat{r},x,y,m)] \leq 2 \delta
\end{align*} 
taking $\delta = \epsilon/2$ completes the proof.

We now move to looking at the Bayes solution of $L_{mix}$,
denote $q_m(x,y)=\bP(M=m|X=x,Y=y)$, we have:
\begin{flalign*}
\inf_{h,r}\bE_{x,y,m}[L_{mix}(h,r,x,y,m)] 
= \bE_{x}\inf_{h(x),r(x)}\bE_{y|x}\bE_{m|x,y}[L_{mix}(h(x),r(x),x,y,m)]
\end{flalign*}
Let us expand the inner expectation:
\begin{flalign}
\label{eq:proof_mixofexp_notconsistent}&\bE_{y|x}\bE_{m|x,y}[L_{mix}(h(x),r(x),x,y,m)]=& \\
& - \sum_{y \in \mathcal{Y}} \eta_y(x)  \log\left(\frac{\exp(g_{y}(x))}{\sum_{y' \in \mathcal{Y} }\exp(g_{y'}(x))} \right) \frac{\exp(r_{0}(x))}{\sum_{i \in \{0,1\}}\exp(r_{i}(x))} + \bP(Y \neq M |X=x) \frac{\exp(r_{1}(x))}{\sum_{i \in \{0,1\}}\exp(r_{i}(x))}  \nonumber
\end{flalign}

Denote the RHS of \eqref{eq:proof_mixofexp_notconsistent} by $L_{mix}(g_1,\cdots, g_{|\mathcal{Y}|},r_0,r_1)$, it is a convex function in $g_i$ for all $i \in \mathcal{Y}$, consider any $r_0,r_1$, we have :
\begin{flalign}
&\frac{\partial L_{mix}(g_1^*,\cdots,g_{|\mathcal{Y}|^*},r_0,r_1)}{\partial g_i^*} = 0 
\label{prop:optimal_g_mix_of_exp}\iff   \frac{\exp(g_{i}^*(x))}{\sum_{y' \in \mathcal{Y}}\exp(g_{y'}^*(x))} = \eta_i(x)
\end{flalign}
Since the optimal $h^*$ for any $r_0,r_1$  \emph{does not depend} on the form of $r_0$ and $r_1$ we conclude that \eqref{prop:optimal_g_mix_of_exp} gives the optimal choice of $h$. We now need to find the optimal choice of $r_0(x)$ and $r_1(x)$ to minimize $L_{mix}(g_1^*,\cdots,g_{|\mathcal{Y}|^*},r_0,r_1)$ which takes the following form:

\begin{equation*}
  L_{mix}(g_1^*,\cdots,g_{|\mathcal{Y}|^*},r_0,r_1) =   \textrm{H}(h^B(x)) \frac{\exp(r_{0}(x))}{\sum_{i \in \{0,1\}}\exp(r_{i}(x))} + \bP(Y \neq M |X=x) \frac{\exp(r_{1}(x))}{\sum_{i \in \{0,1\}}\exp(r_{i}(x))}
\end{equation*}
where $\textrm{H}(X)$ is the Shannon entropy of the random variable $X$, here by $\textrm{H}(h^B(x))$ we refer to the entropy of the probabilistic form of $h^B(x)$ according to \eqref{prop:optimal_g_mix_of_exp} . Clearly the optimal $r_0^*$ and $r_1^*$ have the following behavior for a given $x \in \mathcal{X}$:
\[
\begin{cases}
r_0(x)= \infty, r_1(x) = - \infty \quad if  \ \textrm{H}(h^B(x)) <  \bP(Y \neq M |X=x) \\
r_0(x)= - \infty, r_1(x) =  \infty \quad if \   \textrm{H}(h^B(x)) \geq  \bP(Y \neq M |X=x) 
\end{cases}
\]
This does not have the form of $r^B(x)$, as this rejector compares the entropy of $h^B(x)$ instead of it's confidence to the probability of error of the expert which will not always be in accordance.
\end{proof}

\textbf{Theorem 2.}\textit{ \noindent For any expert $M$ and data distribution $\mathbf{P}$ over $\mathcal{X} \times \mathcal{Y}$, let $0<\delta<\frac{1}{2}$, then  with probability at least $1-\delta$, the following holds for the empirical minimizers $(\hat{h}^*,\hat{r}^*)$:
\begin{align}
    L_{0{-}1}(\hat{h}^*,\hat{r}^*) &\leq  L_{0{-}1}(h^*,r^*) + \mathfrak{R}_n(\mathcal{H}) +  \mathfrak{R}_{n}(\mathcal{R})  + \mathfrak{R}_{n \bP(M \neq Y)/2}(\mathcal{R})  \nonumber \\
    & + 2\sqrt{\frac{\log{(\frac{2}{\delta})}}{2n}} +\frac{\bP(M\neq Y)}{2}  \exp\left(- \frac{n \bP(M \neq Y)}{8} \right)  \nonumber
\end{align}
}
\begin{proof}
Let $\mathcal{L}_{\mathcal{H},\mathcal{R}}$ be the family of functions defined as $\mathcal{L}_{\mathcal{H},\mathcal{R}}=\{(x,y,m) \to L(h,r,x,y,m); h \in \mathcal{H}, r \in \mathcal{R}  \}$ with $ L(h,r,x,y,m):= \mathbb{I}_{h(x)\neq y} \bI_{r(x) = -1} + \mathbb{I}_{m \neq y} \bI_{r(x) = 1}$. Let $\mathfrak{R}_n(\mathcal{L}_{\mathcal{H},\mathcal{R}})$ be the Rademacher complexity of $\mathcal{L}_{\mathcal{H},\mathcal{R}}$, then since $L(h,r,x,y,m) \in [0,1]$, by the standard Rademacher complexity bound (Theorem 3.3 in \cite{mohri2018foundations}), with probability at least $1-\delta/2$ we have:
\begin{equation*}
    L_{0{-}1}(\hat{h}^*,\hat{r}^*) \leq L^S_{0{-}1}(\hat{h}^*,\hat{r}^*) + 2 \mathfrak{R}_n(\mathcal{L}_{\mathcal{H},\mathcal{R}}) + \sqrt{\frac{\log{(\frac{2}{\delta})}}{2n}}
\end{equation*}
We will now relate the complexity of $\mathcal{L}_{\mathcal{H},\mathcal{R}}$ to the individual classes:
\begin{flalign}
\nonumber&\mathfrak{R}_n(\mathcal{L}_{\mathcal{H},\mathcal{R}})=\nonumber \bE_{\boldsymbol{\epsilon}}[ \sup_{(h,r)\in \mathcal{H} \times \mathcal{R}} \frac{1}{m} \sum_{i=1}^m \epsilon_i    \mathbb{I}_{h(x_i)\neq y_i} \bI_{r(x_i) = -1} + \epsilon_i\mathbb{I}_{m_i \neq y_i} \bI_{r(x_i) = 1} ]& \nonumber\\
 \nonumber&\overset{(a)}{\leq} \bE_{\boldsymbol{\epsilon}}\left[ \sup_{(h,r)\in \mathcal{H} \times \mathcal{R}} \frac{1}{m} \sum_{i=1}^m \epsilon_i    \mathbb{I}_{h(x_i)\neq y_i} \bI_{r(x_i) = -1}\right]  \\\nonumber&+  \bE_{\boldsymbol{\epsilon}}\left[ \sup_{(h,r)\in \mathcal{H} \times \mathcal{R}} \frac{1}{m} \sum_{i=1}^m  \epsilon_i\mathbb{I}_{m_i \neq y_i} \bI_{r(x_i) = 1} \right] \\
 \nonumber&\overset{(b)}{\leq} \bE_{\boldsymbol{\epsilon}}\left[ \sup_{(h,r)\in \mathcal{H} \times \mathcal{R}} \frac{1}{m} \sum_{i=1}^m \epsilon_i    \mathbb{I}_{h(x_i)\neq y_i} \right] +\nonumber \bE_{\boldsymbol{\epsilon}}\left[ \sup_{(h,r)\in \mathcal{H} \times \mathcal{R}} \frac{1}{m} \sum_{i=1}^m  \epsilon_i \bI_{r(x_i) = -1} \right] \\& +  \bE_{\boldsymbol{\epsilon}}\left[ \sup_{(h,r)\in \mathcal{H} \times \mathcal{R}} \frac{1}{m} \sum_{i=1}^m  \epsilon_i\mathbb{I}_{m_i \neq y_i} \bI_{r(x_i) = 1} \right] \nonumber \\
&\leq \frac{1}{2} \mathfrak{R}_n(\mathcal{H}) + \frac{1}{2} \mathfrak{R}_n(\mathcal{R}) +\bE_{\boldsymbol{\epsilon}}\left[ \sup_{(h,r)\in \mathcal{H} \times \mathcal{R}} \frac{1}{m} \sum_{i=1}^m  \epsilon_i\mathbb{I}_{m_i \neq y_i} \bI_{r(x_i) = 1} \right] \label{th1:exp_term}
\end{flalign} 

step $(a)$ follows as the supremum is a subadditive function , step $(b)$ is the application of Lemma 2 in \cite{desalvo2015learning} to $\bE_{\boldsymbol{\epsilon}}\left[ \sup_{(h,r)\in \mathcal{H} \times \mathcal{R}} \frac{1}{m} \sum_{i=1}^m \epsilon_i    \mathbb{I}_{h(x_i)\neq y_i} \bI_{r(x_i) = -1}\right]$ which says that the Rademacher complexity of a product of two indicators functions is upper bounded by the sum of the complexities of each class, now we will take a closer look at the last term in the RHS of inequality \eqref{th1:exp_term}.
Denote $n_m^S = \sum_{j \in S} \bI_{y_j\neq m_j} $ and define the random variable $S_m = \{i: y_i \neq m_i \}$, we have that $n_m^S \sim \textrm{Binomial}(n,\bP(M \neq Y))$ and $\bE[n_m^S|S_m]=n \bP(M\neq Y)$, hence:
\begin{flalign*}
    &\bE\left[ \sup_{(h,r)\in \mathcal{H} \times \mathcal{R}} \frac{1}{m} \sum_{i=1}^m  \epsilon_i\mathbb{I}_{m_i \neq y_i} \bI_{r(x_i) = 1} \right] 
    \\&=\bE\left[ \sup_{(h,r)\in \mathcal{H} \times \mathcal{R}} \frac{1}{m} \sum_{i=1 \  s.t. \ y_i \neq m_i}^m  \epsilon_i \bI_{r(x_i) = 1} \right] &\\
    &=  \bE\left[ \frac{n_m^S}{m}\sup_{(h,r)\in \mathcal{H} \times \mathcal{R}} \frac{1}{n_m^S} \sum_{i=1 }^{n_m^S}  \epsilon_i \bI_{r(x_i) = 1} \right] \textrm{(by relabeling)} \\
    &\overset{(a)}{=}  \bE\left[ \bE_{\boldsymbol{\epsilon}} \left[\frac{n_m^S}{m}\sup_{(h,r)\in \mathcal{H} \times \mathcal{R}} \frac{1}{n_m^S} \sum_{i=1 }^{n_m^S}  \epsilon_i \bI_{r(x_i) = 1}|S_m \right]  \right]   \\
    &\overset{(b)}{=}  \bE\left[ \frac{n_m^S}{m}  \hat{\mathfrak{R}}_{S_m}(\mathcal{R})   \right]  \\
    & \overset{(c)}{=} \bP(n_m^S < \frac{n \bP(A)}{2})  \bE\left[ \frac{n_m^S}{m}  \hat{\mathfrak{R}}_{S_m}(\mathcal{R})|n_m^S < \frac{n \bP(A)}{2}   \right]    + \bP(n_m^S \geq \frac{n \bP(A)}{2})  \bE\left[ \frac{n_m^S}{m}  \hat{\mathfrak{R}}_{S_m}(\mathcal{R})|n_m^S \geq \frac{n \bP(A)}{2}   \right] \\
    & \overset{(d)}{\leq} \frac{\bP(M\neq Y)}{2} \exp\left(- \frac{n \bP(M \neq Y)}{8} \right)+ \mathfrak{R}_{n \bP(M \neq Y)/2}(\mathcal{R})
\end{flalign*}
In step $(a)$ we conditioned on the dataset $S_m$, in step $(b)$ we used the definition of the empirical Rademacher complexity $ \hat{\mathfrak{R}}_{S_m}(\mathcal{R})$  on $S_m$, step $(c)$ we introduce the event $A= \{ M \neq Y\}$, step $(d)$ follows from a Chernoff bound on $n_m^S$ and since the Rademacher complexity is bounded by $1$ and is non-increasing with respect to sample size.

We can now proceed with inequality \eqref{th1:exp_term}:
\begin{flalign*}
&\mathfrak{R}_n(\mathcal{L}_{\mathcal{H},\mathcal{R}})
\overset{(a)}{\leq} \frac{1}{2} \mathfrak{R}_n(\mathcal{H}) + \frac{1}{2} \mathfrak{R}_n(\mathcal{R}) + \frac{\bP(M\neq Y)}{2}  \exp\left(- \frac{n \bP(M \neq Y)}{8} \right) + \mathfrak{R}_{n \bP(M \neq Y)/2}(\mathcal{R}) & \nonumber \\
\end{flalign*} 
step $(a)$ follows as the Rademacher complexity of indicator functions based on a certain class is equal to half the Rademacher complexity of the class \cite{mohri2018foundations}.

The final step is to note by Hoeffding's inequality we have with probability at least $1- \delta/2$:
\begin{flalign*}
   & L^S(h^*,r^*)   \leq L(h^*,r^*) + \sqrt{\frac{\log{(\frac{2}{\delta})}}{2n}}&
\end{flalign*}

Now since $(\hat{h}^*,\hat{h}^*)$ are the empirical minimizers we have that $ L^S(\hat{h}^*,\hat{r}^*) \leq  L^S(h^*,r^*)$, collecting all the inequalities we obtain the following generalization bound with probability at least $1- \delta$:

\begin{flalign}
    L(\hat{h}^*,\hat{r}^*) &\leq  L(h^*,r^*) + \mathfrak{R}_n(\mathcal{H}) \nonumber + \mathfrak{R}_{n}(\mathcal{R}) + 2\sqrt{\frac{\log{(\frac{2}{\delta})}}{2n}} \nonumber &\\& + \frac{\bP(M\neq Y)}{2}  \exp\left(- \frac{n \bP(M \neq Y)}{8} \right) + \mathfrak{R}_{n \bP(M \neq Y)/2}(\mathcal{R}) \nonumber
\end{flalign}
\end{proof}

\end{document}